\title{The Implicit Bias of Heterogeneity towards Invariance: A Study of Multi-Environment Matrix Sensing}
\author{%
  Yang Xu\thanks{Equal Contribution.} \\
  School of Mathematical Sciences\\
  Peking University\\
  \texttt{xuyang1014@pku.edu.cn} \\
  \And
  Yihong Gu$^*$ \\
  Department of Operations Research and Financial Engineering \\
  Princeton University\\
  \texttt{yihongg@princeton.edu} \\
  \And
  Cong Fang\thanks{Corresponding author.} \\
   School of Intelligence Science and Technology\\
  Peking University\\
  \texttt{fangcong@pku.edu.cn} \\
}
\newcommand{\ab}{\mathbf{a}}
\newcommand{\mb}{\mathbf{m}}
\newcommand{\qb}{\mathbf{q}}
\newcommand{\ub}{\mathbf{u}}
\newcommand{\vbb}{\mathbf{v}}
\newcommand{\xb}{\mathbf{x}}
\newcommand{\yb}{\mathbf{y}}
\newcommand{\tr}{\mathrm{tr}}
\newcommand{\important}{\textcolor{red}}
\global\long\def\R{\mathbb{R}}%
\global\long\def\diag{\diagTex}%
\global\long\def\mvar#1{\mathbf{#1}}%
\global\long\def\defeq{\stackrel{\mathrm{{\scriptscriptstyle def}}}{=}}%
\global\long\def\diag{\mathrm{diag}}%
\global\long\def\E{\mathbb{E}}%
\global\long\def\ma{\mvar A}%
\global\long\def\mb{\mvar B}%
\global\long\def\mc{\mvar C}%
\global\long\def\md{\mvar D}%
 \global\long\def\mE{\mvar E}%
\global\long\def\mh{\mvar H}%
\newcommand{\mIdentity}{\mvar I}%
\global\long\def\mm{\mvar M}%
\global\long\def\mq{\mvar Q}%
\global\long\def\mr{\mvar R}%
\global\long\def\mU{\mvar U}%
 \global\long\def\mV{\mvar V}
\global\long\def\mv{\mvar V}%
\global\long\def\mx{\mvar X}%
\global\long\def\my{\mvar Y}%
\global\long\def\mz{\mvar Z}%
\newcommand{\mSigma}{\mvar{\Sigma}}%
\global\long\def\idmap{\text{Id}}%
\newcommand{\normFull}[1]{\left\Vert#1\right\Vert}%
\global\long\def\tr{\mathrm{tr}}%
\newcommand{\Utrue}{{\mU^\star}}
\newcommand{\vtrue}{{\mv^\star}}
\newcommand{\atrue}{{\ma^\star}}
\newcommand{\mUt}{{\mU_t}}
\newcommand{\mEt}{{\mE_t}}
\newcommand{\mrt}{{\mr_t}}
\newcommand{\mmt}{{\mm_t}}
\newcommand{\mht}{{\mh_t}}
\newcommand{\mqt}{{\mq_t}}
\newcommand{\mzt}{{\mz_t}}
\newcommand{\mvt}{{\mv_t}}
\newcommand{\mUtt}{{\mU_t^\top}}
\newcommand{\mEtt}{{\mE_t^\top}}
\newcommand{\mrtt}{{\mr_t^\top}}
\newcommand{\mqtt}{{\mq_t^\top}}
\newcommand{\deltastar}{{{\delta^\star}}}
\newcommand{\idu}{{\operatorname{Id}_{\Utrue}}}
\newcommand{\idv}{{\operatorname{Id}_{\vtrue}}}
\newcommand{\idres}{{\operatorname{Id}_{\operatorname{res}}}}
\newcommand{\idst}{{\operatorname{Id}_{S_t}}}
\newcommand{\mxiet}{{\mx_i^{(e_t)}}}
\newcommand{\emap}[1]{{\mathsf{E}_t\circ\left({#1}\right)}}
\newcommand{\emapt}{\mathsf{E}_t}
\newcommand{\emapet}[1]{{\mathsf{E}_{e_t}\circ\left({#1}\right)}}
\newcommand{\emape}[1]{{\mathsf{E}_{e}\circ\left({#1}\right)}}
\newcommand{\emappure}[1]{{\mathsf{E}\circ\left({#1}\right)}}
\newcommand{\envset}{\mathcal{E}}
\newcommand{\cR}{{\mathrm{R}}}
\newcommand{\ocR}{\overline{\mathrm{R}}}
\newcommand{\ucR}{\underline{\mathrm{R}}}
\newcommand{\cD}{{{p^{-1.5}r_2^{1.5}}}}
\newcommand{\cL}{{\mathrm{L}}}
\newcommand{\cLt}{{\mathrm{L}_t}}
\newcommand{\smalltitle}[1]{\noindent{\bf #1}}
\newcommand{\sigmalt}{{\sigma_1^{(t)}}}
\newcommand{\sigmart}{{\sigma_{r_1}^{(t)}}}
\newcommand{\sigmaltp}{{\sigma_1^{(t+1)}}}
\newcommand{\sigmartp}{{\sigma_{r_1}^{(t+1)}}}
\newcommand{\var}{\operatorname{Var}}
\newcommand{\col}{\operatorname{col}}
\newcommand{\poly}{\operatorname{poly}}
\renewcommand{\P}{\mathbb{P}}
\begin{document}

\newtheorem{proposition}{Proposition}
\newtheorem{lemma}{Lemma}
\newtheorem{corollary}{Corollary}
\newtheorem{theorem}{Theorem}
\newtheorem{definition}{Definition}
\newtheorem{assumption}{Assumption}
\newtheorem{example}{Example}
\newtheorem{remark}{Remark}
\maketitle

\begin{abstract}%

 Models are expected to engage in invariance learning, which involves distinguishing the core relations that remain consistent across varying environments to ensure the predictions are safe, robust and fair. {While existing works consider specific algorithms to realize invariance learning, we show that model has the potential to learn invariance through standard training procedures. 
In other words, this paper studies the implicit bias of Stochastic Gradient Descent (SGD) over heterogeneous data and shows that the implicit bias drives the model learning towards an invariant solution.
 We call the phenomenon the \emph{{implicit invariance learning}}}.
Specifically, we theoretically investigate the multi-environment low-rank matrix sensing problem where in each environment, the signal comprises (i) a lower-rank invariant part shared across all environments; and (ii) a significantly varying environment-dependent spurious component. The key insight is, through simply employing the large step size large-batch SGD sequentially in each environment without any explicit regularization, the oscillation caused by heterogeneity can provably prevent model learning spurious signals.  The model reaches the invariant solution after certain iterations. In contrast, model learned using pooled SGD over all data would simultaneously learn both the invariant and spurious signals.
Overall, we unveil another implicit bias that is a result of the symbiosis between the heterogeneity of data and modern algorithms, which is, to the best of our knowledge, first in the literature.

\end{abstract}

\section{Introduction}

{In real applications, the machine learning models are often heavily over-parameterized, which means that the number of parameters exceeds the number of data. For over-parameterized models, the generalization in the general case becomes ill-posed. One key insight
to 
 generalize well is the implicit preference of the optimization algorithm which plays the role
of regularization/bias \cite{soudry2018implicit,ji2019gradient}.  Nowadays,  there
are several kinds of implicit bias discovered from optimization algorithms under different models and settings. One common feature of the bias is the simplicity which concludes that  (stochastic) gradient-based algorithms perform
the incremental learning with the model complexity gradually increasing. Therefore, benign generalization is possible even
when the number of training data is limited.  For example, \citet{li2018algorithmic, gunasekar2017implicit} show that unregularized gradient
descent can find the low-rank solution efficiently for matrix sensing models.  \citet{kalimeris2019sgd,gissin2019implicit,jiang2023algorithmic}, and \citet{jin2023understanding} further show that (Stochastic) Gradient Descent ((S)GD) learn models from simple ones to complex ones. Most of the existing works study the implicit bias of algorithms over a single distributional environment data.

However,  data in modern practice are often collected from multiple sources, thus exhibiting certain heterogeneity. For example, medical data may come from multiple hospitals, and training sets for large language models consist of numerous corpus from the Internet~\citep{achiam2023gpt}. So \emph{what is the impact of implicit bias for standard training algorithms over heterogeneous data?}

\begin{figure}[t]
\begin{minipage}[h]{0.48\linewidth}
    \begin{algorithm}[H]
    
\caption{\texttt{PooledSGD}}
\label{Alg: PooledGD}
    \begin{algorithmic}
    \STATE \textbf{Parameter:} $\theta$
    \FOR{$t=1,\ldots$}
    \STATE Batch $B$ from entire dataset
    \STATE Update $\theta\gets\theta-\eta\nabla\mathcal L(\theta;B)$
    \ENDFOR
\end{algorithmic}
\end{algorithm}
\end{minipage}
\hfill
\begin{minipage}[h]{0.48\linewidth}
    \begin{algorithm}[H]
\caption{\texttt{HeteroSGD}}
\label{Alg: heteroSGD-simple}
    \begin{algorithmic}
    \STATE \textbf{Parameter:} $\theta$
    \FOR{$t=1,\ldots$}
    \STATE Batch $B$ from \important{environment 
 $e_t\sim D$}
    \STATE Update $\theta\gets\theta-\eta\nabla\mathcal L(\theta;B)$
    \ENDFOR
\end{algorithmic}
\end{algorithm}
\end{minipage}\\
\begin{minipage}[h]{0.48\linewidth}
    \centering
    \includegraphics[width=1.05\linewidth]{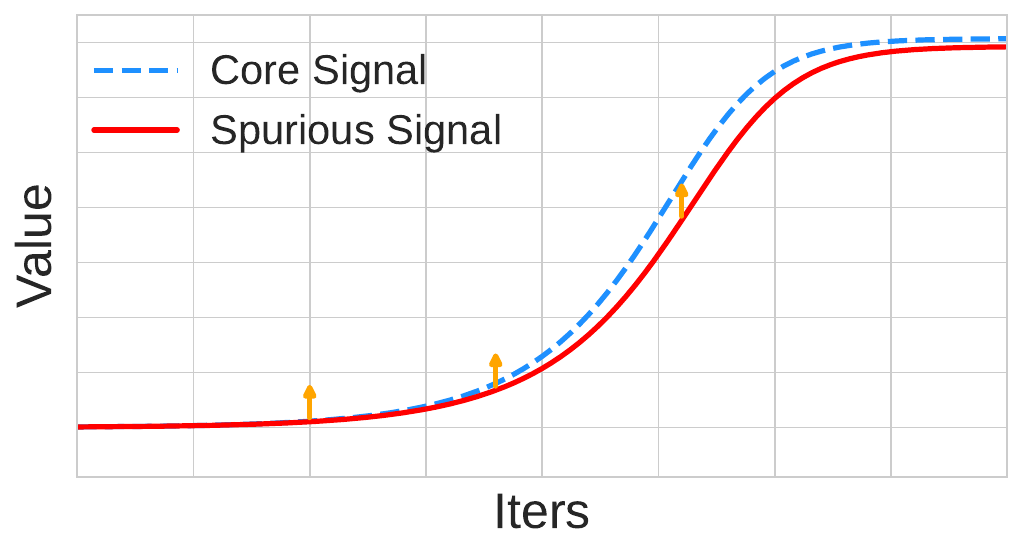}
    
\end{minipage}
\hfill
\begin{minipage}[h]{0.48\linewidth}
    \centering
    \includegraphics[width=1.05\linewidth]{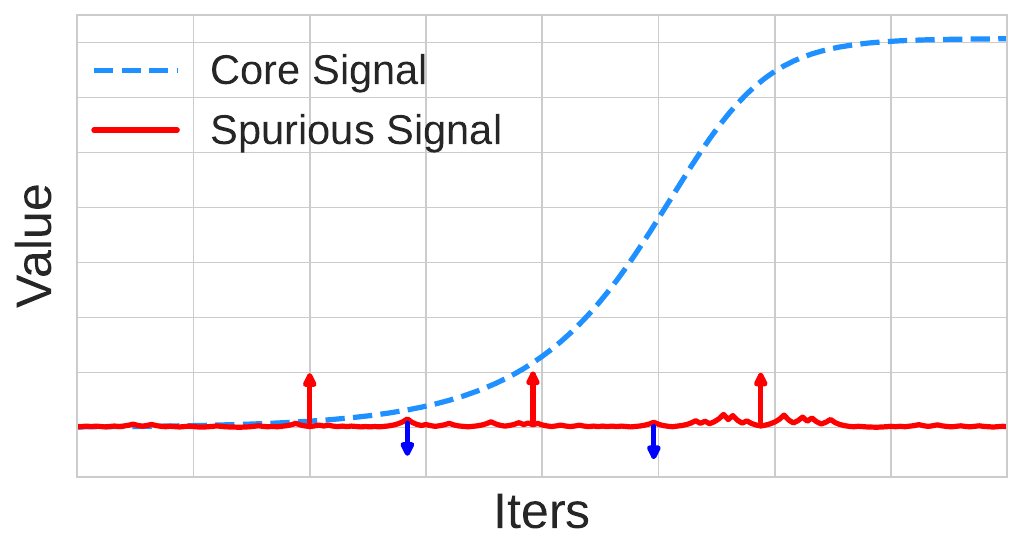}
    
\end{minipage}
\caption{An illustration comparing training from aggregated data versus from heterogeneous data. The left example resembles the case where the model is trained on complete datasets, resulting in a stable spurious signal that the model tends to fit. The right example simulates a two-environment case where the spurious signal changes at each step. This oscillation creates a contraction effect, preventing the model from fitting the spurious signal. }
    \label{fig: illustration}
\end{figure}

This paper initializes the study and shows that  implicit bias of SGD on an over-parameterized model using multi-environment heterogeneous
data and shows that the implicit bias can not only save the number of training data but also, more importantly, drive the model learning the invariant relation across diverse environments. 

Learning the invariant relation that remains consistent across varying environments~\citep{peters2016causal}  has garnered significant attention in recent years. Though the association-based standard machine learning pipelines can achieve a good performance with identical data distributions, a higher requirement is to make predictions robustly generalize over diverse 
downstream environments. {Learning invariance produces reliable, fair, robust predictions against strong structural mechanism perturbation. More importantly, it opens the door to pursue causality blind to any prior knowledge and can unveil direct causes when the heterogeneity among environments is sufficient \citep{gu2024causality, peters2016causal}.}  While existing works consider specific
algorithms to realize invariance learning,  this work shows that implicit bias of algorithms over heterogeneous data has the potential to automatically learn the invaraince. We call the phenomenon \emph{the implicit invariance learning}, partially explains why active invariance learning may not be necessary in practice \citep{nastl2024predictors}.  Our key insight is: 

\begin{quote}
\it 
The heterogeneity of the data, and the large step size adopted in the optimization algorithm jointly provide strong multiplicative oscillations in the spurious signal space, which prevents the model from moving in the direction of unstable and spurious solutions, thus resulting in an implicit bias to the invariant solution.
\end{quote}

We illustrate it rigorously through a simple, canonical but insightful model -- multi-environment matrix sensing, where in each environment the signal consists of two parts: an invariant low-rank matrix $\atrue\in\mathbb{R}^{d\times d}$ and an environment-varying spurious low-rank matrix $\ma^{(e)}\in\mathbb{R}^{d\times d}$ where environment $e\in \mathcal{E}$, the set of environments. For each environment $e\in \mathcal{E}$, the joint distribution of $(\mx^{(e)}, y^{(e)})$ satisfies $y^{(e)} = \langle \mx^{(e)}, \ma^\star \rangle + \langle \mx^{(e)}, \ma^{(e)} \rangle$ with matrix inner product $\langle \ma, \mb \rangle = \mathrm{Trace}(\mb^\top \ma)$. Here $\mx^{(e)}\in\mathbb{R}^{d\times d}$ is a random symmetric linear measurement and $y^{(e)}\in\mathbb{R}$ is the response. We consider the case that association does not coincide invariance (or causality), where averaging over all the environments, the best prediction of $y$ given $\mx$ is
\begin{align*}
    f^\star(\mx) = \underbrace{\langle \mx, \ma^\star \rangle}_{invariant~part} + \underbrace{\langle \mx,  \mathbb{E}_e[\ma^{(e)}] \rangle}_{spurious~part} \qquad \text{with} \qquad  \mathbb{E}_e[\ma^{(e)}] \neq 0.
\end{align*}

In this case, it is not surprising that given enough data, the standard empirical risk minimizer algorithm, for example, running SGD on pooled data, will return a solution that converges to $f^\star$, which diverges from the invariant solution. In this paper, we will show that surprisingly, if each batch is sampled from data in one environment rather than data in all the environments, the heterogeneity in the environments together with the implicit regularization effects in the SGD algorithm can drive it towards the invariant solution. This can be stated informally as follows.

\begin{theorem}[Main result, informal]
Under a sufficient heterogeneity condition and some regularity conditions in matrix sensing, if we adopt an over-parameterized model and runs stochastic gradient descent where batches are sampled from one environment, i.e., $\mathtt{HeteroSGD}$ (Algorithm~\ref{Alg: HeteroSGD}), then 
\begin{align*}
    \|\hat{\theta}_{\mathtt{HeteroSGD}} - \ma^\star\|_F = o_{\mathbb{P}}(1). 
\end{align*} Instead, the standard approach, i.e., $\mathtt{PooledSGD}$, will return solution $\hat{\theta}_{\mathtt{PooledSGD}}$ satisfying
\begin{align*}
    \|\hat{\theta}_{\mathtt{PooledSGD}} - \ma^\star - \ma^s\|_F = o_{\mathbb{P}}(1) \qquad \text{thus} \qquad \|\hat{\theta}_{\mathtt{PooledSGD}} - \ma^\star\|_F = \Omega_{\mathbb{P}}(1).
\end{align*}
\end{theorem}

An illustration of our result is shown in Figure~\ref{fig: illustration}. Our result demonstrates that implicit bias of commonly used algorithms over heterogeneous data has the potential to drive the model to learn the invariant relation.
Such a result thereby provides an explanation
for why models may attain some robust and even causal prediction after SGD training.

We emphasize that the previous implicit bias studies
 are restricted to the same data distribution generalization, under which the population-level minimizer $f^\star$ minimizing the loss with infinite data is the target in pursuit. However,  both the population-level minimizer and those ``good'' solutions under previous studies diverge from an invariant solution in general and are no longer benign in this context, this is termed as ``curse of endogeneity'' \citep{fan2014endogeneity, fan2023environment}. 
}

\noindent\textbf{Notations.} 
We use the conventional notations $ O(\cdot),o(\cdot),\Omega(\cdot)$ to ignore the absolute constants, $ \tilde O(\cdot),\tilde o(\cdot),\tilde\Omega(\cdot)$
to further ignore the polynomial logarithmic factors. Similarly, $a\lesssim b$ means that there exists an absolute constant $C > 0$ such that $a \lesssim Cb$. We also denote it as $a\ll b$, $b \gg a$ if $a=o(b)$.
Unless otherwise specified, we use lowercase bold letters such as $ \vbb$ to represent vectors, and use $ \|\vbb\|$ to denote its Euclidean norm. We use uppercase bold letters such as $ \mx$ to represent matrices and use $ \|\mx\|,\|\mx\|_F,\|\mx\|_*$ to denote its operator norm, Frobenius norm and nuclear norm, respectively. We use $\kappa(\mx)$ to denote the condition number, which is $\sigma_{\max}(\mx)/ \sigma_{\min}(\mx)$. We define $Z=o_{\mathbb{P}}(1)$ if the random variable $Z$ satisfies $Z\overset{P}{\to} 0$.

\section{Related Works}\label{sec:related works}

\noindent\textbf{Implicit Regularization.}
It is believed that implicit bias is a key factor in why over-parameterized models can generalize well. Through the analysis of certain settings, existing results suggest that GD/SGD prefers solutions with specific properties \citep{soudry2018implicit,gunasekar2018implicit,nacson2019convergence,lyu2022understanding,ji2020directional}, or specific local landscapes \citep{blanc2020implicit,damian2021label,li2021happens,lyu2022understanding}.
For the matrix sensing problem, several works \citep{gunasekar2017implicit, li2018algorithmic, kalimeris2019sgd,gissin2019implicit,stoger2021small,zhuo2021computational,jiang2023algorithmic,jin2023understanding} analyze the (S)GD dynamics to show how (S)GD recovers the ground truth low-rank matrix.  Recently, the effects of large step size have aroused much attention, particularly the edge-of-stability phenomenon \citep{cohen2020gradient}.
\citet{lu2023benign} investigates the phenomenon ``benign oscillation'', which suggests that SGD with a large learning rate can effectively help neural networks learn weak features thereby benefiting generalization. Several  works~\citep{haochen2021shape, vivien2022label, even2024s} show that label noise with large step size has a sparcifying effect for sparse linear regression. This paper instead studies multi-environment scenarios and fills in the understanding of the impact of randomness on matrix sensing problems. 

\noindent\textbf{Federated Learning.} Federated learning \citep{mcmahan2017communication, kairouz2021advances} is a machine learning paradigm where data is stored separately and locally on multiple clients and not exchanged, and clients collaboratively train a model. Extensive work has focused on designing effective decentralized algorithms (e.g. \citep{mcmahan2017communication, karimireddy2020scaffold}) while preserving privacy (e.g. \citep{dwork2006our, chang2019upload}). The importance of fairness in federated learning has also garnered attention \citep{li2021ditto,lin2022personalized}. One important issue in federated learning is to handle the heterogeneity across the data and hardware. Our work shows that by training with certain stochastic gradient descent methods, the system can automatically remove the bias from the individual environment and thus learn the invariant features. Our work provides insights into discovering the implicit regularization effects of standard decentralized algorithms.

\noindent\textbf{Invariance Learning.} This research line initiates from causal inference literature~\citep{peters2016causal,meinshausen2016methods,ghassami2017learning} since invariant covariates correspond to \emph{direct cause}. From theoretic aspects, \citet{fan2023environment} proposes the EILLS method that provably achieves invariant variable selections under mild conditions for linear models. Invariance learning has raised much attention in machine learning since \citet{arjovsky2019invariant} proposes the structural-agnostic framework IRM. Subsequent works analyze its limitations~\citep{rosenfeld2021risks,kamath2021does} or propose variant methods~\citep{zhang2020invariant, lu2021nonlinear,lin2022bayesian,lin2022zin,idrissi2022simple,zhang2023understanding} as regularization and reweighting. About the failure of classical methods, \citet{wald2023malign} construct a hard problem and show that interpolation-based methods fail to learn invariance.

To the best of our knowledge, all the existing works consider specific algorithms to realize invariance learning or constructing hard cases that classical methods fail. In contrast, this paper studies commonly used training
algorithms and aims to understand how the
algorithms can go beyond learning associations to achieve invariance learning in certain scenarios.

\section{Main Results}

\subsection{Problem Formulation}

\smalltitle{Data Generating Process.} Suppose we observe data from a set of environments $\envset$ sequentially. Let $D$ be some distribution on $\envset$. At each time $t = 0,1,\ldots,$ we receive $m$ samples $\{(\mxiet, y_i^{(e_t)})\}_{i=1}^m \subset \mathbb{R}^{d\times d} \times \mathbb{R}$ from environment $e_t \sim D$ satisfying
\begin{align}
\label{eq:model}
    y_i^{(e_t)} = \langle \mxiet , \atrue\rangle +\langle \mxiet, \ma^{(e_t)}\rangle  ,\quad i=1,\ldots m,
\end{align}
where $\mxiet$ is symmetric, $\atrue$ is an unknown rank $r_1$ $d\times d$ symmetric and positive definite matrix that represents the \emph{true signal} invariant across different environments, $\ma^{(e_t)}$ is an unknown $d\times d$ symmetric matrix with rank at most $r_2$ that represents the \emph{spurious signal} that may vary. Here $\langle \ma, \mb \rangle = \mathrm{trace}(\mb^\top \ma)$. We aim to estimate the $\atrue$ using data from heterogeneous environments. 

\smalltitle{Algorithm.} We consider running batch gradient descent on an over-parametrization of the model, where at each step $t$ one gradient update is performed using the data from environment $e_t$. To be specific, we parameterize our fitted model as $y = \langle \ma, \mU \mU^\top \rangle$ with a $d\times d$ matrix $\mU$ for the sake of simplicity. 
One can generally use the parameterization $\mx = \mU \mU^\top - \mV \mV^\top $ by the same technique of \citet{haochen2021shape,fan2023understanding}. We  initialize $\mU$ as $\mU_0 = \alpha \mIdentity_d$ for some small enough $\alpha>0$. At timestep $t$, we run a one-step gradient descent on the standard least squares loss using $\{(\mxiet, y_i^{(e_t)})\}_{i=1}^m$:
\begin{align}
    L_t(\mU) = \frac{1}{2m} \sum_{i=1}^m\left(y_i^{(e_t)}-\langle \mxiet,\mU\mU^\top \rangle\right)^2.
\end{align} That is, $\mU_0 = \alpha \mIdentity_d$ and 
\begin{align}
\label{eq:gradient-update}
    \mU_{t+1} &= \mUt - \eta \nabla L_t(\mUt) = \left(\mIdentity_d - \eta \frac{1}{m}\sum_{i=1}^m (\langle \mxiet,\mUt\mUt^\top \rangle - y_i^{(e_t)}) \mx_i^{(e_t)}\right) \mU_t
\end{align} for $t=0,\ldots, T-1$. See a complete presentation in Algorithm \ref{Alg: HeteroSGD}.

The algorithm adopts a constant level step size $\eta$ and $\log(\alpha^{-1})$ level number of iterations $T$, i.e. $\eta = \Theta(1)$ and $T = \Theta(\log(\alpha^{-1}))$, and use $\mU_T \mU_T^\top$ as our estimate of $\atrue$.

\begin{algorithm}[t]
\caption{\texttt{HeteroSGD}}
\label{Alg: HeteroSGD}
    \begin{algorithmic}
    \STATE Set $\mU_0=\alpha\mIdentity_d$, where $\alpha$ is a small positive constant to be determined later.
    \STATE Set large step size $\eta=\Theta(1)$.
    \FOR{$t=1,\ldots,T-1$}
    \STATE Receive $m$ samples $ \{(\mxiet,y_i^{(e_t)})\}_{i=1}^m$ from current environment $e_t$.
    \STATE Gradient Descent $ \mU_{t+1}=\mUt- \frac{\eta}{m}\left[\sum_{i=1}^m \left(\langle \mxiet,\mU_t\mU_t^\top \rangle - y_i^{(e_t)}\right) \mx_i^{(e_t)} \right]\mUt$.
    \ENDFOR
    \STATE\textbf{Output:} $ \mU_T$.
\end{algorithmic}
\end{algorithm}

\smalltitle{Standard Method: Pooled Stochastic Gradient Descent.} As a comparison, we consider the standard approach where data in each batch come from different environments and the weights follow from $D$. To be specific, the pooled stochastic gradient descent over all environments adopted the update rule
\begin{equation}\label{eq:whole-problem}
    \mU\gets \mU-\eta\nabla\bar{\mathcal L}(\mU),~\text{where}~\bar{\mathcal L}(\mU)=\frac{1}{2m}\sum_{i=1}^m\left[\left(y_i^{(e_i)}-\langle \mx_i^{(e_i)},\mU\mU^\top \rangle\right)^2\right],\quad{e_i\sim D}.
\end{equation}

\subsection{Assumptions}

We first impose some standard assumptions used in matrix sensing. Since we are dealing with learning true invariant signals from heterogeneous environments, several conditions on the structure of the invariant signal $\atrue$ and the spurious signals $\ma^{(e)}$ should be imposed. 

\begin{assumption}[Invariant and Spurious Space]
\label{cond:invariant-and-spurious-space}
There exists $\mU^\star \in \mathbb{R}^{d\times r_1}$ and $\mv^\star \in \mathbb{R}^{d\times r_2}$ both with orthogonal columns, i.e., $(\mU^\star)^\top \mU^\star = \mIdentity_{r_1}$ and $(\mv^\star)^\top \mv^\star = \mIdentity_{r_2}$ such that
\begin{itemize}
    \item[(a).] $C \log^4(d) \le r_1\wedge r_2$ and $d\ge (r_1+ r_2)^C$ for some large absolute constant $C$.
    \item[(b).]  $\atrue = \mU^\star (\mU^\star)^\top$.
    \item[(c).]  $\ma^{(e)} = \mv^\star \mSigma^{(e)} (\mv^\star)^\top$ with some symmetric $r_2\times r_2$ matrix $\mSigma^{(e)}$ for any $e\in \envset$.
    \item[(d).]  $\|(\mU^\star)^\top \mv^\star\| \le \epsilon_1$ for some small quantity $\epsilon_1 \ge 0$.
\end{itemize}
\end{assumption}

In  Condition (b), we assume that  the singular values of the true signal $\atrue$ are the same to simplify the presentation  since our main focus is to reduce the spurious signals. It holds for the basic case when there is only one invariant signal, i.e. $r_1=1$. The analysis for varying singular values using the technique of \citet{li2018algorithmic} is deferred to Section~\ref{sec:kappa>1} in Appendix. Other assumptions are  usual and  easy to achieve.  Condition (a) requires that the total dimension of invariant signals and spurious signals are small relative to the ambient dimension $d$. Condition (c) resembles the RIP condition~\citep{5730578} in sparse feature selection \citep{candes2005decoding}. Condition (d) says the overlap of invariant subspace and spurious subspace should be small. Such a condition can be easily satisfied for random projections in high dimensions where $r_1 + r_2 \ll d$, under which we have $\epsilon_1 = \Theta(\sqrt{(r_1+r_2)/d})$, see Proposition \ref{proposition:gaussian-low-angle} below.

\begin{proposition}
    \label{proposition:gaussian-low-angle}
    Let $ \mm_1\in\R^{d\times r_1}$ and $ \mm_2\in\R^{d\times r_2}$ be two mutually independent random matrix with i.i.d. $ N(0,1)$ entries. Denote their QR decompositions as $\mm_1=\mU^\star_1\mr_1$ and $\mm_2=\mU^\star_2\mr_2$, respectively. Then there exists a universal constant $C_1>0$ such that
    \begin{equation}
        \Big\|(\mU^\star_1)^\top\mU^\star_2\Big\|\le t\sqrt{\frac{r_1+r_2}{{d}}},
    \end{equation}
    with probability at least $ 1-4\exp\left(-C_1^{-1}d\right)-2\exp\left(-C_1^{-1}(r_1+r_2)t^2\right)$.
\end{proposition}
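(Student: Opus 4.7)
The plan is to exploit the rotational invariance of the Gaussian distribution to reduce the problem to controlling a single Gaussian block, then apply standard non-asymptotic concentration bounds for the extreme singular values of Gaussian matrices.

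\emph{Step 1 (Reduction by rotational invariance).} First I would condition on $\mm_1$, hence on $\mU_1^*$, and extend $\mU_1^*$ to an orthogonal matrix $\mO = [\mU_1^*, \mU_1^{*,\perp}] \in \R^{d\times d}$. Because $\mm_2$ has i.i.d.\ $N(0,1)$ entries and is independent of $\mm_1$, the matrix $\mO^\top \mm_2$ has the same distribution as $\mm_2$. Writing the QR decomposition $\mO^\top \mm_2 = (\mO^\top \mU_2^*)\mr_2$, I may replace $\mU_2^*$ with $\mO^\top \mU_2^*$ in distribution. Under this replacement $(\mU_1^*)^\top \mU_2^*$ becomes the top $r_1 \times r_2$ block of $\mU_2^*$. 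So without loss of generality I take $\mU_1^* = [\mIdentity_{r_1};\, \mathbf{0}]$ and study the top block of $\mU_2^*$.

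\emph{Step 2 (Algebraic decomposition).} Partition $\mm_2 = \begin{pmatrix}\mn\\ \mn'\end{pmatrix}$ where $\mn \in \R^{r_1 \times r_2}$ is the top block and $\mn' \in \R^{(d-r_1)\times r_2}$ the bottom block. Since $\mU_2^* = \mm_2 \mr_2^{-1}$, the quantity of interest equals $\mn\, \mr_2^{-1}$, so
\[
\|(\mU_1^*)^\top \mU_2^*\| \;=\; \bigl\|\mn \mr_2^{-1}\bigr\| \;\le\; \|\mn\|\cdot \|\mr_2^{-1}\| \;=\; \frac{\|\mn\|}{\sigma_{\min}(\mm_2)},
\]
using $\sigma_{\min}(\mr_2) = \sigma_{\min}(\mm_2)$ from $\mr_2^\top \mr_2 = \mm_2^\top \mm_2$. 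Note $\mn$ and $\mm_2$ are not independent, but the two bounds I need are upper bounds and can be combined via a union bound.

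\emph{Step 3 (Gaussian concentration).} Apply the standard Gaussian matrix deviation inequalities (Davidson--Szarek / Gordon): for the $r_1 \times r_2$ Gaussian matrix $\mn$,
\[
\|\mn\| \;\le\; \sqrt{r_1} + \sqrt{r_2} + s \quad\text{with probability}\ \ge 1 - 2e^{-s^2/2},
\]
and for the $d\times r_2$ Gaussian matrix $\mm_2$ (with $d \gg r_2$),
\[
\sigma_{\min}(\mm_2) \;\ge\; \sqrt{d} - \sqrt{r_2} - s' \quad\text{with probability}\ \ge 1 - 2e^{-s'^2/2}.
\]
I would choose $s' = \sqrt{d}/C_0$ for a suitable absolute constant $C_0$, so that $\sigma_{\min}(\mm_2) \ge c_0 \sqrt{d}$ on an event of probability at least $1 - 2e^{-d/(2C_0^2)}$, and choose $s = t\sqrt{r_1+r_2}$, which together with $\sqrt{r_1} + \sqrt{r_2}\le \sqrt{2(r_1+r_2)} \lesssim t\sqrt{r_1+r_2}$ (for $t$ above an absolute constant; smaller $t$ is handled by adjusting $C_1$) yields $\|\mn\| \lesssim t\sqrt{r_1+r_2}$ on an event of probability at least $1 - 2e^{-(r_1+r_2)t^2/2}$.

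\emph{Step 4 (Combine).} A union bound over the two good events gives
\[
\|(\mU_1^*)^\top \mU_2^*\| \;\le\; \frac{t\sqrt{r_1+r_2}}{c_0\sqrt{d}} \;\lesssim\; t\sqrt{\tfrac{r_1+r_2}{d}},
\]
with probability at least $1 - 4e^{-d/C_1} - 2e^{-(r_1+r_2)t^2/C_1}$ after absorbing constants into $C_1$ (the factor $4$ can be obtained by using the two-sided Davidson--Szarek deviation for $\mm_2$ in Step~3). I do not anticipate any real obstacle: the argument is entirely standard once the rotational-invariance reduction is in place. The main thing to be careful about is bookkeeping in Step~3 so that the final constants match the stated tail, in particular routing the $\exp(-d/C_1)$ term to the $\sigma_{\min}(\mm_2)$ bound and the $\exp(-(r_1+r_2)t^2/C_1)$ term to the $\|\mn\|$ bound.
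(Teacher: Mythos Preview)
Your proposal is correct and, in fact, somewhat cleaner than the paper's own argument. The paper does \emph{not} use rotational invariance; instead it bounds $\|\mm_1^\top \mm_2\|$ directly via an $\epsilon$-net argument on $\mathcal{S}^{r_1-1}\times\mathcal{S}^{r_2-1}$ together with a sub-exponential concentration inequality for the inner product $\langle \mm_1\xb,\mm_2\yb\rangle$ of two independent $N(0,\mIdentity_d)$ vectors, and then writes $(\mU_1^*)^\top\mU_2^*=\mr_1^{-\top}\mm_1^\top\mm_2\mr_2^{-1}$, bounding both $\|\mr_1^{-1}\|$ and $\|\mr_2^{-1}\|$ by the Gaussian smallest-singular-value inequality (this is where the two $2e^{-c_1 d}$ terms, totalling the ``$4$'', come from). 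Your rotational-invariance reduction eliminates $\mm_1$ entirely and replaces the net/product-concentration step by a single Davidson--Szarek bound on the small $r_1\times r_2$ Gaussian block $\mn$; consequently you only need one smallest-singular-value bound (for $\mm_2$), which actually yields the slightly better prefactor $2$ rather than $4$ in front of the $e^{-d/C_1}$ term. The trade-off is that the paper's route is symmetric in $\mm_1,\mm_2$ and self-contained once the product-of-Gaussians lemma is in hand, whereas yours leans on the distributional invariance observation; both are standard and there is no gap in your plan.
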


\begin{assumption}[Regularity on Spurious 
Signal $\mSigma^{(e)}$]
\label{cond:regularity-on-sigma-e}
There exists some constant-level quantity $M_1, M_2$ such that
\begin{align}
\label{eq:bounded-and-heterogeneity}
    \sup_{e\in \envset,i\in [r_2]} |\mSigma_{ii}^{(e)}| < M_1 \qquad \text{and} \qquad  \min_{i\in [r_2]} \frac{\var_{e\sim  D } [\mSigma^{(e)}_{ii}]}{1+\left|\E_{e\sim  D } [\mSigma^{(e)}_{ii}]\right|}>M_2,
\end{align} where $M_1<C_0 M_2 $ for some universal constant $C_0>0$. 
Moreover, $\mSigma^{(e)}$ is strongly diagonal dominant for any $e\in \envset$, i.e., 
\begin{align}
\label{eq:digonal-dominant}
    \sup_{e\in \envset} \max_{i\in [r_2]} r_2^{2} \sum_{j\neq i} |\mSigma^{(e)}_{ij}| \le \frac{c_o}{M_2^{1.5}}
\end{align} where $c_o > 0$ is some universal constant.
\end{assumption}

The first inequality in \eqref{eq:bounded-and-heterogeneity} requires that all the spurious signals have a uniform bound, under which a fixed step size can be adopted. The second inequality in \eqref{eq:bounded-and-heterogeneity} requires that the heterogeneity of the spurious signals be large compared to the bias of the spurious signals. For example, some variables receive different interventions in different environments. The condition \eqref{eq:digonal-dominant} is imposed to prevent the explosion of spurious signals during training. When the diagonal and off-diagonal elements are of the same order, empirical studies and theoretical analyses in some toy examples illustrate the failure of recovering $\atrue$. Condition (d) in Assumption \ref{cond:invariant-and-spurious-space} and   \eqref{eq:bounded-and-heterogeneity}  resemble
the RIP condition in sparse feature selection.  Example~\ref{example:NN} can fulfill all our conditions. 

Finally, we impose assumptions on measurements. Recall the RIP condition~\citep{5730578}:
\begin{definition}[RIP for Matrices \citep{5730578}]\label{def:rip}
    A set of linear measurements $\mx_1,\ldots, \mx_m$ satisfy the restricted isometry property (RIP) with parameter $ (s,\delta)$ if the following inequality
    \begin{equation}
        (1-\delta)\|\mm\|_F^2\le \frac{1}{m}\sum_{i=1}^m\langle \mx_i,\mm \rangle^2\le (1+\delta)\|\mm\|_F^2
    \end{equation}
    holds for any $ d\times d$ matrix $ \mm$ with rank at most $s$.
\end{definition}

\begin{assumption}[RIP Condition for Linear Measurements]
\label{cond:rip}
 $\mx_{1}^{(e_t)}, \ldots, \mx_m^{(e_t)}$ satisfies the RIP with parameter $s=4(r_1+r_2)$ and $\delta \lesssim \frac{1}{(M_2\log(d))^{1.5} r_2^{2.5} \sqrt{r_1+r_2}}$ for all $e\in\envset$.
\end{assumption}
It is known from~\citet{5730578} that for symmetric Gaussian measurements, sample complexity $m=\tilde\Omega(ds\delta^{-2},M_2)= d \poly(r,\log(d))\ll d^2$ suffices. 

\subsection{Convergence Analysis}

The main conceptual challenge in the problem is that any $ \mU$ with $\mU\mU^\top=\atrue$ is no longer a local minimum since $\mathbb{E}_{e\sim D}[\mSigma^{(e)}]$ is non-zero and could even be comparable to $\atrue$. This further implies that running stochastic gradient descent on pooled data will fail to recover $\atrue$. However, our main result below shows that simply adopting online gradient descent with ``heterogeneous batches'' can successfully recover the true, invariant signal from heterogeneous environments.

\begin{theorem}[Main Theorem]\label{theorem:main}
    Under Assumption~\ref{cond:invariant-and-spurious-space}-\ref{cond:rip}, suppose further that $\epsilon_1<\delta/2$. Define $\delta^\star:=(c_v M_2\log(d))^{1.5}\delta r_2^2 \sqrt{r_1+r_2}$ for some absolute constant $c_v$. If we choose $ \eta \in (24M_2^{-1}, \frac{1}{64}M_1^{-1})$ and $ \alpha\in (1/d^4,1/d^2)$, then running Algorithm~\ref{Alg: HeteroSGD} in $T=\Theta(\log(\alpha^{-1})/\eta)$ steps, the algorithm outputs $\mU_T$ that satisfies
    \begin{equation}
    \label{eq:rate-main}
        \|\mU_{T}\mU_{T}^\top-\atrue\|_F \le C\max\{\deltastar^2\sqrt{r_1} M_1^2,  \deltastar M_1\}\log^2 d
    \end{equation} for some absolute constant $C$, with probability over $0.99$. 
\end{theorem}

Consider the case where $r_1, r_2, M_1$ are sufficiently large but is regarded at constant level, and the batch size $m$, ambient dimension $d$ satisfy $d \log^2(d)\ll m$. It follows from the RIP result \citep{5730578} with $\delta = \Theta(\sqrt{d/m})$ and Theorem \ref{theorem:main} that one can adopt $\alpha = \Theta(d^{-1})$, $\eta = \Theta(1)$, and early stop $T = \Theta(\log d)$ such that
\begin{align}
\label{eq:rate}
    \mathbb{P}\left[\|\mU_{T}\mU_{T}^\top-\atrue\|_F \le C_1 \log(d)\sqrt{d/m} \right] \ge 1-C_1 (d\log(d)/m)^{2/5}
\end{align} provided $\epsilon_1 \le C_1^{-1} \sqrt{d/m}$ with some large enough constant $C_1>0$. In this case, it follows from \eqref{eq:rate} that one can distinguish the true invariant signals from those spurious heterogeneous ones since
\begin{align}
    \max\left\{\left\|(\mU^\star)^\top \mU_T \mU_T^\top \mU^\star - \mIdentity_{r_1}\right\|_2, \left\|(\mv^\star)^\top \mU_T \mU_T^\top \mv^\star \right\|_2\right\} = o_{\mathbb{P}}(1).
\end{align} 

The underlying reason why the online gradient descent can recover $\atrue$ is that the heterogeneity of $\ma^{(e)}$ and the randomness in the SGD algorithm jointly prevent it from moving in the direction of spurious signals. At the same time, the standard RIP conditions and the almost orthogonality between $\mU^\star$ and $\mv^\star$ in Condition \ref{cond:invariant-and-spurious-space} ensure a steady movement towards the invariant signals. 

Conversely, running pooled stochastic gradient descent using all data will result in a biased solution:
\begin{theorem}[Negative Result for Pooled SGD]\label{theorem:SGDfail}
Under the assumptions of Theorem~\ref{theorem:main} and some mild conditions, for the certain case where $\Utrue\perp \vtrue$ and $ \E_{e\in D }\mSigma^{(e)}=\mIdentity_{r_2}$, if we perform SGD over all samples with batch size $m=\Omega(d\poly(r_1+r_2, M_1M_2, \log(d)))$ and ends with $T = \Theta(\log d)$, then $\mU_t$ keeps approaching $\Utrue\Utrue^\top+\vtrue\vtrue^\top$, in the sense that
\begin{equation}
\label{eq:bias}
    \normFull{\mU_T\mU_T^\top-\Utrue\Utrue^\top-\vtrue\vtrue^\top}_F\le o(1),
\end{equation}
during which for all $t=0,1,\ldots, T$:
\begin{equation}
\normFull{\mUt\mUtt-\atrue}_F\gtrsim\sqrt{r_1\wedge r_2}.
\end{equation}
\end{theorem}

{The convergence \eqref{eq:bias} is similar to \eqref{eq:rate-main} in derivation. To see this, since each update uses batch from the whole data,  the update in effect degenerates to the case for  one environment with no heterogeneity.  
Now the one-environment invariant solution $\ma^\star$ in \eqref{eq:rate-main} is exactly equal to $\Utrue\Utrue^\top+\vtrue\vtrue^\top$ in \eqref{eq:bias}. One can also show that for sufficiently large $t$, $\mUt\mUtt$ is sufficiently away from $\ma^\star$, indicating that the biased estimation is not attributed to early stopping. }

Our framework can be applied to learning the invariant features for a two-layer neural network with quadratic activation functions, by recognizing the fact that~\citep{li2018algorithmic}:
\begin{equation}\label{eq:change to nn}
 \mathbf{1}^\top q(\mU\xb)=\left\langle \xb\xb^\top, \mU\mU^{\top}\right\rangle, 
\end{equation}
where $q(\cdot)$ is the element-wise quadratic function. The following example shows that Theorem~\ref{thm:NN}  implies success of invariant feature learning for 2-layer NN when the ground truth invariant and variant features are independent random vectors sampled from normal distribution.

\begin{example}[Two-Layer NN with Quadratic Activation]\label{example:NN}
Let 
$\ab_1, \cdots, \ab_r\in\R^d$ be random vectors sampled from normal distribution $N(0,\frac{1}{d}\mIdentity_d)$.
For environment $e\in \envset$, suppose the target function is determined by $r_1$ invariant features and $r_2$ variant admits that for each sample $(\xb_i^{(e)},y_i^{(e)})$:
\begin{equation}
\!\!y_i^{(e)} =\sum_{j=1}^{r_1}  q(\ab_j^\top\xb_i^{(e)}) + \!\!  \sum_{j=r_1+1}^{r} a_j^{(e)} q(\ab_j^\top\xb_i^{(e)})=\left\langle  \xb_i^{(e)}{\xb_i^{(e)}}^\top\!\!,\sum_{j=1}^{r_1} \ab_j\ab_j^\top + \!\sum_{j=r_1+1}^{r} a_j^{(e)}\ab_j\ab_j^\top \!\right\rangle,
\end{equation}
which is equivalent to matrix sensing problem with
\begin{equation}
     \ma^\star= \sum_{j=1}^{r_1}  \ab_j\ab_j^\top~\text{,}~\ma^{(e)}=\sum_{j=r_1+1}^{r} a_j^{(e)}\ab_j\ab_j^\top~\text{and}~\mx_i^{(e)}=\xb_i^{(e)}{\xb_i^{(e)}}^\top.
\end{equation}
And our goal is to train a two-layer NN to capture the invariant features $(\ab_1,\ldots,\ab_{r_1})$. In this example, the invariant component and the spurious component have a more intuitive characterization: they are two disjoint groups of neurons. Moreover, it can be shown that the invariant and variant features are nearly orthogonal (Proposition~\ref{proposition:gaussian-low-angle}). Then if $\{a_j^{(e)}\}_{j,e}$ satisfies $ \frac{\sup_{e,j}\{|a_j^{(e)}|\}\cdot\max_j\{1+|\E_e a_j^{(e)}|\}}{\min_{j}\{\var_e[a_j^{(e)}]\}}<c_0$ for some absolute constant $c_0$, the variant version of Algorithm~\ref{Alg: HeteroSGD} returns  a solution that only significantly selects invariant features with probability over $0.99$. See Section~\ref{sec:NN} and Theorem~\ref{thm:NN} for details.
\end{example}

\section{Proof Sketch}

We define the invariant part $\mrt \in \mathbb{R}^{d\times r_1}$, spurious part $\mqt\in \mathbb{R}^{d\times r_2}$ in $\mUt$ as
\begin{align}
\label{eq:rq-def}
    \mrt := \mUt^\top\Utrue \qquad \text{and} \qquad \mqt :=\mUtt\vtrue
\end{align} and let the residual be the error part, that is, 
\begin{align}
\label{eq:e-def}
    \mEt := \mUt - \left(\Utrue\mrt^\top + \vtrue\mqt^\top\right) = (\mIdentity - \Utrue\Utrue^\top - \vtrue\vtrue^\top) \mUt.
\end{align} It is worth noticing that $ \idu=\Utrue\Utrue^\top$ and $\idv=\vtrue\vtrue^\top $ are both orthogonal projections, and $\idres := \mIdentity - \idu - \idv$ is not.

It follows from the model \eqref{eq:model} and the gradient update that
\begin{equation}
    \mU_{t+1}=\mUt-\eta \frac{1}{m}\sum_{i=1}^m \langle \mxiet, \mUt\mUtt-\atrue-\ma^{(e_t)}\rangle \mxiet\mUt.
\end{equation}
We use operator $ \emapet{\mm} \in \mathbb{R}^{d\times d}$ to denote the RIP error of the batch at time step $t$ for some $d\times d$ matrix $\mm$, i.e.,
\begin{equation}
    \emapet{\mm} := \frac{1}{m}\sum_{i=1}^m\langle \mxiet,\mm\rangle \mxiet -\mm.
\end{equation} We also write $\emap{\mm} :=\emapet{\mm}$ when there is no ambiguity, and we simply denote matrix $\emapt=\emap{\mUt\mUtt-\Utrue\Utrue^\top-\vtrue\mSigma_t\vtrue^\top}$ with $\mSigma_t:=\mSigma^{(e_t)}$. 
Then the gradient update of $ \mU_t$ can be written as
\begin{equation}
\label{eq:update}
    \begin{aligned}
        \mU_{t+1}=\mUt - \eta\left(\mUt\mUtt-\Utrue\Utrue^\top-\vtrue\mSigma_t\vtrue^\top\right)\mUt
        -\eta\underbrace{\emapt\mUt}_{\text{RIP Error}}.
    \end{aligned}
\end{equation}
Combining our definition \eqref{eq:rq-def} with \eqref{eq:update}, we obtain
\begin{equation}\label{eq:dynamic-of-r}
    \begin{aligned}
        \mr_{t+1}&=\left(\mUt-\eta (\mU_t\mU_t^\top - \Utrue{\Utrue}^\top - \vtrue\mSigma_t{\vtrue}^\top)\mUt -\eta\emapt\mUt\right)^\top\Utrue\\
        &= \underbrace{(\mIdentity-\eta\mUtt\mUt+\eta\mIdentity)\mrt}_{\text{Dominating Dynamics}}+\eta\underbrace{\mUtt\vtrue\mSigma_t{\vtrue}^\top\Utrue}_{\text{Interaction Error}}-\eta\underbrace{\mUtt\emapt\Utrue}_{\text{RIP Error}}.
    \end{aligned}
\end{equation}

\begin{equation}\label{eq:dynamic-of-q}
    \begin{aligned}
        \mq_{t+1}&=\left(\mUt-\eta (\mU_t\mU_t^\top - \Utrue{\Utrue}^\top - \vtrue\mSigma_t{\vtrue}^\top)\mUt -\eta\emapt\mUt\right)^\top\vtrue\\
        &= \underbrace{\mqt - \eta\mUtt\mUt\mqt +\eta\mqt\mSigma_t}_{\text{Fluctuation Dynamics}}+\eta\underbrace{\mUtt\Utrue{\Utrue}^\top\vtrue}_{\text{Interaction Error}}-\eta\underbrace{\mUtt\emapt\vtrue}_{\text{RIP Error}}.\\
    \end{aligned}
\end{equation}

For the error part, combining \eqref{eq:e-def} with \eqref{eq:update} yields
\begin{equation}\label{eq:dynamic-of-e}
    \begin{aligned}
        \mE_{t+1}&=\idres \left(\mUt-\eta (\mU_t\mU_t^\top - \Utrue{\Utrue}^\top - \vtrue\mSigma_t{\vtrue}^\top)\mUt -\eta\emapt\mUt\right)
        \\&=\underbrace{\mEt(\mIdentity-\eta\mUtt\mUt)}_{\text{Shrinkage ~ Dynamics}}+\eta\underbrace{\idres(\Utrue\Utrue^\top+\vtrue\mSigma_t\vtrue^\top)\mUt}_{\text{Interaction~ Error}}-\eta\underbrace{\idres\emapt\mUt}_{\text{RIP~ Error}}.\\
    \end{aligned}
\end{equation}

For the invariant part $\mrt$, though different singular values of $\mrt$ will grow at different speeds because of the randomness from RIP error and $e_t$, we claim that all the singular values of $\mrt$ are close to $\cR_t$  during the training process, where the scalar sequence $\cR_t$ is defined recursively as
\begin{equation}\label{eq:def-of-cr}
    \cR_{t+1}=(1-\eta\cR_t^2+\eta)\cR_t , \quad \cR_0=\alpha.
\end{equation}

The dynamics of $\mqt$ are very complicated because of the randomness of $e_t$ and the RIP error. Such a dynamic will also impact that of $\mrt$ and $\mEt$ through the complicated dependencies between these three parts, which will also make it difficult to utilize probability inequalities applicable under independence. Instead, we claim that such a ``fluctuation dynamics'' of $\mqt$ can be controlled as
\begin{align}\label{eq:bound q by L}
    \hspace{-1em}\|\mqt\|< \poly(\log(d),r,M_1) \cLt ~~ \text{with} ~~ \cLt=
    \begin{cases}
        \alpha &,~ t<O(\frac{1}{\eta}\log(r_1+r_2))\\
        O(\delta M_1\sqrt{r_1+r_2} \cR_t)&,~ t\ge O(\frac{1}{\eta}\log(r_1+r_2))
    \end{cases}.
\end{align}

\begin{wrapfigure}{l}{0.4\textwidth}
  \begin{center}
    \includegraphics[width=0.4\textwidth]{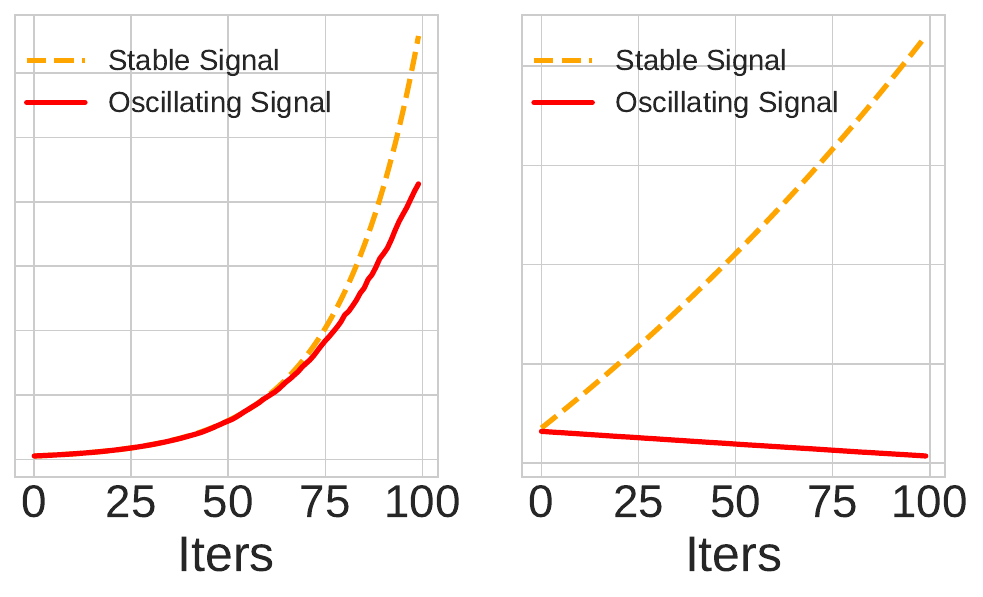}
  \end{center}
  \caption{The left figure shows $\E\|\qb_{t}\|$ and the right figure shows $\E\|\qb_{t}\|^{0.1}$.\label{fig:illu222}}
\end{wrapfigure}
     We now offer an informal illustration for how oscillations ``shrink'' spurious signal. We simply omit error terms. When matrix $ \mSigma^{(t+1)}$ is diagonal, from~\eqref{eq:dynamic-of-q}, the $i$-th column $ \qb_t$ of $\mqt$ should satisfies: $ \|\qb_{t+1}\|$ $\le (1+\eta\mSigma_{ii}^{(t+1)})\|\qb_{t}\|$. Let $\xi\defeq \mSigma_{ii}^{(t+1)}$ and assume $M\ge|\xi|$ a.s.. Introduce a \textbf{concave} function~\citep{haochen2021shape} $ \phi(x)=x^\gamma,\gamma\in(0,1)$. When $\eta M<\tfrac{1}{16}$, do  second-order Taylor's expansion  at $\phi(1)$ in $(a)$ below:

\begin{equation*}
    \begin{aligned}
       &\E\Big[\phi(\|\qb_{t+1}\|)\Big]
        \le 
        \E\Big[\phi(1+\eta\xi)\Big]\cdot \phi(\|\qb_{t}\|)\\
        &\overset{(a)}\approx\Big(1+\eta\gamma\E[\xi]-\tfrac{\eta^2\gamma(1-\gamma)}{2}\var[\xi]\Big)\phi(\|\qb_t\|){{<}}\phi(\|\qb_t\|),
    \end{aligned}
\end{equation*}
where 
 $\E[\cdot]$ is w.r.t. $\xi$. So spurious signal  keeps small when $\tfrac{1}{16M}>\eta>\tfrac{4\E[\xi]}{(1-\gamma)\var[\xi]}$. 
See the  figure for illustration in Figure~\ref{fig:illu222}. 
 While  $\E[\|\qb_t\|]$ (shown in the left figure)  increases since the signals have positive expectations, $\E[\|\qb_t\|^{0.1}]$ (shown in the right figure) decreases.
Note that  the above intuition is informal and the formal argument is deferred in Lemma~\ref{lemma:upper-bound-q} and Lemma~\ref{lemma:x is martingale} in Appendix.

The entire training process can be divided into two phases. In Phase 1, the invariant signals $\mrt$ increase rapidly while the spurious signals $\mqt$ fluctuate but remain at a low level. Phase 1 ends in $O(\frac{1}{\eta}\log(\frac{1}{\alpha}))$ steps when $\mrt$ attains $\Theta(1)$-order (see Theorem~\ref{theorem:phase1}). In Phase 2, the magnitudes of $ \mqt$ and $ \mEt$ stay low, while all the singular values of $\mrt$ approach $1$ (See Theorem~\ref{theorem:phase2}). We defer the details to Appendix.

\section{Simulations}\label{sec:simulations}
In this section, we present our simulations. We design three sets of experiments. In the first set of experiments, we show with the growth of environment heterogeneity,   invariance learning is achievable. For the second set of experiments, we show that given heterogeneous data,  invariance learning is achievable with the growth of step size\footnote{A smaller step size can reduce the noise arising from heterogeneity, making the dynamics more similar to those of Gradient Descent. }.  For the third set of experiments, we compare \texttt{HeteroSGD} (Algorithm~\ref{Alg: HeteroSGD}) and Pooled SGD. In Section~\ref{sec:pooledSGD} we also perform simulations for Pooled SGD with small batch size. 

In below two sets of experiments, we set the scale of initialization $\alpha = 10^{-3}$, problem dimension $d=100$, $r_1=1$ and $r_2=1$. 
 Let the true signal be $\atrue=\ub\ub^\top$. Denote the heterogeneity parameter by $M$. The environment is  generated by $\ma^{(e)}=  \atrue+ s^{(e)}\vbb\vbb^\top$
where $s^{(e)}\sim \operatorname{Unif}\{1-M,1+M\}$, and the default of $\eta$ is $0.05$. The number of linear measurements is set to be $m= 8000$ with elements following from i.i.d  $N(0, 1)$. For the third sets, we set $(r_1,r_2,d,\E s^{(e)})=(3,2,40,0.5)
$, $m=2800$ for \texttt{HeteroSGD} and $m=5600$ without replacement for Pooled SGD. The plots show signal recovery proportion, $1.0$ indicates fully recovery.

\begin{figure}[H]\label{fig:exp1}
\begin{minipage}{0.5\linewidth}
    \centering
    \includegraphics[width=7cm]{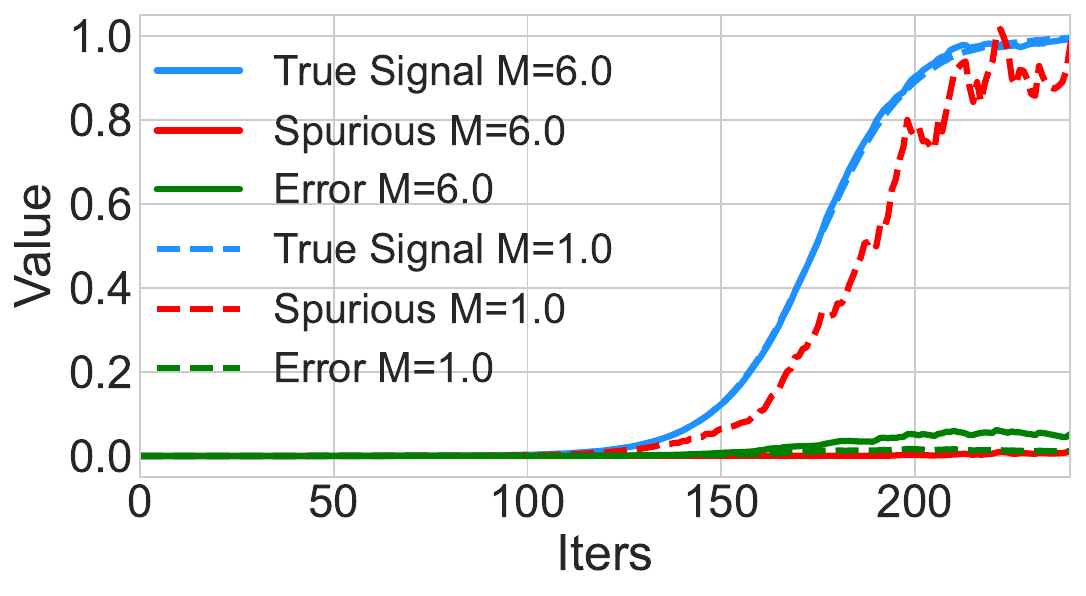}
\end{minipage}
\begin{minipage}{0.5\linewidth}
    \centering
    \includegraphics[width=7cm]{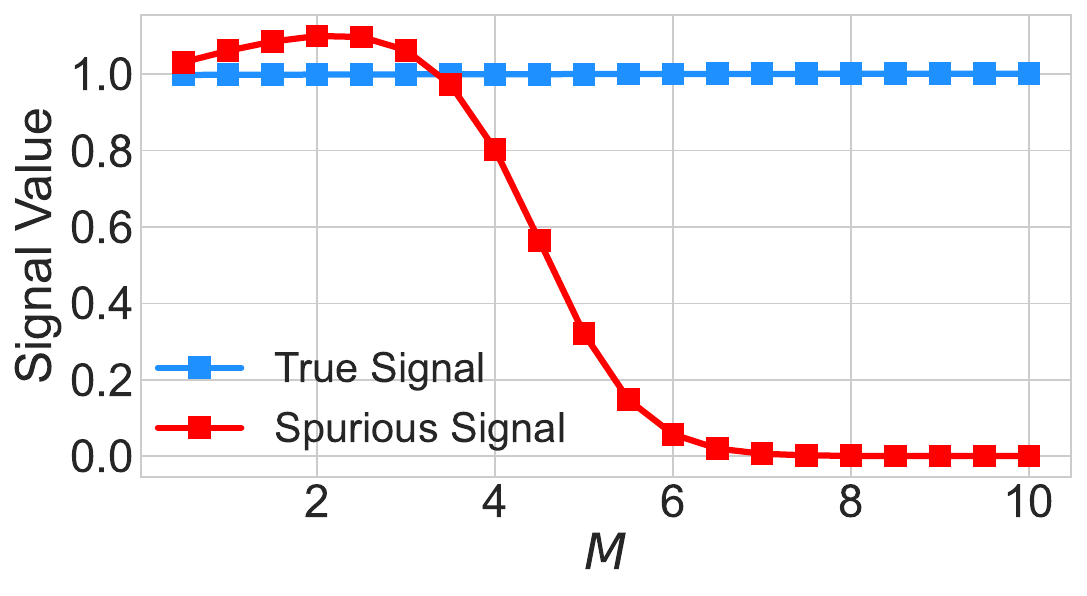}
    
\end{minipage}
\caption{The left figure shows that the heterogeneity facilitates us to eliminate the spurious signal and learn the invariance. The right figure shows that both true and spurious signals flow up when $M$ is small, the ``phase transition'' happens around $M=5$.
}
\end{figure}
\vspace{-2em}
\begin{figure}[H]\label{fig:exp2}
\begin{minipage}{0.5\linewidth}
    \centering
    \includegraphics[width=7cm]{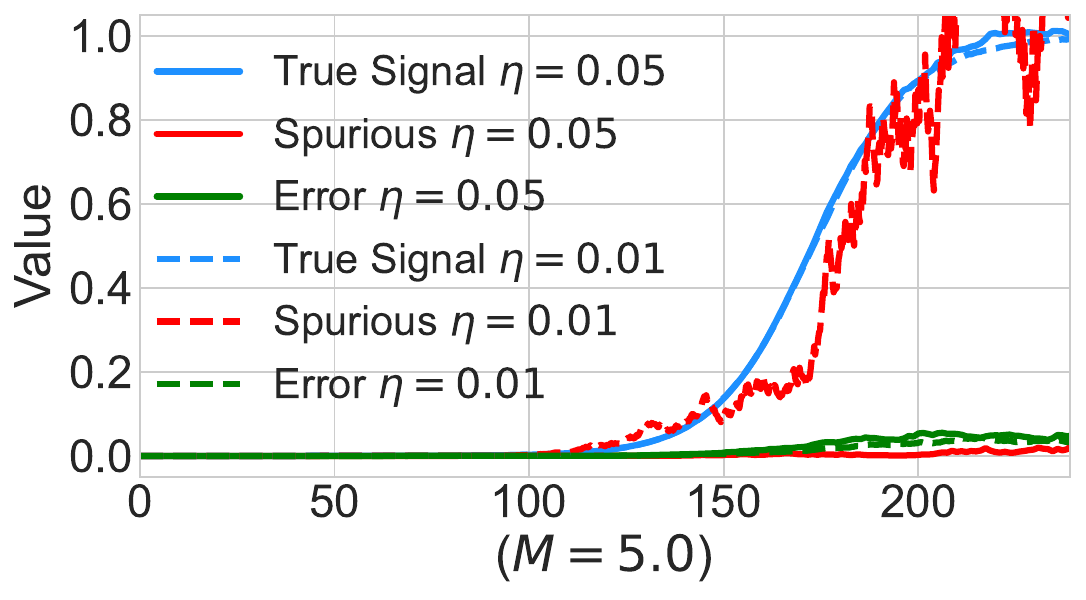}
\end{minipage}
\begin{minipage}{0.5\linewidth}
    \centering
    \includegraphics[width=7cm]{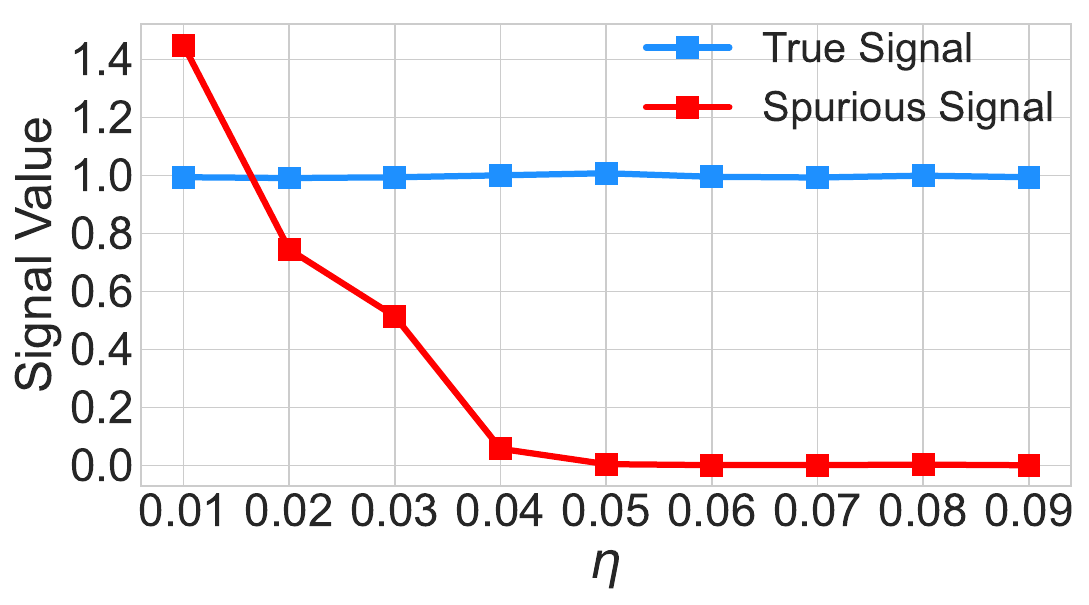}
\end{minipage}
    \caption{The left figure shows that the large step size helps eliminate the spurious signal. The right figure shows that both true and spurious signals flow up when $\eta$ is small, and when $\eta\geq 0.05$, the spurious signal is eliminated.}
\end{figure}
\vspace{-2em}
\begin{figure}[H]\label{fig:exp3}
\begin{minipage}{0.5\linewidth}
    \centering
    \includegraphics[width=7cm]{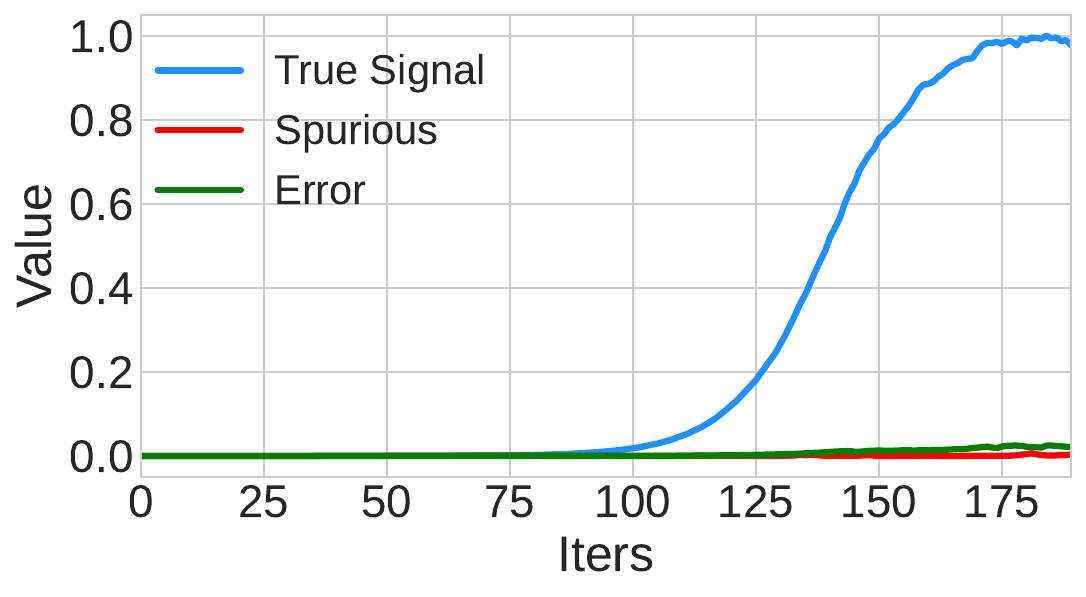}
\end{minipage}
\begin{minipage}{0.5\linewidth}
    \centering
    \includegraphics[width=7cm]{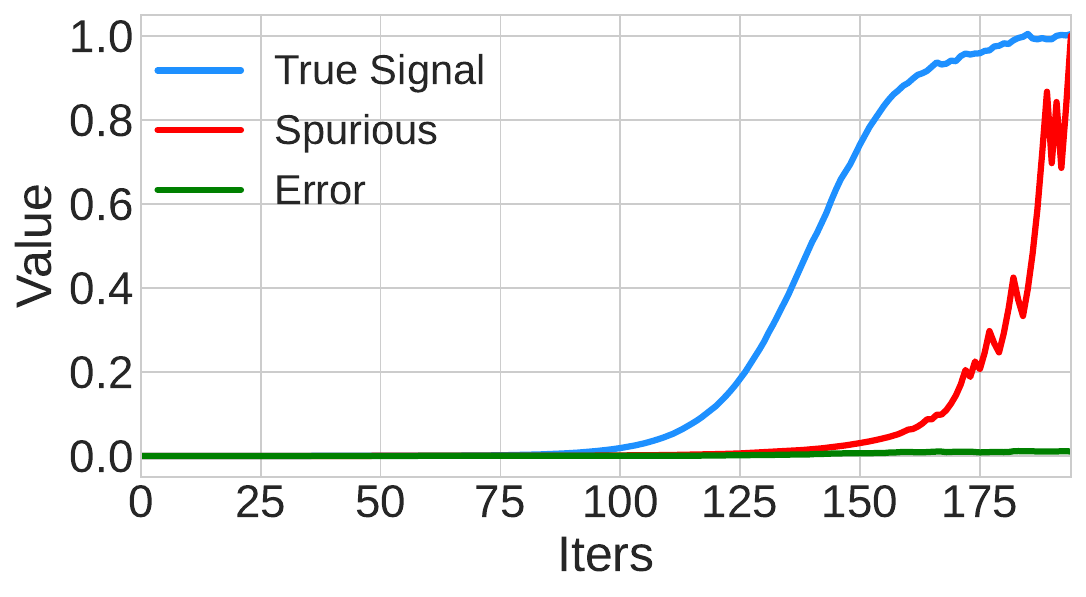}
\end{minipage}
    \caption{The left figure shows that heterogeneity helps eliminate the spurious signal. The right figure shows that Pooled SGD fits invariant signal and spurious signal simultaneously without distinction.}
\end{figure}
\section{Conclusions}
This paper explains that implicit bias of heterogeneity leads the model learning towards invariance and causality. We
show that under heterogeneous environments, online gradient descent with large step sizes can select out the invariant matrix in the  over-parameterized matrix sensing models. We conjecture that both heterogeneity and stochasticity are indispensable. Over-parameterization may not be. We leave future studies to understand the necessity of the three factors.

\section{Acknowledgement}
C. Fang was supported by National Key R\&D Program of China (2022ZD0114902), the NSF China (No.62376008).
 
\newpage
\bibliographystyle{plainnat}
\bibliography{main.bib}

\begin{thebibliography}{52}
\providecommand{\natexlab}[1]{#1}
\providecommand{\url}[1]{\texttt{#1}}
\expandafter\ifx\csname urlstyle\endcsname\relax
  \providecommand{\doi}[1]{doi: #1}\else
  \providecommand{\doi}{doi: \begingroup \urlstyle{rm}\Url}\fi

\bibitem[Achiam et~al.(2023)Achiam, Adler, Agarwal, Ahmad, Akkaya, Aleman,
  Almeida, Altenschmidt, Altman, Anadkat, et~al.]{achiam2023gpt}
Josh Achiam, Steven Adler, Sandhini Agarwal, Lama Ahmad, Ilge Akkaya,
  Florencia~Leoni Aleman, Diogo Almeida, Janko Altenschmidt, Sam Altman,
  Shyamal Anadkat, et~al.
\newblock Gpt-4 technical report.
\newblock \emph{arXiv preprint arXiv:2303.08774}, 2023.

\bibitem[Arjovsky et~al.(2019)Arjovsky, Bottou, Gulrajani, and
  Lopez-Paz]{arjovsky2019invariant}
Martin Arjovsky, L{\'e}on Bottou, Ishaan Gulrajani, and David Lopez-Paz.
\newblock Invariant risk minimization.
\newblock \emph{arXiv preprint arXiv:1907.02893}, 2019.

\bibitem[Blanc et~al.(2020)Blanc, Gupta, Valiant, and
  Valiant]{blanc2020implicit}
Guy Blanc, Neha Gupta, Gregory Valiant, and Paul Valiant.
\newblock Implicit regularization for deep neural networks driven by an
  ornstein-uhlenbeck like process.
\newblock In \emph{Conference on Learning Theory}, pages 483--513. PMLR, 2020.

\bibitem[Candes(2008)]{candes2008restricted}
Emmanuel~J Candes.
\newblock The restricted isometry property and its implications for compressed
  sensing.
\newblock \emph{Comptes rendus. Mathematique}, 346\penalty0 (9-10):\penalty0
  589--592, 2008.

\bibitem[Candes and Tao(2005)]{candes2005decoding}
Emmanuel~J Candes and Terence Tao.
\newblock Decoding by linear programming.
\newblock \emph{IEEE transactions on information theory}, 51\penalty0
  (12):\penalty0 4203--4215, 2005.

\bibitem[Candès and Plan(2011)]{5730578}
Emmanuel~J. Candès and Yaniv Plan.
\newblock Tight oracle inequalities for low-rank matrix recovery from a minimal
  number of noisy random measurements.
\newblock \emph{IEEE Transactions on Information Theory}, 57\penalty0
  (4):\penalty0 2342--2359, 2011.
\newblock \doi{10.1109/TIT.2011.2111771}.

\bibitem[Chang and Tandon(2019)]{chang2019upload}
Wei-Ting Chang and Ravi Tandon.
\newblock On the upload versus download cost for secure and private matrix
  multiplication.
\newblock In \emph{2019 IEEE Information Theory Workshop (ITW)}, pages 1--5.
  IEEE, 2019.

\bibitem[Cohen et~al.(2020)Cohen, Kaur, Li, Kolter, and
  Talwalkar]{cohen2020gradient}
Jeremy Cohen, Simran Kaur, Yuanzhi Li, J~Zico Kolter, and Ameet Talwalkar.
\newblock Gradient descent on neural networks typically occurs at the edge of
  stability.
\newblock In \emph{International Conference on Learning Representations}, 2020.

\bibitem[Damian et~al.(2021)Damian, Ma, and Lee]{damian2021label}
Alex Damian, Tengyu Ma, and Jason~D Lee.
\newblock Label noise {SGD} provably prefers flat global minimizers.
\newblock \emph{Advances in Neural Information Processing Systems}, 34, 2021.

\bibitem[Dwork et~al.(2006)Dwork, Kenthapadi, McSherry, Mironov, and
  Naor]{dwork2006our}
Cynthia Dwork, Krishnaram Kenthapadi, Frank McSherry, Ilya Mironov, and Moni
  Naor.
\newblock Our data, ourselves: Privacy via distributed noise generation.
\newblock In \emph{Advances in Cryptology-EUROCRYPT 2006: 24th Annual
  International Conference on the Theory and Applications of Cryptographic
  Techniques, St. Petersburg, Russia, May 28-June 1, 2006. Proceedings 25},
  pages 486--503. Springer, 2006.

\bibitem[Even et~al.(2024)Even, Pesme, Gunasekar, and Flammarion]{even2024s}
Mathieu Even, Scott Pesme, Suriya Gunasekar, and Nicolas Flammarion.
\newblock (s) gd over diagonal linear networks: Implicit bias, large stepsizes
  and edge of stability.
\newblock \emph{Advances in Neural Information Processing Systems}, 36, 2024.

\bibitem[Fan and Liao(2014)]{fan2014endogeneity}
Jianqing Fan and Yuan Liao.
\newblock Endogeneity in high dimensions.
\newblock \emph{Annals of Statistics}, 42\penalty0 (3):\penalty0 872, 2014.

\bibitem[Fan et~al.(2023{\natexlab{a}})Fan, Fang, Gu, and
  Zhang]{fan2023environment}
Jianqing Fan, Cong Fang, Yihong Gu, and Tong Zhang.
\newblock Environment invariant linear least squares.
\newblock \emph{arXiv preprint arXiv:2303.03092}, 2023{\natexlab{a}}.

\bibitem[Fan et~al.(2023{\natexlab{b}})Fan, Yang, and Yu]{fan2023understanding}
Jianqing Fan, Zhuoran Yang, and Mengxin Yu.
\newblock Understanding implicit regularization in over-parameterized single
  index model.
\newblock \emph{Journal of the American Statistical Association}, 118\penalty0
  (544):\penalty0 2315--2328, 2023{\natexlab{b}}.

\bibitem[Ghassami et~al.(2017)Ghassami, Salehkaleybar, Kiyavash, and
  Zhang]{ghassami2017learning}
AmirEmad Ghassami, Saber Salehkaleybar, Negar Kiyavash, and Kun Zhang.
\newblock Learning causal structures using regression invariance.
\newblock \emph{Advances in Neural Information Processing Systems}, 30, 2017.

\bibitem[Gissin et~al.(2019)Gissin, Shalev-Shwartz, and
  Daniely]{gissin2019implicit}
Daniel Gissin, Shai Shalev-Shwartz, and Amit Daniely.
\newblock The implicit bias of depth: How incremental learning drives
  generalization.
\newblock In \emph{International Conference on Learning Representations}, 2019.

\bibitem[Gu et~al.(2024)Gu, Fang, B{\"u}hlmann, and Fan]{gu2024causality}
Yihong Gu, Cong Fang, Peter B{\"u}hlmann, and Jianqing Fan.
\newblock Causality pursuit from heterogeneous environments via neural
  adversarial invariance learning.
\newblock \emph{arXiv preprint arXiv:2405.04715}, 2024.

\bibitem[Gunasekar et~al.(2017)Gunasekar, Woodworth, Bhojanapalli, Neyshabur,
  and Srebro]{gunasekar2017implicit}
Suriya Gunasekar, Blake~E Woodworth, Srinadh Bhojanapalli, Behnam Neyshabur,
  and Nati Srebro.
\newblock Implicit regularization in matrix factorization.
\newblock \emph{Advances in Neural Information Processing Systems}, 30, 2017.

\bibitem[Gunasekar et~al.(2018)Gunasekar, Lee, Soudry, and
  Srebro]{gunasekar2018implicit}
Suriya Gunasekar, Jason~D Lee, Daniel Soudry, and Nati Srebro.
\newblock Implicit bias of gradient descent on linear convolutional networks.
\newblock \emph{Advances in Neural Information Processing Systems}, 31, 2018.

\bibitem[HaoChen et~al.(2021)HaoChen, Wei, Lee, and Ma]{haochen2021shape}
Jeff~Z HaoChen, Colin Wei, Jason Lee, and Tengyu Ma.
\newblock Shape matters: Understanding the implicit bias of the noise
  covariance.
\newblock In \emph{Conference on Learning Theory}, pages 2315--2357. PMLR,
  2021.

\bibitem[Idrissi et~al.(2022)Idrissi, Arjovsky, Pezeshki, and
  Lopez-Paz]{idrissi2022simple}
Badr~Youbi Idrissi, Martin Arjovsky, Mohammad Pezeshki, and David Lopez-Paz.
\newblock Simple data balancing achieves competitive worst-group-accuracy.
\newblock In \emph{Conference on Causal Learning and Reasoning}, pages
  336--351. PMLR, 2022.

\bibitem[Ji and Telgarsky(2019)]{ji2019gradient}
Ziwei Ji and Matus Telgarsky.
\newblock Gradient descent aligns the layers of deep linear networks.
\newblock In \emph{International Conference on Learning Representations}, 2019.

\bibitem[Ji and Telgarsky(2020)]{ji2020directional}
Ziwei Ji and Matus Telgarsky.
\newblock Directional convergence and alignment in deep learning.
\newblock \emph{Advances in Neural Information Processing Systems}, 33, 2020.

\bibitem[Jiang et~al.(2023)Jiang, Chen, and Ding]{jiang2023algorithmic}
Liwei Jiang, Yudong Chen, and Lijun Ding.
\newblock Algorithmic regularization in model-free overparametrized asymmetric
  matrix factorization.
\newblock \emph{SIAM Journal on Mathematics of Data Science}, 5\penalty0
  (3):\penalty0 723--744, 2023.

\bibitem[Jin et~al.(2023)Jin, Li, Lyu, Du, and Lee]{jin2023understanding}
Jikai Jin, Zhiyuan Li, Kaifeng Lyu, Simon~Shaolei Du, and Jason~D Lee.
\newblock Understanding incremental learning of gradient descent: A
  fine-grained analysis of matrix sensing.
\newblock In \emph{International Conference on Machine Learning}, pages
  15200--15238. PMLR, 2023.

\bibitem[Kairouz et~al.(2021)Kairouz, McMahan, Avent, Bellet, Bennis, Bhagoji,
  Bonawitz, Charles, Cormode, Cummings, et~al.]{kairouz2021advances}
Peter Kairouz, H~Brendan McMahan, Brendan Avent, Aur{\'e}lien Bellet, Mehdi
  Bennis, Arjun~Nitin Bhagoji, Kallista Bonawitz, Zachary Charles, Graham
  Cormode, Rachel Cummings, et~al.
\newblock Advances and open problems in federated learning.
\newblock \emph{Foundations and trends{\textregistered} in machine learning},
  14\penalty0 (1--2):\penalty0 1--210, 2021.

\bibitem[Kalimeris et~al.(2019)Kalimeris, Kaplun, Nakkiran, Edelman, Yang,
  Barak, and Zhang]{kalimeris2019sgd}
Dimitris Kalimeris, Gal Kaplun, Preetum Nakkiran, Benjamin Edelman, Tristan
  Yang, Boaz Barak, and Haofeng Zhang.
\newblock Sgd on neural networks learns functions of increasing complexity.
\newblock \emph{Advances in Neural Information Processing Systems}, 32, 2019.

\bibitem[Kamath et~al.(2021)Kamath, Tangella, Sutherland, and
  Srebro]{kamath2021does}
Pritish Kamath, Akilesh Tangella, Danica Sutherland, and Nathan Srebro.
\newblock Does invariant risk minimization capture invariance?
\newblock In \emph{International Conference on Artificial Intelligence and
  Statistics}, pages 4069--4077. PMLR, 2021.

\bibitem[Karimireddy et~al.(2020)Karimireddy, Kale, Mohri, Reddi, Stich, and
  Suresh]{karimireddy2020scaffold}
Sai~Praneeth Karimireddy, Satyen Kale, Mehryar Mohri, Sashank Reddi, Sebastian
  Stich, and Ananda~Theertha Suresh.
\newblock Scaffold: Stochastic controlled averaging for federated learning.
\newblock In \emph{International Conference on Machine Learning}, pages
  5132--5143. PMLR, 2020.

\bibitem[Li et~al.(2021{\natexlab{a}})Li, Hu, Beirami, and Smith]{li2021ditto}
Tian Li, Shengyuan Hu, Ahmad Beirami, and Virginia Smith.
\newblock Ditto: Fair and robust federated learning through personalization.
\newblock In \emph{International Conference on Machine Learning}, pages
  6357--6368. PMLR, 2021{\natexlab{a}}.

\bibitem[Li et~al.(2018)Li, Ma, and Zhang]{li2018algorithmic}
Yuanzhi Li, Tengyu Ma, and Hongyang Zhang.
\newblock Algorithmic regularization in over-parameterized matrix sensing and
  neural networks with quadratic activations.
\newblock In \emph{Conference on Learning Theory}, pages 2--47. PMLR, 2018.

\bibitem[Li et~al.(2021{\natexlab{b}})Li, Wang, and Arora]{li2021happens}
Zhiyuan Li, Tianhao Wang, and Sanjeev Arora.
\newblock What happens after sgd reaches zero loss?--a mathematical framework.
\newblock In \emph{International Conference on Learning Representations},
  2021{\natexlab{b}}.

\bibitem[Lin et~al.(2022{\natexlab{a}})Lin, Han, Li, and
  Zhang]{lin2022personalized}
Shiyun Lin, Yuze Han, Xiang Li, and Zhihua Zhang.
\newblock Personalized federated learning towards communication efficiency,
  robustness and fairness.
\newblock \emph{Advances in Neural Information Processing Systems},
  35:\penalty0 30471--30485, 2022{\natexlab{a}}.

\bibitem[Lin et~al.(2022{\natexlab{b}})Lin, Dong, Wang, and
  Zhang]{lin2022bayesian}
Yong Lin, Hanze Dong, Hao Wang, and Tong Zhang.
\newblock Bayesian invariant risk minimization.
\newblock In \emph{Proceedings of the IEEE/CVF Conference on Computer Vision
  and Pattern Recognition}, pages 16021--16030, 2022{\natexlab{b}}.

\bibitem[Lin et~al.(2022{\natexlab{c}})Lin, Zhu, Tan, and Cui]{lin2022zin}
Yong Lin, Shengyu Zhu, Lu~Tan, and Peng Cui.
\newblock Zin: When and how to learn invariance without environment partition?
\newblock \emph{Advances in Neural Information Processing Systems}, 35,
  2022{\natexlab{c}}.

\bibitem[Lu et~al.(2021)Lu, Wu, Hern{\'a}ndez-Lobato, and
  Sch{\"o}lkopf]{lu2021nonlinear}
Chaochao Lu, Yuhuai Wu, Jo{\'s}e~Miguel Hern{\'a}ndez-Lobato, and Bernhard
  Sch{\"o}lkopf.
\newblock Nonlinear invariant risk minimization: A causal approach.
\newblock \emph{arXiv preprint arXiv:2102.12353}, 2021.

\bibitem[Lu et~al.(2023)Lu, Wu, Yang, and Zou]{lu2023benign}
Miao Lu, Beining Wu, Xiaodong Yang, and Difan Zou.
\newblock Benign oscillation of stochastic gradient descent with large learning
  rate.
\newblock In \emph{International Conference on Learning Representations}, 2023.

\bibitem[Lyu et~al.(2022)Lyu, Li, and Arora]{lyu2022understanding}
Kaifeng Lyu, Zhiyuan Li, and Sanjeev Arora.
\newblock Understanding the generalization benefit of normalization layers:
  Sharpness reduction.
\newblock \emph{Advances in Neural Information Processing Systems}, 35, 2022.

\bibitem[McMahan et~al.(2017)McMahan, Moore, Ramage, Hampson, and
  y~Arcas]{mcmahan2017communication}
Brendan McMahan, Eider Moore, Daniel Ramage, Seth Hampson, and Blaise~Aguera
  y~Arcas.
\newblock Communication-efficient learning of deep networks from decentralized
  data.
\newblock In \emph{Artificial intelligence and statistics}, pages 1273--1282.
  PMLR, 2017.

\bibitem[Meinshausen et~al.(2016)Meinshausen, Hauser, Mooij, Peters, Versteeg,
  and B{\"u}hlmann]{meinshausen2016methods}
Nicolai Meinshausen, Alain Hauser, Joris~M Mooij, Jonas Peters, Philip
  Versteeg, and Peter B{\"u}hlmann.
\newblock Methods for causal inference from gene perturbation experiments and
  validation.
\newblock \emph{Proceedings of the National Academy of Sciences}, 113\penalty0
  (27):\penalty0 7361--7368, 2016.

\bibitem[Nacson et~al.(2019)Nacson, Lee, Gunasekar, Savarese, Srebro, and
  Soudry]{nacson2019convergence}
Mor~Shpigel Nacson, Jason Lee, Suriya Gunasekar, Pedro Henrique~Pamplona
  Savarese, Nathan Srebro, and Daniel Soudry.
\newblock Convergence of gradient descent on separable data.
\newblock In \emph{The 22nd International Conference on Artificial Intelligence
  and Statistics}, pages 3420--3428. PMLR, 2019.

\bibitem[Nastl and Hardt(2024)]{nastl2024predictors}
Vivian~Y Nastl and Moritz Hardt.
\newblock Predictors from causal features do not generalize better to new
  domains.
\newblock \emph{arXiv preprint arXiv:2402.09891}, 2024.

\bibitem[Peters et~al.(2016)Peters, B{\"u}hlmann, and
  Meinshausen]{peters2016causal}
Jonas Peters, Peter B{\"u}hlmann, and Nicolai Meinshausen.
\newblock Causal inference by using invariant prediction: identification and
  confidence intervals.
\newblock \emph{Journal of the Royal Statistical Society Series B: Statistical
  Methodology}, 78\penalty0 (5):\penalty0 947--1012, 2016.

\bibitem[Rosenfeld et~al.(2021)Rosenfeld, Ravikumar, and
  Risteski]{rosenfeld2021risks}
Elan Rosenfeld, Pradeep Ravikumar, and Andrej Risteski.
\newblock The risks of invariant risk minimization.
\newblock In \emph{International Conference on Learning Representations},
  volume~9, 2021.

\bibitem[Soudry et~al.(2018)Soudry, Hoffer, Nacson, Gunasekar, and
  Srebro]{soudry2018implicit}
Daniel Soudry, Elad Hoffer, Mor~Shpigel Nacson, Suriya Gunasekar, and Nathan
  Srebro.
\newblock The implicit bias of gradient descent on separable data.
\newblock \emph{Journal of Machine Learning Research}, 19\penalty0
  (1):\penalty0 2822--2878, 2018.

\bibitem[St{\"o}ger and Soltanolkotabi(2021)]{stoger2021small}
Dominik St{\"o}ger and Mahdi Soltanolkotabi.
\newblock Small random initialization is akin to spectral learning:
  Optimization and generalization guarantees for overparameterized low-rank
  matrix reconstruction.
\newblock \emph{Advances in Neural Information Processing Systems}, 34, 2021.

\bibitem[Vershynin(2018)]{vershynin2018high}
Roman Vershynin.
\newblock \emph{High-dimensional probability: An introduction with applications
  in data science}, volume~47.
\newblock Cambridge university press, 2018.

\bibitem[Vivien et~al.(2022)Vivien, Reygner, and Flammarion]{vivien2022label}
Loucas~Pillaud Vivien, Julien Reygner, and Nicolas Flammarion.
\newblock Label noise (stochastic) gradient descent implicitly solves the lasso
  for quadratic parametrisation.
\newblock In \emph{Conference on Learning Theory}, pages 2127--2159. PMLR,
  2022.

\bibitem[Wald et~al.(2023)Wald, Yona, Shalit, and Carmon]{wald2023malign}
Yoav Wald, Gal Yona, Uri Shalit, and Yair Carmon.
\newblock Malign overfitting: Interpolation and invariance are fundamentally at
  odds.
\newblock In \emph{International Conference on Learning Representations}, 2023.

\bibitem[Zhang et~al.(2020)Zhang, Lyle, Sodhani, Filos, Kwiatkowska, Pineau,
  Gal, and Precup]{zhang2020invariant}
Amy Zhang, Clare Lyle, Shagun Sodhani, Angelos Filos, Marta Kwiatkowska, Joelle
  Pineau, Yarin Gal, and Doina Precup.
\newblock Invariant causal prediction for block mdps.
\newblock In \emph{International Conference on Machine Learning}, pages
  11214--11224. PMLR, 2020.

\bibitem[Zhang et~al.(2023)Zhang, Bauer, Bennett, Gao, Gong, Hilmkil, Jennings,
  Ma, Minka, Pawlowski, et~al.]{zhang2023understanding}
Cheng Zhang, Stefan Bauer, Paul Bennett, Jiangfeng Gao, Wenbo Gong, Agrin
  Hilmkil, Joel Jennings, Chao Ma, Tom Minka, Nick Pawlowski, et~al.
\newblock Understanding causality with large language models: Feasibility and
  opportunities.
\newblock \emph{arXiv preprint arXiv:2304.05524}, 2023.

\bibitem[Zhuo et~al.(2021)Zhuo, Kwon, Ho, and Caramanis]{zhuo2021computational}
Jiacheng Zhuo, Jeongyeol Kwon, Nhat Ho, and Constantine Caramanis.
\newblock On the computational and statistical complexity of over-parameterized
  matrix sensing.
\newblock \emph{arXiv preprint arXiv:2102.02756}, 2021.

\end{thebibliography}
\newpage
\tableofcontents
\appendix

\section{Deferred Proofs in Theorem \ref{theorem:main}}
This section is organized as follows: In Section~\ref{sec:rip}, we state some useful properties from the definition of RIP. In Section~\ref{sec:auxiliary-sequence} and~\ref{sec:useful-lemmas}, we formally
define the auxiliary sequences we use to control the dynamics and develop several useful lemmas we frequently use. In Section~\ref{sec:bound-q} and~\ref{sec:bound-e}, we bound $\mqt$ and $\mEt$ respectively. In Section~\ref{sec:append:phase1} and~\ref{sec:append:phase2}, we prove Theorem~\ref{theorem:phase1} and Theorem~\ref{theorem:phase2}.

\subsection{Restricted Isometry Properties}\label{sec:rip}
In this section, we list some useful implications of the definition of RIP property. Below we assume the set of linear measurements $ \ma_1^{(e_t)},\ldots, \ma_m^{(e_t)}\ \in\mathbb R^{d\times d}$ satisfy the RIP property with parameter $(r,\delta)$ and denote $\emap{\mm}:= \frac{1}{m}\sum_{i=1}^m \langle \mxiet, \mm\rangle \mxiet-\mm$ for some symmetric $d\times d$ matrix $\mm$.
 Some lemmas are direct corollaries and some lemmas serve as extensions to rank above $r$ case. The proof of these lemmas can be found in \citet{li2018algorithmic}.
\begin{lemma}\label{lemma:rip:low-F}
    Under the assumption of this subsection, if $\mx,\my$ are $ d\times d$ matrices with rank at most $r$, then
    \begin{equation}
        \left|\langle\emap{\mx},\my\rangle\right|\le \delta\|\mx\|_F \|\my\|_F.
    \end{equation}
\end{lemma}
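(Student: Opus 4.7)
The plan is to reduce the bilinear quantity $\langle \emap{\mx}, \my\rangle$ to two quadratic forms directly controlled by the RIP definition, via the polarization identity, and then convert the resulting sum-of-squares bound into the desired product bound by a homogeneity rescaling. Unfolding the definition,
$$\langle \emap{\mx}, \my\rangle = \frac{1}{m}\sum_{i=1}^m \langle \ma_i^{(e_t)}, \mx\rangle \langle \ma_i^{(e_t)}, \my\rangle - \langle \mx, \my\rangle,$$
so the goal is to control the gap between the empirical bilinear form and the true inner product.

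Next I would apply the two polarization identities
$$4 \cdot \frac{1}{m}\sum_i \langle \ma_i^{(e_t)}, \mx\rangle \langle \ma_i^{(e_t)}, \my\rangle = \frac{1}{m}\sum_i \langle \ma_i^{(e_t)}, \mx+\my\rangle^2 - \frac{1}{m}\sum_i \langle \ma_i^{(e_t)}, \mx-\my\rangle^2,$$
$$4\langle \mx, \my\rangle = \|\mx+\my\|_F^2 - \|\mx-\my\|_F^2,$$
so that $4\langle \emap{\mx}, \my\rangle$ becomes a signed combination of the RIP errors for $\mx+\my$ and $\mx-\my$. Since $\mx$ and $\my$ each have rank at most $r$, the matrices $\mx\pm\my$ have rank at most $2r$, so the RIP inequality applies to each (the paper's RIP assumption is at rank $2(r_1+r_2)$, giving enough margin for all downstream applications of the lemma). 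A triangle inequality yields
$$4|\langle \emap{\mx}, \my\rangle| \le \delta\|\mx+\my\|_F^2 + \delta\|\mx-\my\|_F^2 = 2\delta\bigl(\|\mx\|_F^2 + \|\my\|_F^2\bigr).$$

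To pass from this sum-of-squares bound to the product form, I would invoke the standard homogeneity trick: both $\langle \emap{\mx}, \my\rangle$ and the product $\|\mx\|_F \|\my\|_F$ are invariant under the rescaling $\mx \mapsto \lambda \mx$, $\my \mapsto \lambda^{-1}\my$ (by bilinearity), whereas the sum $\|\mx\|_F^2 + \|\my\|_F^2$ is not. Choosing $\lambda^2 = \|\my\|_F / \|\mx\|_F$ equalizes the two Frobenius norms and gives $\|\lambda\mx\|_F^2 + \|\lambda^{-1}\my\|_F^2 = 2\|\mx\|_F\|\my\|_F$, which upon substitution yields exactly $|\langle \emap{\mx}, \my\rangle| \le \delta\|\mx\|_F \|\my\|_F$. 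I expect no real obstacle here: the proof is essentially one line of polarization plus a scalar normalization, and the only mildly delicate accounting is the rank doubling introduced by polarization, which is absorbed by the paper's ambient choice of RIP parameter.
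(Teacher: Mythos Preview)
Your proposal is correct and is precisely the standard polarization argument; the paper does not supply its own proof of this lemma but simply attributes it to \citet{candes2008restricted}, whose proof is exactly the one you outline. The only point to flag is the rank doubling from $\mx\pm\my$, which you already note is absorbed by the ambient RIP assumption at rank $2(r_1+r_2)$ in Assumption~\ref{cond:dgp}.
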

\begin{lemma}\label{lemma:rip:low-2}
    Under the assumption of this subsection, if $\mx$ id $ d\times d$ matrix with rank at most $r$ and $\mz$ is a $d\times d'$ matrix, then
    \begin{equation}
        \normFull{\emap{\mx}\mz}\le \delta\|\mx\|_F \|\mz\|.
    \end{equation}
\end{lemma}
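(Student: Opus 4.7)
The plan is a short reduction from the operator-norm bound on $\emap{\mx}\mz$ to the rank-one inner-product bound that Lemma \ref{lemma:rip:low-F} already provides. The key observation is that an operator norm can be tested against rank-one matrices, and $\emap{\cdot}$ is well-controlled precisely in inner products with low-rank matrices.

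First, I would invoke the variational characterization of the operator norm: writing $\mn := \emap{\mx}\mz \in \R^{d \times d'}$, we have
\begin{equation*}
    \|\mn\| \;=\; \sup_{\substack{\ub\in\R^d,\,\vbb\in\R^{d'} \\ \|\ub\|=\|\vbb\|=1}} \ub^\top \mn \vbb \;=\; \sup_{\ub,\vbb} \ub^\top \emap{\mx}(\mz\vbb).
\end{equation*}
The next step is to rewrite each such scalar as a Hilbert--Schmidt inner product against the rank-one $d\times d$ matrix $\mw := \ub(\mz\vbb)^\top$, using the identity $\ub^\top \ma \wb = \langle \ma, \ub\wb^\top\rangle$ with $\wb = \mz\vbb$. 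This yields $\ub^\top \emap{\mx}(\mz\vbb) = \langle \emap{\mx}, \mw\rangle$, and by direct computation $\|\mw\|_F = \|\ub\|\,\|\mz\vbb\| \le \|\mz\|$.

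Now Lemma \ref{lemma:rip:low-F} applies: $\mx$ has rank at most $r$ by hypothesis, and the test matrix $\mw$ has rank at most $1 \le r$, so both arguments lie in the low-rank regime and we obtain
\begin{equation*}
    |\langle \emap{\mx}, \mw\rangle| \;\le\; \delta\, \|\mx\|_F\, \|\mw\|_F \;\le\; \delta\, \|\mx\|_F\, \|\mz\|.
\end{equation*}
Taking the supremum over unit vectors $\ub$ and $\vbb$ on the left-hand side gives the desired bound $\|\emap{\mx}\mz\| \le \delta \|\mx\|_F \|\mz\|$.

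There is essentially no serious obstacle in this argument, which is why I expect the proof to be a brief one-paragraph derivation in the paper. The only minor point to confirm is that the auxiliary matrix $\mw$ qualifies for the rank assumption in Lemma \ref{lemma:rip:low-F}, which it does trivially since $\mathrm{rank}(\mw) \le 1$. The proof is really a clean illustration of the general principle that the RIP, although stated as a Frobenius-norm concentration on low-rank matrices, automatically upgrades to an operator-norm (or mixed-norm) statement whenever one side of a product can be probed by rank-one test matrices.
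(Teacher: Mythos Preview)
Your proof is correct and matches the paper's approach: the paper states that this lemma follows from Lemma~\ref{lemma:rip:low-F} ``through selecting $\mz$,'' which is precisely your reduction via the variational characterization of the operator norm and testing against the rank-one matrix $\ub(\mz\vbb)^\top$.
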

\begin{lemma}\label{lemma:rip:full-F}
    Under the assumption of this subsection, if $\mx,\my$ are $ d\times d$ matrices and $ \my$ has rank at most $r$, then
    \begin{equation}
        \left|\langle\emap{\mx},\my\rangle\right|\le \delta\|\mx\|_* \|\my\|_F.
    \end{equation}
\end{lemma}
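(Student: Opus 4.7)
The plan is to reduce this to Lemma~\ref{lemma:rip:low-F} by expanding $\mx$ in its singular value decomposition. The key observation is that the map $\mm \mapsto \emap{\mm}$ is linear in $\mm$ (this is immediate from the definition $\emap{\mm} = \frac{1}{m}\sum_i \langle \maiet, \mm\rangle \maiet - \mm$), and the nuclear norm $\|\mx\|_*$ is exactly the $\ell_1$ norm of the singular values. So we can hope to reduce to the rank-one case summand by summand and pay only $\|\mx\|_*$ instead of $\|\mx\|_F$.

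Concretely, I would first write $\mx = \sum_{k=1}^d \sigma_k \ub_k \vb_k^\top$ with $\sigma_k \ge 0$ and unit vectors $\ub_k, \vb_k$, so that $\|\mx\|_* = \sum_k \sigma_k$ and each rank-one factor satisfies $\|\ub_k \vb_k^\top\|_F = 1$. Using linearity of $\emapt$ and of the Frobenius inner product,
\[
\langle \emap{\mx}, \my \rangle \;=\; \sum_{k=1}^d \sigma_k \, \langle \emap{\ub_k \vb_k^\top}, \my \rangle .
\]

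For each term in the sum, the first argument has rank one and the second has rank at most $r$, so Lemma~\ref{lemma:rip:low-F} applies and yields $|\langle \emap{\ub_k \vb_k^\top}, \my\rangle| \le \delta \|\ub_k \vb_k^\top\|_F \|\my\|_F = \delta \|\my\|_F$. The triangle inequality then gives
\[
|\langle \emap{\mx}, \my \rangle| \;\le\; \delta \|\my\|_F \sum_{k=1}^d \sigma_k \;=\; \delta \|\mx\|_* \|\my\|_F,
\]
which is the claim. There is no real obstacle: once one notices that the nuclear norm is exactly the $\ell_1$ aggregation of singular values and that $\emapt$ commutes with the SVD expansion, the bound is essentially a one-line reduction to the rank-bounded Lemma~\ref{lemma:rip:low-F}.
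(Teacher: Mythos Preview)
Your proposal is correct and matches the paper's approach exactly: the paper states that this lemma is derived from Lemma~\ref{lemma:rip:low-F} by decomposing $\mx$ into a series of rank-1 matrices, which is precisely the SVD-based reduction you carry out.
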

\begin{lemma}\label{lemma:rip:full-2}
    Under the assumption of this subsection, if $\mx$ id $ d\times d$ matrix  and $\mz$ is a $d\times d'$ matrix, then
    \begin{equation}
        \normFull{\emap{\mx}\mz}\le \delta\|\mx\|_* \|\mz\|.
    \end{equation}
\end{lemma}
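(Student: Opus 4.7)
The plan is to reduce this general-matrix statement to the low-rank version (Lemma~\ref{lemma:rip:low-2}) via an SVD decomposition of $\mx$ into rank-one pieces, and then use linearity of $\emap{\cdot}$ together with the triangle inequality on the operator norm. Concretely, I would first write $\mx = \sum_{i=1}^{d} \sigma_i \bu_i \bv_i^\top$, where $\sigma_i \ge 0$ are the singular values of $\mx$ and $\bu_i,\bv_i\in\mathbb R^d$ are the corresponding left/right singular vectors; by definition $\|\mx\|_* = \sum_i \sigma_i$, and each summand $\sigma_i \bu_i \bv_i^\top$ is a matrix of rank at most $1 \le r$.

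Next, since $\emap{\cdot}$ is a linear operator on matrices (it is an average of the linear maps $\mm \mapsto \langle \maiet,\mm\rangle \maiet$ minus the identity), we have
\begin{equation*}
\emap{\mx}\,\mz \;=\; \sum_{i=1}^{d} \emap{\sigma_i \bu_i \bv_i^\top}\,\mz.
\end{equation*}
Applying Lemma~\ref{lemma:rip:low-2} to each rank-one matrix $\sigma_i \bu_i \bv_i^\top$ (which is admissible since rank $1 \le r$) yields
\begin{equation*}
\bigl\|\emap{\sigma_i \bu_i \bv_i^\top}\,\mz\bigr\| \;\le\; \delta \,\bigl\|\sigma_i \bu_i \bv_i^\top\bigr\|_F \,\|\mz\| \;=\; \delta\,\sigma_i\,\|\mz\|,
\end{equation*}
using that $\bu_i,\bv_i$ are unit vectors so $\|\sigma_i \bu_i \bv_i^\top\|_F = \sigma_i$.

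Summing via the triangle inequality for the operator norm gives
\begin{equation*}
\bigl\|\emap{\mx}\,\mz\bigr\| \;\le\; \sum_{i=1}^d \bigl\|\emap{\sigma_i \bu_i \bv_i^\top}\,\mz\bigr\| \;\le\; \delta\,\|\mz\|\,\sum_{i=1}^d \sigma_i \;=\; \delta\,\|\mx\|_*\,\|\mz\|,
\end{equation*}
which is the claimed bound. There is no real obstacle here: the entire argument is a one-line reduction once the SVD is written down, and the only thing to be careful about is that the RIP parameter $r \ge 1$ so that rank-one matrices lie in the valid regime of Lemma~\ref{lemma:rip:low-2}. An essentially identical argument (with $\|\my\|_F$ replacing $\|\mz\|$ and Lemma~\ref{lemma:rip:low-F} replacing Lemma~\ref{lemma:rip:low-2}) would yield Lemma~\ref{lemma:rip:full-F} from Lemma~\ref{lemma:rip:low-F}, so the two ``full'' lemmas share the same proof template.
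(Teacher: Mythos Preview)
Your proposal is correct and follows essentially the same approach as the paper: the paper states that Lemmas~\ref{lemma:rip:low-2}--\ref{lemma:rip:full-2} ``can be derived from Lemma~\ref{lemma:rip:low-F} through selecting $\mz$ or decomposing $\mx$ into a series of rank-1 matrices,'' which is exactly your SVD-plus-triangle-inequality reduction. The only cosmetic difference is that you invoke Lemma~\ref{lemma:rip:low-2} on each rank-one piece rather than going all the way back to Lemma~\ref{lemma:rip:low-F}, but since the former is itself an immediate consequence of the latter, the arguments are the same in substance.
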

Lemma~\ref{lemma:rip:low-F} is from  \citet{candes2008restricted}. The other three lemma can be derived from Lemma~\ref{lemma:rip:low-F} through selecting $\mz$ or decomposing $\mx$ into a series of rank-1 matrices \citep{li2018algorithmic}.

\subsection{Additional Auxiliary Sequences}\label{sec:auxiliary-sequence}
In this section, we additionally define some auxiliary sequences. Some for calibrating the dynamics, that is, describe how the dynamic progresses without error or randomness and track the trajectories with the accumulation of error. Some are used for characterizing the impact of randomness on the dynamic.

The next two deterministic sequences help to track the dynamic of singular values of $ \mrt$ when it accumulates errors in each step.

\begin{definition}\label{definition:ucr-ocr}
    We define the following two deterministic sequences:
    \begin{equation}
        \begin{aligned}
            &\ocR_{t+1}=(1-\eta\ocR_t^2+\eta)\overline\cR_t+\frac{\eta}{32}\log^{-1}\left(\frac{1}{\alpha}\right)\ocR_t, &\quad\ocR_0=\alpha\\
            &\ucR_{t+1}=(1-\eta\underline\cR_t^2+\eta)\underline\cR_t-\frac{\eta}{32}\log^{-1}\left(\frac{1}{\alpha}\right)\ocR_t,&\quad\ucR_0=\alpha.\\
        \end{aligned}
    \end{equation}
\end{definition}

The next lemma shows that the deviation between $ \ucR_t$ and $ \ocR_t$ can be bounded.

\begin{lemma}[Bounded Deviation between $ \ucR_t$ and $ \ocR_t$]\label{lemma:ucr-ocr}
    Let the sequence $ \cR_t$ be defined as~\eqref{eq:def-of-cr}. Let $ T_1$ be the first time $ \cR_t$ enters the region $ (\frac{1}{3}-\eta, \frac13 )$, we have
    \begin{equation}
        \begin{aligned}
            \ocR_t \le (1+1/6)\cR_t;\\
            \ucR_t \ge (1-1/6)\cR_t,
        \end{aligned}
    \end{equation}
     for any $t=0,\ldots, T_1$.
\end{lemma}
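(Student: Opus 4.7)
The plan is a simultaneous induction on $t = 0, 1, \ldots, T_1$, maintaining both $\ocR_t \le (1 + 1/6)\cR_t$ and $\ucR_t \ge (1 - 1/6)\cR_t$. The base case at $t=0$ is immediate since all three sequences equal $\alpha$. Before turning to the inductive step I would record three crude but uniform facts for phase~1: (i) by the very definition of $T_1$, $\cR_t \le 1/3$ throughout; (ii) $\cR_t$ is monotone increasing since $1 + \eta - \eta \cR_t^2 > 1$ whenever $\cR_t < 1$; and (iii) the one-sided bound $\cR_{t+1} \ge (1 + 8\eta/9)\cR_t$ yields a horizon estimate $T_1 \le C\log(1/\alpha)/\eta$ for some modest absolute constant $C$.

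For the upper bound, I would work with the ratio $g_t := \ocR_t/\cR_t$. A direct computation gives
\begin{equation*}
g_{t+1} \;=\; g_t \cdot \frac{1 + \eta - \eta\, g_t^2 \cR_t^2 + \tfrac{\eta}{32\log(1/\alpha)}}{1 + \eta - \eta \cR_t^2}.
\end{equation*}
The inductive hypothesis $g_t \ge 1$ forces $1 + \eta - \eta g_t^2 \cR_t^2 \le 1 + \eta - \eta \cR_t^2$, and the denominator stays at least $1$ since $\cR_t^2 \le 1/9$. These two facts collapse the ratio to $g_{t+1} \le g_t\bigl(1 + \tfrac{\eta}{32\log(1/\alpha)}\bigr)$, and iterating together with $1+x \le e^x$ yields
\begin{equation*}
g_{T_1} \;\le\; \exp\!\bigl(T_1\,\eta/(32\log(1/\alpha))\bigr) \;\le\; e^{C/32},
\end{equation*}
which is comfortably below $7/6$ for the $C$ produced by step~(iii).

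For the lower bound, I would set $h_t := \ucR_t/\cR_t$ and first verify by a separate one-line induction that $h_t \le 1$ throughout, using that $\ucR_t$ differs from $\cR_t$ only by an accumulated nonnegative subtraction. With $h_t \le 1$ in hand, $1 + \eta - \eta h_t^2 \cR_t^2 \ge 1 + \eta - \eta \cR_t^2$, and therefore
\begin{equation*}
h_{t+1} \;\ge\; h_t - \frac{\eta\, g_t /(32\log(1/\alpha))}{1 + \eta - \eta \cR_t^2} \;\ge\; h_t - \frac{7\eta}{192\log(1/\alpha)},
\end{equation*}
using $g_t \le 7/6$ from the upper induction and the denominator bound $\ge 1$. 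Telescoping gives $h_{T_1} \ge 1 - 7T_1\eta/(192\log(1/\alpha)) \ge 5/6$ for the same constant $C$.

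The proof is essentially bookkeeping: the coefficient $1/32$ in the definitions of $\ocR_t, \ucR_t$ has been calibrated precisely so that the $\pm 1/6$ deviation window absorbs whatever horizon $T_1 \lesssim \log(1/\alpha)/\eta$ phase~1 actually requires. The only genuine technical point, beyond getting the constants right, is the need to maintain $g_t \ge 1$ and $h_t \le 1$ along the induction so that the sign of the comparison $g_t^2 \cR_t^2 \gtreqless \cR_t^2$ goes the right way in each of the two halves; both follow from very short parallel inductions on the same $t$.
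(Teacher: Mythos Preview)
Your proposal is correct and follows essentially the same approach as the paper: both argue via the ratios $\ocR_t/\cR_t$ and $\ucR_t/\cR_t$, bound the first multiplicatively and the second additively (using the already-established upper bound to control the $\ocR_t$ appearing in the subtracted term), and close with the horizon estimate $T_1 \lesssim \eta^{-1}\log(1/\alpha)$. If anything, your write-up is slightly more careful in explicitly noting the auxiliary facts $g_t \ge 1$ and $h_t \le 1$, which the paper uses implicitly.
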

\begin{proof}
    Fist, for $ \ocR_t$ have that
    \begin{equation}
        \begin{aligned}
            \frac{\ocR_{t+1}}{\cR_{t+1}}=\frac{(1-\eta\ocR_t^2+\eta)\overline\cR_t+\frac{\eta}{32}\log^{-1}\left(\frac{1}{\alpha}\right)\ocR_t}{(1-\eta\cR_t^2+\eta)\cR_t}\le \left(1+\frac{\eta}{32}\log^{-1}\left(\frac{1}{\alpha}\right)\right)\frac{\ocR_{t}}{\cR_{t}}\\
        \end{aligned}
    \end{equation}

    It takes $T_1 \le \frac{4}{\eta}\log\left(\frac{1}{\alpha}\right)$ steps for $ \cR_t$ to reach $ (\frac{1}{3}-\eta, \frac{1}{3})$. We can conclude that
    \begin{equation}
        \begin{aligned}
            \frac{\ocR_{T_1}}{\cR_{T_1}}\le \left(1+\frac{\eta}{32}\log^{-1}\left(\frac{1}{\alpha}\right)\right)^{T_1}\le \exp(1/8)<1+\frac{1}{6}\\
        \end{aligned}
    \end{equation}
    where we use $ 1-x \ge \exp(-2x), 1+x\ge \exp(\frac{x}{2})$ for $x\in [0,1/2]$.
    Similarly, For $ \ucR_t$, we have
    \begin{equation}
        \ucR_{t+1}=(1-\eta\underline\cR_t^2+\eta)\underline\cR_t-\frac{\eta}{32}\log^{-1}\left(\frac{1}{\alpha}\right)\ocR_t\ge (1-\eta\underline\cR_t^2+\eta)\underline\cR_t-\frac{\eta}{32}\cdot\frac{7}{6}\log^{-1}\left(\frac{1}{\alpha}\right)\cR_t,
    \end{equation}
    and
    \begin{equation}
        \begin{aligned}
            \frac{\ucR_{t+1}}{\cR_{t+1}}=\frac{(1-\eta\ucR_t^2+\eta)\ucR_t-\frac{\eta}{32}\cdot\frac{7}{6}\log^{-1}\left(\frac{1}{\alpha}\right)\cR_t}{(1-\eta\cR_t^2+\eta)\cR_t}\ge
            \frac{\ucR_{t}}{\cR_{t}} - \frac{\eta}{32}\cdot\frac{7}{6}\log^{-1}\left(\frac{1}{\alpha}\right),
        \end{aligned}
    \end{equation}
    which implies
    \begin{equation}
        \frac{\ucR_{T_1}}{\cR_{T_1}}\ge 1-\frac{7}{48}>1-\frac{1}{6}.
    \end{equation}
    One can also see that, $ \ocR_t \le (1+1/6)\cR_t$ and $ \ucR_t \ge (1-1/6)\cR_t$ holds for any $ t\le T$, which completes our proof.

\end{proof}

Next, we formally define the calibration line $ \cLt$. In later parts, we can show that the norm of each column of $ \mqt$ behaves like a biased random walk with reflecting barrier $ \cLt$.
\begin{definition}
    Let $ \alpha, \cR$ be defined as above. For $t=0,1,\ldots$, we define the calibration line:
    \begin{equation}
    \label{eq:def-clt}
        \cLt=\alpha \vee 40 M \delta \sqrt{r_1+r_2} \cR_t.
    \end{equation}
\end{definition}

Next, we define a stochastic process $ q_i^t$ based on $ \mSigma_t$. The reason why we define this sequence is that though the randomness only directly affects $ \mqt$, the dynamic of $ \mEt$ and $ \mrt$ also shares the randomness, therefore the dynamics become difficult to reason about since they are deeply coupled. 
Therefore, we define this ``external'' random sequence to dominate them.

\begin{definition}[Controller Sequence]\label{def:controller Sequence}
    We fix the violation probability $ p=c_v/(M_2\log(d))$ for some small absolute constant $c_v$. For each fixed $ i$, we define a stochastic process $ q_i^t$ for $t=0,1,2,\ldots$ with $ q_i^0=\alpha$, and
    \begin{equation*}
        q_i^{t+1}=
        \begin{cases}
            q_i^{t} &,\quad\text{if there exists $ \tau\le t$ such that $ q_i^\tau\ge \cD\cdot \cL_\tau$}\\
            (1+\eta \mSigma_{ii}^{(t+1)}+2\eta)q_i^t\vee \cL_{t+1} &,\quad\text{otherwise} \\
        \end{cases}
    \end{equation*}
\end{definition}

$q_i^t$ is used for providing an upper bound the norm of columns of $ \mqt$. Before $ q_i^t$ hits the upper absorbing boundary $ \cD\cdot \cLt$, it can be considered as a ``reflection and absorbing'' process, with reflection barrier $ \cLt$ and absorbing barrier $ \cD\cdot\cLt$. The following lemma gives an upper bound for $\{q_i^t\}_{i,t}$:
\begin{lemma}[Upper bound for $ q_i^t$]\label{lemma:upper-bound-q}
    With probability over $0.995$ over the randomness of the $ \mSigma^{(e_t)}$, for all $i=1,2,\ldots, r_2$ and $ t=0,1,\ldots, T_2$, we have
    \begin{equation}
        q_i^t < \cD\cdot \cLt.
    \end{equation}
\end{lemma}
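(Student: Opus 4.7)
The proof of Lemma~\ref{lemma:upper-bound-q} will proceed by reducing the statement to a one-coordinate reflected random walk analysis, followed by an exponential supermartingale (Lundberg-type) estimate, and closed by a union bound over $i\in[r_2]$ and the time horizon. Fix $i$ and pass to the log scale: set $Y_t^i := \log(q_i^t/\cLt)$, so $Y_0^i=0$, and, before the absorbing event $\{q_i^\tau \ge \cD\cdot\cL_\tau\}$ ever fires,
\[
Y_{t+1}^i = \max\bigl(Y_t^i + W_t,\ 0\bigr), \quad W_t := \log\bigl(1+2\eta+\eta\mSigma_{ii}^{(e_{t+1})}\bigr) - \log(\cL_{t+1}/\cLt).
\]
Thus $Y_t^i$ is a reflected random walk at level $0$ starting at $0$, and the target event becomes $\{\sup_{t\le T_2}Y_t^i \ge \log\cD\}$.

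The first key step is to show that the reflected walk has negative drift $\E[W_t\mid\mathcal F_t]\le -c\eta$ for some absolute $c>0$. Taylor expand $\log(1+x)$ at $x = 2\eta + \eta\mSigma_{ii}^{(e)}$ (whose magnitude is $\le 3\eta < 1/2$ in the admissible window); taking expectation over $e$ and letting $\mu_i,v_i$ denote the mean and variance of $\mSigma_{ii}^{(e)}$ yields
\[
\E_e\bigl[\log(1+2\eta+\eta\mSigma_{ii}^{(e)})\bigr] \le \eta(2+\mu_i) - \tfrac{\eta^2}{2}(2+\mu_i)^2 - \tfrac{\eta^2}{2}v_i + O(\eta^3).
\]
Assumption~\ref{cond:regularity-on-sigma} forces $v_i \ge M_2(1+|\mu_i|)$, and $\eta > 24 M_2^{-1}$ then makes the variance correction $\eta^2 v_i/2 \ge 12\eta(1+|\mu_i|)$ dominate the first-order term $\eta(2+\mu_i)$; the upper bound $\eta<(1/64)M_1^{-1}$ ensures the higher-order terms are negligible. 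Since $\cLt$ is non-decreasing in $t$, the calibration shift $-\log(\cL_{t+1}/\cLt)\le 0$ only reinforces negativity, giving $\E[W_t\mid\mathcal F_t]\le -c\eta$.

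With $|W_t|=O(\eta)$ and a uniform negative drift, Hoeffding's lemma yields $\E[e^{\lambda W_t}\mid\mathcal F_t]\le 1$ for a suitable $\lambda = \Theta(1/\eta)$. Using the standard representation $Y_t^i = \max_{0\le s\le t}(Z_t - Z_s)$ with $Z_u = \sum_{r<u} W_r$, each process $\{e^{\lambda(Z_t - Z_s)}\}_{t\ge s}$ is a non-negative supermartingale starting at $1$; Doob's maximal inequality therefore yields $\P\bigl(\sup_{s\le t\le T_2}(Z_t-Z_s)\ge \log\cD\bigr)\le \cD^{-\lambda}$ for every starting index $s$. A union bound over the $T_2+1$ starting indices and over $i\in[r_2]$ produces $\P(\text{claim fails}) \le r_2(T_2+1)\cD^{-\lambda}$; picking $\lambda$ sufficiently large (possible since $1/\eta$ is bounded below by $64 M_1$) and recalling $\cD=p^{-1.5}r_2^{1.5}$ makes the right hand side $\le (1+T_2)p$.

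The main obstacle will be calibrating the drift constant $c$ and the Chernoff rate $\lambda$ tightly enough that $r_2(T_2+1)\cD^{-\lambda} \le (1+T_2)p$ holds while still respecting the narrow admissible window $\eta\in(24M_2^{-1},(1/64)M_1^{-1})$ (which itself demands $M_1 \ll M_2$, the strong-heterogeneity assumption). The coupling across coordinates $i$ through the shared environment sequence $\{e_t\}$ causes no trouble, as the per-coordinate exponential bound needs no joint independence across $i$; and because $q_i^t$ was purpose-built to depend only on $\{\mSigma^{(e_t)}\}$ and not on the ambient objects $\mUt,\mrt,\mEt$, the argument for this lemma decouples cleanly from the rest of the dynamic analysis.
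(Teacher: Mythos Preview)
Your approach is essentially the paper's, recast on the log scale: the paper defines a family of ``restarted'' multiplicative processes $X_{k,t}$ (one per restart time $k$), shows that $X_{k,t}^{2/3}$ is a nonnegative supermartingale, and then applies Markov/optional stopping plus a union bound over $k\in\{0,\ldots,T_2\}$ and $i\in[r_2]$, using the exact identity $r_2\,\cD^{-2/3}=p$; your Lindley representation $Y_t=\max_{s\le t}(Z_t-Z_s)$ is precisely this restart decomposition, and your exponential supermartingale $e^{\lambda(Z_t-Z_s)}$ is their power-law supermartingale with $\lambda$ playing the role of $2/3$. The one substantive difference is how the supermartingale inequality is verified: the paper Taylor-expands $x\mapsto x^{2/3}$ directly (concavity gives the negative second-order term), which delivers $\E[(1+2\eta+\eta\mSigma_{ii}^{(e)})^{2/3}]\le 1$ cleanly under the stated window $\eta\in(24/M_2,\,1/(64M_1))$, whereas your drift-plus-Hoeffding route loses constants and, as you correctly flag, may not reach $\lambda\ge 2/3$ without extra care --- swapping in the direct Taylor bound on $(1+x)^{2/3}$ resolves the calibration you were worried about.
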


To prove this, we define a family of random sequences $ X_{k,t}^i$.

\begin{definition}
    For each $ i=1,\ldots, r_2$, we construct a family of non-negative stochastic processes $ \{ X^{i}_{k,t}\}_{t=0}^{T_2}$ for $ k=0, \ldots, T_2$ as follows:
    \begin{equation}
        X_{k,t}^i=\begin{cases}
            \cLt &,\quad 0\le t \le k \le T_2\\
            (1+\eta \mSigma_{ii}^{(e_t)}+2\eta) X_{k,t-1}^i&,\quad 0\le k < t \le T_2
        \end{cases}
    \end{equation}
    \end{definition}
    $(X_{k,t}^i)_{k,t\in[T_2]}$ can be expressed as the following form:
    {\footnotesize
    \begin{equation*}
        \hspace{-0.8em}\Bigl(X_{k,t}^i\Bigr)=\begin{pmatrix}
            L_0 & (1+\eta \Sigma_{ii}^{(e_1)}+2\eta)L_0 & \left(\prod_{s=1}^2 (1+\eta \Sigma_{ii}^{(e_s)}+2\eta)\right)\cdot L_0 & \cdots &   \left(\prod_{s=1}^T (1+\eta \Sigma_{ii}^{(e_s)}+2\eta)\right)\cdot L_0 \\
            L_1 &L_1  & (1+\eta \Sigma_{ii}^{(e_2)}+2\eta)L_1 & \cdots &  \left(\prod_{s=2}^T (1+\eta \Sigma_{ii}^{(e_s)}+2\eta)\right)\cdot L_1 \\
            L_2 &L_2 & L_2 & \cdots  & \left(\prod_{s=3}^T (1+\eta \Sigma_{ii}^{(e_s)}+2\eta)\right)\cdot L_2 \\
             &   &   &  &  \\
            \vdots &\vdots  & \vdots & \ddots & \\
             &  &  &  &  \\
            L_{T_2} & L_{T_2} & L_{T_2} & \cdots & L_{T_2}
           \end{pmatrix}.
    \end{equation*}}
    
It can be noticed that $ X_{k,t}^i$ and $ q_i^t$ have close relations. At the beginning we have $ q_i^t=X_{0,t}^i$, $t=0,1,\ldots$, progress along the 0-th row. If $ q_i^{t}$ gets lower than the calibration line $ L_{t}$ at some timestep $t=t_0$, it switches to the the $t_0$-th row $X^i_{t_0,t},t=t_0,\ldots$ until the next time it gets lower than calibration line $L_t$, and so on. We can see that $ q_i^{t}$ always progress along a certain row. Thus
\begin{equation}
    \P\left(\exists k~\text{such that}~q_i^t=X_{k,t}^i\right)=1,\quad \forall i\in[r_2]~\text{and}~t\in[T_2].
\end{equation}
Theerfore, any uniform bound of $ X_{k,t}^i$ can also be a bound for $ q_i^t$. Later in the context, we analyze $X_{k,t}^{i}$ for each $i$ so we omit the argument $i$ in $X_{k,t}^{i}$ for convenient notation.

We define $ \sigma$-field $ \mathcal F_t = \sigma(\mSigma^{(e_0)}, \ldots, \mSigma^{(e_{t-1})})$ for $ t=1,\ldots T_2$ and $ \mathcal F_0=\sigma(\emptyset)$. Then we have $ \mathcal F_0\subset \mathcal F_1 \subset\cdots \subset\mathcal F_{T_2}$ form a filtration. The next lemma shows that a certain power of $\{X_{k,t}\}_t$ is a non-negative supermartingale w.r.t. $ \mathcal F_t$.
\begin{lemma}\label{lemma:x is martingale}
    For each $ i=1,\ldots, r_2$ and $ k=0,\ldots, T_2$, if the learning rate $ \eta$ satisfies $ \eta\in(\frac{24}{M_2},\frac{1}{64M_1})$, then the process  $ \{ X_{k,t}^{2/3}\}_{t=0}^{T_2}$ is a non-negative supermartingale with respect to $ \mathcal F_t$.
\end{lemma}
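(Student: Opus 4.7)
The plan is to reduce the supermartingale property to a single scalar inequality about the multiplicative factor $1+\eta\mSigma_{ii}^{(e)}+2\eta$, and then to derive that inequality by combining the concavity of $y\mapsto y^{2/3}$ with the variance lower bound built into Assumption~\ref{cond:regularity-on-sigma}.

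First I would verify the preliminaries. Under $\eta \le 1/(64M_1)$ and $|\mSigma_{ii}^{(e)}|\le M_1$, we have $|\eta \mSigma_{ii}^{(e)}|\le 1/64$, so $1+\eta\mSigma_{ii}^{(e)}+2\eta \in [63/64+2\eta,\,65/64+2\eta]$ is bounded strictly away from $0$. Together with $\cL_k>0$ this gives $X_{k,t}>0$ and hence $X_{k,t}^{2/3}$ is well-defined and non-negative. Since $X_{k,t+1}$ depends only on $\mSigma^{(e_1)},\dots,\mSigma^{(e_{t+1})}$, adaptedness w.r.t.~$\mathcal F_{t+1}$ is immediate, and one also checks integrability from boundedness.

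Next I would split the supermartingale check into the frozen and multiplicative phases. For $t+1\le k$, both $X_{k,t}$ and $X_{k,t+1}$ equal the deterministic constant $\cL_k$, so $\E[X_{k,t+1}^{2/3}\mid \mathcal F_t]=X_{k,t}^{2/3}$ trivially. For $t\ge k$ the update is $X_{k,t+1}=(1+\eta\mSigma_{ii}^{(e_{t+1})}+2\eta)X_{k,t}$; since $X_{k,t}$ is $\mathcal F_t$-measurable and $\mSigma^{(e_{t+1})}$ is independent of $\mathcal F_t$, we obtain
\begin{equation*}
\E[X_{k,t+1}^{2/3}\mid \mathcal F_t] \;=\; X_{k,t}^{2/3}\cdot \E_{e\sim D}\!\left[(1+\eta\mSigma_{ii}^{(e)}+2\eta)^{2/3}\right].
\end{equation*}
Thus the whole claim reduces to the single inequality
\begin{equation*}
\E_{e\sim D}\!\left[(1+\eta\mSigma_{ii}^{(e)}+2\eta)^{2/3}\right]\;\le\;1. \qquad (\star)
\end{equation*}

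To prove $(\star)$, write $Y=1+\eta\mSigma_{ii}^{(e)}+2\eta$ and expand $y\mapsto y^{2/3}$ by Taylor's theorem with integral remainder around $\E[Y]$. Concavity ($f''(y)=-\tfrac{2}{9}y^{-4/3}<0$) and the fact that $Y$ is confined to a small neighbourhood of $1$ (by $\eta\le 1/(64M_1)$) give a constant $c_1>0$ with
\begin{equation*}
\E[Y^{2/3}] \;\le\; (\E[Y])^{2/3} - c_1\,\var(Y)
\;\le\; 1 + \tfrac{2}{3}\eta\bigl(\E[\mSigma_{ii}^{(e)}]+2\bigr) - c_1\,\eta^2\,\var(\mSigma_{ii}^{(e)}),
\end{equation*}
where the last step uses the tangent-line bound $(1+u)^{2/3}\le 1+\tfrac{2}{3}u$. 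Invoking the heterogeneity assumption $\var(\mSigma_{ii}^{(e)})\ge M_2(1+|\E[\mSigma_{ii}^{(e)}]|)$ together with $|\E[\mSigma_{ii}^{(e)}]|\le M_1\le C_0 M_2$, the ratio $(\E[\mSigma_{ii}^{(e)}]+2)/\var(\mSigma_{ii}^{(e)})$ is at most a universal constant times $1/M_2$. Hence for $\eta$ large enough in units of $1/M_2$, namely $\eta\ge 24/M_2$, the negative quadratic term dominates the linear term and $(\star)$ follows.

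The main technical obstacle is bookkeeping the constants: obtaining the explicit threshold $24/M_2$ requires a sharp lower bound on $c_1$ (equivalently, a tight upper bound on $(1+\xi)^{-4/3}$ on the range of $Y$) and a clean upper bound on $(\E[\mSigma_{ii}^{(e)}]+2)/\var(\mSigma_{ii}^{(e)})$ using $|\E\mSigma_{ii}^{(e)}|\le M_1$. The step-size constraint $\eta\le 1/(64M_1)$ is precisely what keeps $Y$ in an interval on which $y^{2/3}$ has essentially the same quadratic modulus of concavity as at $y=1$, so the two parts of the interval $(24/M_2,\,1/(64M_1))$ play complementary roles — the lower bound supplies variance amplification and the upper bound supplies strong concavity.
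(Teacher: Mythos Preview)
Your proposal is correct and follows essentially the same route as the paper: reduce to the scalar inequality $\E_e[(1+\eta\mSigma_{ii}^{(e)}+2\eta)^{2/3}]\le 1$, then use a second-order Taylor bound exploiting the concavity of $y\mapsto y^{2/3}$ together with the heterogeneity lower bound on $\var_e[\mSigma_{ii}^{(e)}]$. The only cosmetic difference is that the paper expands $x^{2/3}$ directly around $x=1$ (obtaining the explicit coefficient $\tfrac14(1-\gamma)\gamma=\tfrac1{18}$ from $|x-1|<1/16$), whereas you first expand around $\E[Y]$ and then bound $(\E Y)^{2/3}$ by its tangent line at $1$; both yield the same linear-plus-quadratic inequality and the same threshold $\eta>24/M_2$ (and your detour through $|\E\mSigma_{ii}|\le M_1\le C_0M_2$ is not actually needed, since $(2+\E\mSigma_{ii})/\var(\mSigma_{ii})\le 2/M_2$ already follows from the variance lower bound alone).
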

\begin{proof}
    First, its easy to verify the adaptiveness $ X_{k,t}^{2/3} \in \mathcal F_t$  since $ \mSigma_{ii}^{(t)}\in \mathcal F_t$ for all $ t=0,1,\ldots T_2$.  Next, note that
    \begin{equation}
        \E\left[ X_{k,t+1}^{2/3} |  \mathcal F_t\right]=\begin{cases}
            X_{k,t}^{2/3}       &,t+1\le k\\
            \E \left((1+\eta (\mSigma_{ii}^{(e_{t})}) + 2\eta )\right)^{2/3} X_{k,t}^{2/3}        &, t\ge k
        \end{cases}
    \end{equation}
    So it suffices to prove that 
    \begin{equation*}
        \E_{e\sim  D }\left[(1+\eta \mSigma^{(e_{t})}_{ii} + 2\eta )^{2/3}\right]\le 1.
    \end{equation*}
    For any $ \gamma\in(0,1)$ and $ |x-1|<\frac{1}{16}$, from Taylor's expansion, we have
    \begin{equation}
        \begin{aligned}
            x^{1-\gamma}
            &\le 1+ (1-\gamma)(x-1)-\frac14 (1-\gamma)\gamma(x-1)^2.\\
        \end{aligned}
    \end{equation}
    Therefore, 
    \begin{equation}
        \E_{e_t} \left[(1+\eta \mSigma^{(e_{t})}_{ii}+2\eta)^{1-\gamma} \right]\le 1+ \eta(1-\gamma)(2+\E \mSigma^{(e_{t})}_{ii})-\frac14 \eta^2(1-\gamma)\gamma \var_{e_t}[\mSigma^{(e_{t})}_{ii}].
    \end{equation}
    Hence, it suffices to choose $ \eta,\gamma$ such that
    \begin{equation}
        (2+\E \mSigma^{(e_{t})}_{ii})\le \frac{1}{4}\eta\gamma \var[\mSigma^{(e_{t})}_{ii}], \quad \eta<\frac{1}{64M_1}.
    \end{equation}
    When $ \gamma =\frac{1}{3}$, $ \eta\in(\frac{24}{M_2},\frac{1}{64M_1})$ suffices. Hence we prove $ \E\left[(1+\eta \mSigma^{(e_{t})}_{ii} + 2\eta )^{2/3}\right]\le 1$ and we can conclude that
    \begin{equation}
        0 \le \E\left[ X_{k,t+1}^{2/3} |  \mathcal F_t\right] \le  X_{k,t}^{2/3}.
    \end{equation}
\end{proof}
Now we are ready to prove Lemma~\ref{lemma:upper-bound-q}.
\begin{proof}[Proof of Lemma~\ref{lemma:upper-bound-q}]
From the above observations, before $ q_i^t$ hits the upper absorbing boundary $ \cD\cdot \cLt$, there always exists some $k$ such that $ q_i^t=X_{k,t} $. Therefore, $ q_i^t$ hits $\cD\cdot \cLt$ implies there exists some $ k$ that $ X_{k,t}$ hits $\cD\cdot \cLt$. So it suffices to bound $ X_{k,t}$.

For any fixed $k=0,...,T_2$, we denote two stopping times:
    \begin{equation}
        \begin{aligned}
            \tau_k^{0}&\defeq T_2\wedge\min_{k\le t\le T_2}\{X_{k,t}^{2/3} < (\cLt)^{2/3} \};\\
            \tau_k^{1}&\defeq T_2\wedge\min_{k\le t\le T_2}\{X_{k,t}^{2/3} \ge (\cD \cdot \cLt)^{2/3}\}.
        \end{aligned}
    \end{equation}
    One gets that
    \begin{equation}\begin{aligned}
        \P \left(\tau_k^{1} <\tau_k^{0}\right)\overset{(a)}{\le} \frac{1}{(\cD \cdot L_k)^{2/3}} \E X_{k,\tau_k^{1} \wedge\tau_k^{0}}^{2/3}  &\overset{(b)}{\le} \frac{1}{(\cD \cdot L_k)^{2/3}} \E X_{k,0}^{2/3}
        \\&\overset{(c)}{\le} \frac{( L_k)^{2/3}}{(\cD\cdot \cLt)^{2/3}}\le (\cD)^{-{2/3}}.
    \end{aligned}
    \end{equation}
    Where the inequality $ (a)$ is from  Markov's inequality. Inequality $ (b)$ is from the optional stopping time theorem for supermartingales and inequality $ (c)$ is from the fact that $ \cLt$ is non-decreasing. Therefore, we can conclude that:
    \begin{equation}
        \begin{aligned}
            &\P(\text{$\exists i\le r_2,\tau\le T_2$ such that $ q_i^\tau \le \cD\cdot \cL_\tau$})\\
            &\le r_2\P(\text{$ \exists k$ such that $\tau_k^{1} <\tau_k^{0} $}) \\
            & \le r_2\sum_{k=0}^{T_2-1} \P \left(\tau_k^{1} <\tau_k^{0}\right)\\
            & \le r_2 T_2 (\cD)^{-2/3}\\
            &\le T_2 p
        \end{aligned}
    \end{equation}
    where the first inequality is simply a union bound over $i=1,2,...,r_2$. Then
    \begin{equation}
            T_2 p \le O(\eta^{-1}\log(d))\frac{c_v}{M_2\log(d)}\le
            O(M_2\log(d))\frac{c_v}{M_2\log(d)}
            \le 0.01,
    \end{equation}
    where the constant hidden in $O(\cdot)$ only depends on the choice of $\alpha$. Since $\log(1/\alpha)\le 4\log(d)$, the constant hidden in $O(\cdot)$ is absolute. Therefore, the last inequality holds with sufficiently small $c_v$, which does not depend on other parameters.
\end{proof}

\subsection{Useful Lemmas}\label{sec:useful-lemmas}
In this part we bound some quantities that we frequently encounter as the error terms. These lemmas will simplify our proofs in later parts.

The next lemma helps to bound the ``interaction error'' arose from the non-orthogonality of $ \vtrue$ and $\Utrue$.
\begin{lemma}\label{lem:useful:utu-expansion}
    Let $ \mrt, \mqt, \mEt$ and $ \epsilon_1$ be defined as above. We have
    \begin{equation} 
        \left\|\mUtt\mUt -\left(\mrt\mrtt+\mqt\mqtt+\mEtt\mEt\right)\right\| \le 6\epsilon_1\|\mUt\|^2 
    \end{equation}
\end{lemma}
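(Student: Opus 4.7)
The plan is to expand the product $\mUtt\mUt$ using the decomposition $\mUt = \Utrue\mrt^\top + \vtrue\mqt^\top + \mEt$ that follows immediately from the definition of $\mEt$ in \eqref{eq:e-def}. Substituting this into $\mUtt\mUt$ and distributing yields nine terms. Three of them collapse to the desired target: the terms $\mrt\Utrue^\top\Utrue\mrt^\top$ and $\mqt\vtrue^\top\vtrue\mqt^\top$ simplify to $\mrt\mrtt$ and $\mqt\mqtt$ respectively by the orthonormality conditions $\Utrue^\top\Utrue=\mIdentity_{r_1}$ and $(\vtrue)^\top\vtrue=\mIdentity_{r_2}$ from Assumption~\ref{cond:invariant-and-spurious-space}, while $\mEtt\mEt$ is already in the target. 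What remains is to show that the six cross terms are each bounded in operator norm by $\epsilon_1\|\mUt\|^2$.

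Two of the cross terms, namely $\mrt\Utrue^\top\vtrue\mqt^\top$ and $\mqt(\vtrue)^\top\Utrue\mrt^\top$, are controlled directly by the hypothesis $\|\Utrue^\top\vtrue\|\le\epsilon_1$ together with the submultiplicativity of the operator norm and the elementary bounds $\|\mrt\|,\|\mqt\|\le\|\mUt\|$ (which follow from $\mrt=\mUtt\Utrue$, $\mqt=\mUtt\vtrue$ and the fact that $\Utrue,\vtrue$ have orthonormal columns). The remaining four cross terms involve $\mEt$ contracted against either $\Utrue$ or $\vtrue$. Here one uses the identity $\mEt=(\mIdentity-\idu-\idv)\mUt$ to compute
\begin{equation*}
\Utrue^\top\mEt = -\Utrue^\top\vtrue(\vtrue)^\top\mUt, \qquad (\vtrue)^\top\mEt = -(\vtrue)^\top\Utrue\Utrue^\top\mUt,
\end{equation*}
so that $\|\Utrue^\top\mEt\|\le \epsilon_1\|\mUt\|$ and $\|(\vtrue)^\top\mEt\|\le\epsilon_1\|\mUt\|$. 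Combining with $\|\mrt\|,\|\mqt\|\le\|\mUt\|$ bounds each of $\mrt\Utrue^\top\mEt$, $\mEtt\Utrue\mrtt$, $\mqt(\vtrue)^\top\mEt$, $\mEtt\vtrue\mqtt$ by $\epsilon_1\|\mUt\|^2$.

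Summing the six cross-term bounds via the triangle inequality produces the stated $6\epsilon_1\|\mUt\|^2$ bound. There is no genuine obstacle; the only subtlety is remembering that $\mEt$ is not exactly orthogonal to $\Utrue$ and $\vtrue$ individually (since $\Utrue$ and $\vtrue$ are not mutually orthogonal), which is precisely why the $\epsilon_1$ slack appears in the $\mEt$-cross-terms rather than their vanishing. The resulting constant $6$ is simply the number of cross terms and is not optimized.
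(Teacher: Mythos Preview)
Your proposal is correct and essentially identical to the paper's proof. The only difference is notational: the paper phrases the six cross terms as $\mUtt(\idu\idres+\idv\idres+\idres\idu+\idres\idv+\idu\idv+\idv\idu)\mUt$ with $\idu=\Utrue\Utrue^\top$, $\idv=\vtrue\vtrue^\top$, $\idres=\mIdentity-\idu-\idv$, and then observes that each projector product has operator norm $\le\epsilon_1$ (e.g.\ $\idres\idu=-\idv\idu$); your explicit computations such as $\Utrue^\top\mEt=-\Utrue^\top\vtrue(\vtrue)^\top\mUt$ are exactly these same identities written out componentwise.
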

\begin{proof}
    From the definition of $ \mrt,\mqt,\mEt$, we have
    \begin{equation}
        \begin{aligned}\label{eq:utu-expansion}
            \mUtt\mUt&=\left(\Utrue\mrt^\top + \vtrue\mqt^\top +\mEt\right)^\top \left(\Utrue\mrt^\top + \vtrue\mqt^\top +\mEt\right)\\
            &=\mrt\mrtt+\mqt\mqtt+\mEtt\mEt+\mUtt\Bigl(\idu\idres+\idv\idres\\
            &\qquad\qquad +\idres\idu+\idres\idv+\idu\idv+\idv\idu\Bigr)\mUt.\\
        \end{aligned}
    \end{equation}
    Note that
    \begin{equation}
        \left\|\idu\idv\right\| = \left\|\Utrue\Utrue^\top\vtrue\vtrue^\top\right\| \le \epsilon_1
    \end{equation}
    and
    \begin{equation}
        \left\|\idres\idu\right\| = \left\|\left(\mIdentity-\idu - \idv \right)\idu \right\| = \left\|-\idu\idv\right\|\le \epsilon_1.
    \end{equation}
    Similarly, for the other terms, we can prove that all the six terms in the bracket in the last line of \ref{eq:utu-expansion} have operator norm $ \le \epsilon_1$. This completes the proof.
\end{proof}

The next lemma helps to bound the RIP error in the dynamic of $ \mUt$.

\begin{lemma}[Upper Bound for $\emapt$]\label{lemma:useful:emap-bound}
    Under the assumption of Theorem~\ref{theorem:main}, if $\|\mEt\|,\|\mqt\|,\|\mrt\|<1.1$ and $\|\mEt\|_F^2<1$, we have that:
    \begin{equation}
        \normFull{\mUt^\top\emap{\mUt\mUt^\top-\Utrue\Utrue^\top-\vtrue\mSigma_t{\vtrue}^\top}}\le  2M_1\delta \sqrt{r_1+r_2}\|\mUt\|.
    \end{equation}
\end{lemma}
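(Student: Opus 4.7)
The plan is to decompose the symmetric matrix inside $\emapt$ as a sum of low-rank pieces, to which Lemma~\ref{lemma:rip:low-2} applies through Frobenius norms, plus one potentially full-rank residual handled by Lemma~\ref{lemma:rip:full-2} through the nuclear norm. First, I would use the decomposition $\mUt = \Utrue\mrt^\top + \vtrue\mqt^\top + \mEt$ from \eqref{eq:rq-def} and \eqref{eq:e-def} and expand $\mUt\mUtt$ into nine bilinear terms, absorbing the $\Utrue\Utrue^\top$ correction into the $\Utrue\mrt^\top\mrt\Utrue^\top$ piece and the $\vtrue\mSigma_t\vtrue^\top$ correction into the $\vtrue\mqt^\top\mqt\vtrue^\top$ piece. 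Collecting transposes yields six symmetric summands
\begin{align*}
    M_1 &= \Utrue(\mrt^\top\mrt - \mIdentity_{r_1})\Utrue^\top, &
    M_2 &= \vtrue(\mqt^\top\mqt - \mSigma_t)\vtrue^\top, \\
    M_3 &= \Utrue\mrt^\top\mqt\vtrue^\top + (\text{transpose}), &
    M_4 &= \Utrue\mrt^\top\mEtt + (\text{transpose}), \\
    M_5 &= \vtrue\mqt^\top\mEtt + (\text{transpose}), &
    M_6 &= \mEt\mEtt.
\end{align*}
Since each $\maiet$ is symmetric, $\emap{\cdot}$ preserves symmetry, so $\|\mUt^\top\emap{M}\| = \|\emap{M}\mUt\|$ and I can split through the triangle inequality over the six pieces.

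Next, for $i = 1,\ldots,5$ the matrix $M_i$ has rank at most $2(r_1+r_2)$, so Lemma~\ref{lemma:rip:low-2} yields $\|\emap{M_i}\mUt\| \le \delta\|M_i\|_F\|\mUt\|$. Combining the hypotheses $\|\mrt\|,\|\mqt\|,\|\mEt\| < 1.1$ with the uniform bound $|\mSigma^{(e)}_{ii}| \le M_1$ from Assumption~\ref{cond:regularity-on-sigma} and the elementary inequalities $\|\ab\bb\|_F \le \|\ab\|\|\bb\|_F$ and $\|\mc\|_F \le \sqrt{\mathrm{rank}(\mc)}\|\mc\|$, I would verify $\|M_1\|_F = O(\sqrt{r_1})$, $\|M_2\|_F = O(M_1\sqrt{r_2})$ (using $\|\mSigma_t\|_F \le M_1\sqrt{r_2}$), and $\|M_3\|_F,\|M_4\|_F,\|M_5\|_F = O(\sqrt{r_1+r_2})$. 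For $M_6 = \mEt\mEtt$, which may have full rank, I would instead invoke Lemma~\ref{lemma:rip:full-2}, giving $\|\emap{M_6}\mUt\| \le \delta\|\mEt\mEtt\|_*\|\mUt\| \le \delta\|\mEt\|_F^2\|\mUt\| < \delta\|\mUt\|$.

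Summing the six bounds produces an estimate of the form $\delta\|\mUt\|(C_0 + C_1 M_1)\sqrt{r_1+r_2}$ for absolute constants $C_0, C_1$, which collapses to the claimed $2M_1\delta\sqrt{r_1+r_2}\|\mUt\|$ once $M_1$ exceeds a universal threshold (consistent with $M_1$ being the sup over $e$, $i$ of $|\mSigma^{(e)}_{ii}|$ in Assumption~\ref{cond:regularity-on-sigma}). The main obstacle is precisely this bookkeeping: the cross term $M_3$ and the full-rank residual $M_6$ contribute bounds that do not carry an $M_1$ prefactor, so tightening the constants to hit the target $2$ requires absorbing them into the $M_2$ contribution. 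Everything else is a routine application of the RIP lemmas from Section~\ref{sec:rip}.
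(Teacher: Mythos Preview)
Your proposal is correct and follows essentially the same route as the paper: the paper likewise splits off $\mEt\mEtt$ as the one potentially full-rank piece handled via Lemma~\ref{lemma:rip:full-2} with $\|\mEt\mEtt\|_*=\|\mEt\|_F^2$, and bounds the remaining rank-$\le 2(r_1+r_2)$ block (your $M_1$ through $M_5$, with $\mSigma_t$ pulled out separately rather than merged into $M_2$) via Lemma~\ref{lemma:rip:low-2} and the same Frobenius-norm arithmetic. The constant-absorption caveat you flag is exactly what the paper does implicitly when it collapses $\sqrt{r_2}M_1+1+3\sqrt{r_1}+4\sqrt{r_2}+8(\sqrt{r_1}+\sqrt{r_2})+8\sqrt{r_1}$ into $2M_1\sqrt{r_1+r_2}$.
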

    \begin{proof}
        \begin{equation}\label{eq:sep-emap}
            \begin{aligned}
                &\quad\normFull{\emap{\mUt\mUt^\top-\Utrue\Utrue^\top-\vtrue\mSigma_t{\vtrue}^\top}}\\
                &\overset{(a)}{\le}\normFull{\emap{\vtrue\mSigma_t{\vtrue}^\top}}
                +\normFull{\emap{\mEt\mEtt}}
                + \normFull{\emap{\mUt\mUt^\top-\Utrue\Utrue^\top-\mEt\mEtt}}\\
                &\overset{(b)}{\le} \delta\Bigl(\normFull{\mSigma_t}_F+ \normFull{\mEt\mEt^\top}_*\\
                &\qquad +\normFull{\mrtt\mrt-\mIdentity}_F+\normFull{\mqtt\mqt}_F+2\normFull{\mEt}\left(\|\mrt\|_F+\|\mqt\|_F\right)+2\normFull{\mqt^\top\mrt}_F\Bigr)\\
                &\le \delta(\sqrt{r_2}M_1+1+3\sqrt{r_1}+4\sqrt{r_2}+8(\sqrt{r_1}+\sqrt{r_2})+8\sqrt{r_1})\\
                &\le 2M_1\delta \sqrt{r_1+r_2},
            \end{aligned}
        \end{equation}
        where in $(a)$ we use the linearity of $\emap{\cdot}$ and the triangle inequality. In $(b)$ we use Lemma~\ref{lemma:rip:low-2} for the first term, Lemma~\ref{lemma:rip:full-2} for the second term, and the expansion:
        \begin{equation}\label{eq:uut-minus-expansion}
            \begin{aligned}
                \mUt\mUt^\top-\Utrue\Utrue^\top-\mEt\mEtt&= \Utrue(\mrt^\top\mrt-\mIdentity)\Utrue^\top+\vtrue\mqt^\top\mqt\vtrue^\top\\
                &\qquad + \mEt(\mrt\Utrue^\top + \mqt\vtrue^\top)+(\vtrue\mqt^\top+\Utrue\mrt^\top)\mEt^\top\\
                &\qquad + \vtrue\mqt^\top \mrt\Utrue^\top + \Utrue\mrt^\top \mqt\vtrue^\top.
            \end{aligned}
        \end{equation}
        for the third term which shows that $\mUt\mUt^\top-\Utrue\Utrue^\top-\mEt\mEtt $ has rank no more than $2(r_1+r_2)$. Hence we can conclude that:
    \begin{equation}
        \begin{aligned}
            \normFull{\mUt^\top\emap{\mUt\mUt^\top-\Utrue\Utrue^\top-\vtrue\mSigma_t{\vtrue}^\top}}&\le 2M_1\delta \sqrt{r_1+r_2}\cdot\|\mUt\|.
        \end{aligned}
    \end{equation}
    \end{proof}
The following lemma tells how to bound the interaction error and RIP error using the auxiliary sequences $\cR_t$ and $ \cLt$ we have already defined:
    \begin{lemma}[Bound Using calibration Line]\label{lemma:useful:calibration}
        Under the assumptions of Theorem~\ref{theorem:main}, if $ \|\mEt\|\le \|\mrt\|\le \min\{4\cR_t,1.1\}$ and $ \|\mqt\|\le \sqrt{r_2}\cD\cdot \cLt$, we have
        \begin{equation}
            \left(M_1\epsilon_1+2M_1\delta\sqrt{r_1+r_2}\right)\|\mUt\|
            \le \cLt \wedge \frac{5}{576}\log^{-1}\left(\frac{1}{\alpha}\right)\cR_t.
        \end{equation}
    \end{lemma}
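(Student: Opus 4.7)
The plan is to expand $\|\mUt\|$ via the decomposition $\mUt=\Utrue\mrt^\top+\vtrue\mqt^\top+\mEt$, plug in the hypothesized bounds on the three components, and then verify the two inequalities (bound by $\cLt$ and bound by $\tfrac{5}{576}\log^{-1}(1/\alpha)\cR_t$) separately, relying on the smallness conditions on $\delta$, $\epsilon_1$ and $\deltastar$ imposed in Theorem~\ref{theorem:main}.

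First, since $\Utrue$ and $\vtrue$ have orthonormal columns and $\|\mEt\|\le\|\mrt\|\le \min\{4\cR_t,1.1\}$, the triangle inequality gives
\begin{equation*}
\|\mUt\| \le \|\mrt\|+\|\mqt\|+\|\mEt\| \le 2\|\mrt\|+\|\mqt\|\le 8\cR_t+\sqrt{r_2}\,\cD\,\cLt.
\end{equation*}
Using $\delta>2\epsilon_1$ (assumed in Theorem~\ref{theorem:main}) and $r_1+r_2\ge 1$, I absorb the $\epsilon_1$ term to get $M_1\epsilon_1+2M_1\delta\sqrt{r_1+r_2}\le 3M_1\delta\sqrt{r_1+r_2}$, so it suffices to bound
\begin{equation*}
3M_1\delta\sqrt{r_1+r_2}\,\bigl(8\cR_t+\sqrt{r_2}\,\cD\,\cLt\bigr) = 24M_1\delta\sqrt{r_1+r_2}\,\cR_t \;+\; 3M_1\,\deltastar\,\cLt,
\end{equation*}
after recognizing $\sqrt{r_2}\cD\cdot\delta\sqrt{r_1+r_2}=p^{-1.5}r_2^2\delta\sqrt{r_1+r_2}=\deltastar$.

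Next I handle the bound by $\cLt$. Since $\cLt=\alpha\vee 40M_1\delta\sqrt{r_1+r_2}\,\cR_t$, in either branch $24M_1\delta\sqrt{r_1+r_2}\,\cR_t\le \tfrac{24}{40}\cLt$, and the second piece $3M_1\deltastar\cLt$ is absorbed using that $\deltastar M_1$ is much smaller than $1$ (which follows from $p>(C\delta r_2^2\sqrt{r_1+r_2}\log(\alpha^{-1})M_1)^{2/5}$, giving $\deltastar M_1\log(1/\alpha)<p/C$). So the sum is $\le \cLt$.

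For the second target $\tfrac{5}{576}\log^{-1}(1/\alpha)\cR_t$, I use that $\cR_t\ge\alpha$ throughout the dynamics so $\cLt\le \alpha+40M_1\delta\sqrt{r_1+r_2}\,\cR_t \le (1+40M_1\delta\sqrt{r_1+r_2})\cR_t$, giving
\begin{equation*}
3M_1\deltastar\cLt \le 3M_1\deltastar(1+40M_1\delta\sqrt{r_1+r_2})\cR_t,
\end{equation*}
which is much smaller than $\log^{-1}(1/\alpha)\cR_t$ by the same $p$-condition (which is exactly engineered so that $M_1\delta\sqrt{r_1+r_2}\log(1/\alpha)$ and $M_1\deltastar\log(1/\alpha)$ are both $\ll 1$). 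The $24M_1\delta\sqrt{r_1+r_2}\cR_t$ piece similarly satisfies $24M_1\delta\sqrt{r_1+r_2}\le\tfrac{5}{576}\log^{-1}(1/\alpha)$ by the same condition. Summing the two pieces and absorbing absolute constants yields the claimed bound.

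The only delicate point is keeping the numerical constants straight: both inequalities must hold simultaneously with the same universal constant $C$ appearing in the definition of $p$ in Theorem~\ref{theorem:main}. I expect this to be the main bookkeeping obstacle, but it is routine once the $p$-condition is applied in the form $M_1\deltastar\log(1/\alpha)<p/C$ and $M_1\delta\sqrt{r_1+r_2}\log(1/\alpha) < p^{5/2}/(Cr_2^2)$.
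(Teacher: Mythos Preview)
Your proposal is correct and follows essentially the same approach as the paper: bound $\|\mUt\|$ by the triangle inequality over the $(\mrt,\mqt,\mEt)$ decomposition, absorb $\epsilon_1$ into $\delta$ via $2\epsilon_1\le\delta$, and then verify the four sub-inequalities (two for the $\cLt$ target, two for the $\log^{-1}(1/\alpha)\cR_t$ target) using the definition of $\cLt$ and the smallness conditions on $\delta,\deltastar$ encoded in the choice of $p$. The only differences are cosmetic constants (you use $3$ where the paper uses $5/2$, hence $24$ versus $20$) and that the paper uses the slightly cleaner observation $\cLt\le\cR_t$ directly rather than your intermediate $\cLt\le(1+40M_1\delta\sqrt{r_1+r_2})\cR_t$.
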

    \begin{proof}
        From triangle inequality and the condition of this lemma 
        $(2\epsilon_1\le \delta)$, we have that:
        \begin{equation}
            \begin{aligned}
            \left(M_1\epsilon_1+2M_1\delta\sqrt{r_1+r_2}\right)\|\mUt\|&\le \frac{5}{2}M_1\delta\sqrt{r_1+r_2}(\|\mrt\|+\|\mqt\|+\|\mEt\|)\\
                &\le \frac52 M_1\delta\sqrt{r_1+r_2}(8\cR_t+\sqrt{r_2}\cD\cdot \cLt).
            \end{aligned}
        \end{equation}
        Then it suffices to check:
        \begin{equation*}
            \begin{cases}
                20M_1\delta\sqrt{r_1+r_2}\cR_t\overset{(a)}{\le}\frac{1}{2}\cLt;\\
                \frac25M_1\delta \sqrt{r_2}\cD\sqrt{r_1+r_2}\cLt\overset{(b)}{\le} \frac12 \cLt;\\
                20M_1\delta\sqrt{r_1+r_2}\cR_t\overset{(c)}{\le} \frac{5}{1152}\log^{-1}\left(\frac{1}{\alpha}\right)\cR_t;\\
                \frac25M_1\delta \sqrt{r_2}\cD\sqrt{r_1+r_2}\cLt\overset{(d)}{\le}\frac{5}{1152}\log^{-1}\left(\frac{1}{\alpha}\right)\cR_t,
            \end{cases}
        \end{equation*}
        where $ (a)$ is from the definition of $ \cLt$, $(b)$ and $(c) $ are from the assumption on $ \delta$ in Theorem~\ref{theorem:main}, and $(d)$ is from the assumption on $ \delta$ (the absolute constant $ c$ in the condition for $ \delta$) and the fact that $ \cLt\le \cR_t$. 
        Hence the proof is completed.
    \end{proof}

\subsection{Bounds of $\mqt$}\label{sec:bound-q}
For evaluating the magnitude of $ \|\mqt\|$, we consider its columns. We denote each column of $ \mqt$ as $\qb_i^{(t)} $ for $ i=1,2,\ldots ,r_2$. And use $ q_i^t$ we defined above to upper bound them. Once we provide a uniform bound for all $\qb_i^{(t)}$, we can also bound $ \|\mqt\|$.

\begin{lemma}\label{lemma:qbit}
    Under the assumption of Theorem~\ref{theorem:main}, under the event of $ q_i^t<\cD\cdot \cLt$ with $p\ge \epsilon_2^{2/3}$ for all $ i=1,2,\ldots, r_2$ and $ t=0,1,\ldots,T$, if  $\|\mEt\|\le\|\mrt\|\le 1.1 $ ,$ \|\mEt\|_F^2<1$ and $ \|\qb_j^t\|\le q_j^t$ for all $ j=1,\ldots,r_2$, then we have
    \begin{equation}
        \|\qb_i^{(t+1)}\|\le q_i^{t+1} < \cD\cL_{t+1}
    \end{equation}
    for all $ i=1,2,\ldots, r_2$ .
\end{lemma}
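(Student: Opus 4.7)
My plan is to derive a column-wise recursion for $\qb_i^{(t)}$ from the $\mqt$ dynamics \eqref{eq:dynamic-of-q} and compare it term-by-term against the auxiliary scalar process $q_i^t$. The first step is to split the diagonal part of $\mSigma_t$ from its off-diagonal part, so that the leading multiplicative factor of $\qb_i^{(t)}$ becomes exactly the $(1+\eta(\mSigma_t)_{ii})$ factor used in the update rule for $q_i^t$. Writing $\vbb_i^{*}$ for the $i$-th column of $\vtrue$, this gives
\begin{equation*}
\qb_i^{(t+1)} = \bigl((1+\eta(\mSigma_t)_{ii})\mIdentity - \eta \mUtt\mUt\bigr)\qb_i^{(t)} + \eta \sum_{j\neq i}(\mSigma_t)_{ij}\qb_j^{(t)} + \eta \mUtt\Utrue\Utrue^\top \vbb_i^{*} - \eta\mUtt\emapt\vbb_i^{*}.
\end{equation*}

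The second step is to bound each piece. The leading operator has norm at most $1+\eta(\mSigma_t)_{ii}$ because $\mUtt\mUt$ is positive semidefinite and the regime $\eta\|\mUt\|^2 < 1+\eta(\mSigma_t)_{ii}$ is maintained thanks to $\|\mrt\|,\|\mEt\|\le 1.1$ and $\eta < M_1^{-1}/64$. For the off-diagonal $\mSigma_t$ contribution, the diagonal dominance \eqref{eq:digonal-dominant}, the hypothesis $\|\qb_j^{(t)}\|\le q_j^t < \cD \cL_t$, and $\cD = p^{-3/2} r_2^{3/2}$ combine into
\begin{equation*}
\sum_{j\neq i} |(\mSigma_t)_{ij}|\,\|\qb_j^{(t)}\| \le \cD \cL_t \cdot \epsilon_2 r_2^{-3/2} = p^{-3/2}\epsilon_2 \cL_t \le \cL_t,
\end{equation*}
where the final step uses the hypothesis $p \ge \epsilon_2^{2/3}$. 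The interaction term obeys $\|\mUtt\Utrue\Utrue^\top \vbb_i^{*}\| \le \epsilon_1 \|\mUt\|$ by Assumption~\ref{cond:invariant-and-spurious-space}(c), and the RIP term obeys $\|\mUtt\emapt\vbb_i^{*}\| \le 2M_1\delta\sqrt{r_1+r_2}\|\mUt\|$ by Lemma~\ref{lemma:useful:emap-bound}. These two bundle into a joint bound of $\cL_t$ by Lemma~\ref{lemma:useful:calibration}. Putting the pieces together yields
\begin{equation*}
\|\qb_i^{(t+1)}\| \le (1+\eta(\mSigma_t)_{ii})\,q_i^t + 2\eta \cL_t.
\end{equation*}

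The third step is to compare with $q_i^{t+1} = (1+\eta(\mSigma_t)_{ii}+2\eta)\,q_i^t \vee \cL_{t+1}$ case by case. In the multiplicative branch, the reflection-barrier bound $q_i^t \ge \cL_t$ upgrades the additive $2\eta \cL_t$ to $2\eta q_i^t$, yielding $\|\qb_i^{(t+1)}\| \le (1+\eta(\mSigma_t)_{ii}+2\eta)\,q_i^t = q_i^{t+1}$. In the reflection branch, the same inequality together with the case condition gives $\|\qb_i^{(t+1)}\| \le \cL_{t+1} = q_i^{t+1}$. The strict inequality $q_i^{t+1} < \cD \cL_{t+1}$ is precisely the high-probability event of Lemma~\ref{lemma:upper-bound-q}, which is assumed in the hypothesis of this lemma.

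The main obstacle is the delicate accounting of three separate small error contributions --- the off-diagonal mass of $\mSigma_t$, the subspace overlap $\epsilon_1$, and the RIP error $\emapt$ --- so that their combined size fits inside the $2\eta$-slack baked into $q_i^t$. The quantitative assumptions $p \ge \epsilon_2^{2/3}$ (to absorb the off-diagonal term into a single $\cL_t$) and Lemma~\ref{lemma:useful:calibration} (to bundle the interaction and RIP terms into a second $\cL_t$) are calibrated precisely so that this accounting closes in exactly two units of $\cL_t$, matching the $2\eta$ slack.
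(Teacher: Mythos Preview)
Your proposal is correct and follows essentially the same approach as the paper: derive the column-wise recursion from \eqref{eq:dynamic-of-q}, separate the diagonal of $\mSigma_t$ to expose the multiplicative factor $(1+\eta(\mSigma_t)_{ii})$, then absorb the off-diagonal contribution into one $\eta\cLt$ via diagonal dominance and $p\ge\epsilon_2^{2/3}$, and the interaction plus RIP error into a second $\eta\cLt$ via Lemma~\ref{lemma:useful:calibration}. The only cosmetic difference is that the paper splits cases on whether $\|\qb_i^{(t)}\|\le\cLt$ while you split on which branch of the $\vee$ in $q_i^{t+1}$ is active; since $q_i^t\ge\cLt$ always holds by construction, both case analyses collapse to the same inequality $(1+\eta(\mSigma_t)_{ii}+2\eta)q_i^t\le q_i^{t+1}$.
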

\begin{proof}
    From the dynamic of $ \mqt$:
    \begin{equation*}
        \mq_{t+1}=\mqt - \eta\mUtt\mUt\mqt +\eta\mqt\mSigma -\eta\left[\left( \epsilon_1+2M_1\delta\sqrt{r_1+r_2}\right)\|\mUt\| \right],
    \end{equation*}
    we can see that for each column $ \qb_i^{(t)}$ of $ \mqt$:
\begin{equation*}
    \begin{aligned}
        \|\qb_i^{(t+1)}\|&\le \normFull{\left(\mIdentity-\eta\mUtt\mUt+\eta \mSigma_{ii}^{(e_t)}\mIdentity\right)\qb_i^{(t)}}+\eta\sum_{j\neq i} |\mSigma_{ji}^{(e_t)}|\|\qb_j^{(t)}\|+\eta\left( \epsilon_1+2M_1\delta\sqrt{r_1+r_2}\right)\|\mUt\|\\
        &\le (1+\eta \mSigma_{ii}^{(e_{t})})\|\qb_i^{(t)}\|_2+\eta\sum_{j\neq i} |\mSigma_{ji}^{(e_t)}|\|\qb_j^{(t)}\|+\eta \cLt.
    \end{aligned}
\end{equation*}
where we use Lemma~\ref{lemma:useful:calibration}. For the second term we have:
\begin{equation}
    \eta\sum_{j\neq i} |\mSigma_{ji}^{(e_t)}|\|\qb_j^{(t)}\|\overset{(a)}{\le} \eta \frac{c_o}{r^2M_2^{1.5}}\cD \cLt \overset{(b)}{\le}\eta\cLt(c_o c_v^{-1.5}r^{-0.5}\log^{1.5} d)\overset{(c)}{\le} 1,
\end{equation}
where in $(a)$ we use Assumption~\ref{cond:invariant-and-spurious-space} (c) and induction hypothesis that $ \|\qb_j^{(t)}\|<\cD\cdot \cLt$, in $(b)$ we use the definition of $p$~(Definition~\ref{def:controller Sequence}), and in $(c)$ we use Assumption~\ref{cond:invariant-and-spurious-space} (a) and Assumption~\ref{cond:regularity-on-sigma-e} with sufficiently small $c_o$ (which depends solely on another universal constant $c_v$). Hence we have
\begin{equation}
    \|\qb_i^{(t+1)}\|\le (1+\eta \mSigma_{ii}^{(e_{t})})\|\qb_i^{(t)}\|_2+2\eta \cLt.
\end{equation}
There are two probable cases:
\begin{equation*}
    \begin{cases}
    \text{If}~\|\qb_i^{(t)}\|\le \cLt, ~\text{then}~\|\qb_i^{(t+1)}\|\le (1+\eta\mSigma_{ii}^t+2\eta)\cLt\le  (1+\eta\mSigma_{ii}^t+2\eta)q_i^t\le q_i^{t+1};\\
    \text{If}~\|\qb_i^{(t)}\|> \cLt,~\text{then}~\|\qb_i^{(t+1)}\|\le(1+\eta\mSigma_{ii}^t+2\eta)\|\qb_i^t\|\le  (1+\eta\mSigma_{ii}^t+2\eta)q_i^{t}\le q_i^{t+1}.
\end{cases}
\end{equation*}
Both lead to the results we desire.
\end{proof}
With the above lemma, we can also give a bound for $ \|\mqt\|$:
\begin{corollary}\label{cor:op-norm-mqt}
    Under the condition of Lemma~\ref{lemma:qbit}, we have that
    \begin{equation}
        \|\mq_{t+1}\|\le \|\mq_{t+1}\|_F \le \sqrt{\sum_{i=1}^{r_2} (q_i^{{t+1}})^2}<\cD\sqrt{r_2}\cL_{t+1}.
    \end{equation}
\end{corollary}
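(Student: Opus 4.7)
The plan is to chain three elementary bounds. First, I would invoke the standard fact that the operator norm is dominated by the Frobenius norm, i.e., $\|\mq_{t+1}\| \le \|\mq_{t+1}\|_F$, which holds for any matrix and requires no problem-specific input.

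Next, I would expand the Frobenius norm column-wise. Since $\qb_i^{(t+1)}$ denotes the $i$-th column of $\mq_{t+1}$ for $i=1,\ldots,r_2$, we have the identity
\begin{equation*}
    \|\mq_{t+1}\|_F^2 = \sum_{i=1}^{r_2} \|\qb_i^{(t+1)}\|^2.
\end{equation*}
The conclusion of Lemma~\ref{lemma:qbit}, applied under the same hypotheses, yields $\|\qb_i^{(t+1)}\| \le q_i^{t+1}$ for every $i$, so summing the squares gives $\|\mq_{t+1}\|_F \le \sqrt{\sum_{i=1}^{r_2} (q_i^{t+1})^2}$.

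Finally, I would use the uniform absorbing-barrier bound from Lemma~\ref{lemma:qbit} (or the event of Lemma~\ref{lemma:upper-bound-q} that the corollary hypothesis invokes), which gives $q_i^{t+1} < \cD \cdot \cL_{t+1}$ for every $i \in \{1,\ldots,r_2\}$. Plugging this into the sum yields $\sqrt{\sum_{i=1}^{r_2} (q_i^{t+1})^2} < \sqrt{r_2} \cdot \cD \cdot \cL_{t+1}$, which completes the chain. There is no real obstacle here—the corollary is a direct bookkeeping consequence of Lemma~\ref{lemma:qbit}, essentially lifting a column-wise bound to a matrix norm bound, and the only care needed is to verify that the hypotheses of Lemma~\ref{lemma:qbit} (in particular the event $q_i^t < \cD \cdot \cL_t$ and the inductive bounds on $\mrt, \mEt, \mqt$) are exactly the ones already assumed in the corollary's statement.
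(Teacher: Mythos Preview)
Your proposal is correct and matches the paper's approach exactly: the paper states this corollary immediately after Lemma~\ref{lemma:qbit} with no separate proof, treating it as the obvious column-wise aggregation you describe. You have simply spelled out the three-step chain (operator norm $\le$ Frobenius norm, columnwise expansion, then the per-column bound $q_i^{t+1}<\cD\cdot\cL_{t+1}$) that the paper leaves implicit.
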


\subsection{Bounds of $\mEt$}\label{sec:bound-e}
In this section, we bound the increments of both the operator norm and the Frobenius norm of $ \mEt$. The next lemma provide an upper bound for $ \|\mEt\|$.
\begin{lemma}[Increment of Spectral Norm of $\mEt$]\label{lemma:dynamic-E-op}
    Under the assumption of Lemma~\ref{lemma:qbit}, we have
    \begin{equation}
    \begin{aligned}
        \|\mE_{t+1}\|\le \|\mEt\|+\eta \cLt.
    \end{aligned}
    \end{equation}
\end{lemma}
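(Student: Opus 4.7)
The plan is to start from the evolution equation~\eqref{eq:dynamic-of-e} for $\mEt$ and bound the three contributions — shrinkage, interaction, and RIP — separately in spectral norm, then collect them through Lemma~\ref{lemma:useful:calibration}.

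First I would handle the shrinkage term $\mEt(\mIdentity-\eta\mUtt\mUt)$. Since $\mUtt\mUt\succeq 0$, the eigenvalues of $\mIdentity-\eta\mUtt\mUt$ lie in $[1-\eta\|\mUt\|^2,\,1]$. Under the hypotheses of Lemma~\ref{lemma:qbit} we have $\|\mrt\|,\|\mEt\|\le 1.1$ and $\|\mqt\|\le\cD\sqrt{r_2}\cLt\le \deltastar$, so by the triangle inequality $\|\mUt\|\le 4$; combined with $\eta<1/(64M_1)$ this ensures $\|\mIdentity-\eta\mUtt\mUt\|\le 1$, and hence $\|\mEt(\mIdentity-\eta\mUtt\mUt)\|\le\|\mEt\|$.

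Next, for the interaction term $\eta\,\idres(\Utrue\Utrue^\top+\vtrue\mSigma_t\vtrue^\top)\mUt$, I would exploit the fact that $\idres\Utrue=(\mIdentity-\idu-\idv)\Utrue=-\vtrue\vtrue^\top\Utrue$, so $\|\idres\Utrue\|\le\epsilon_1$, and symmetrically $\|\idres\vtrue\|\le\epsilon_1$. Therefore
\begin{equation*}
    \bigl\|\idres(\Utrue\Utrue^\top+\vtrue\mSigma_t\vtrue^\top)\mUt\bigr\|\le \epsilon_1\|\mUt\|+\epsilon_1 M_1\|\mUt\|\le 2M_1\epsilon_1\|\mUt\|.
\end{equation*}
For the RIP term $\eta\,\idres\emapt\mUt$, I would first bound $\|\idres\|\le 1+\epsilon_1$ by its definition, then apply Lemma~\ref{lemma:useful:emap-bound} (whose proof actually yields $\|\emapt\|\le 2M_1\delta\sqrt{r_1+r_2}$ before multiplying by $\mUt$), producing $\|\idres\emapt\mUt\|\le 2M_1\delta\sqrt{r_1+r_2}\,\|\mUt\|$ up to a negligible $(1+\epsilon_1)$ factor.

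Finally I would collect: by the triangle inequality on~\eqref{eq:dynamic-of-e},
\begin{equation*}
    \|\mE_{t+1}\|\le \|\mEt\| + \eta\bigl(M_1\epsilon_1+2M_1\delta\sqrt{r_1+r_2}\bigr)\|\mUt\|,
\end{equation*}
and invoking Lemma~\ref{lemma:useful:calibration}, whose hypotheses are exactly those of Lemma~\ref{lemma:qbit}, the trailing coefficient is at most $\cLt$, giving the claim $\|\mE_{t+1}\|\le\|\mEt\|+\eta\cLt$. The main delicate point I expect is the bookkeeping for the shrinkage factor: one must verify that the operator norm of $\mIdentity-\eta\mUtt\mUt$ does not exceed $1$ under the current phase's bounds on $\|\mUt\|$, which is where the assumptions $\eta<1/(64M_1)$ and the stated bounds on $\mrt,\mqt,\mEt$ are jointly used. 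Everything else is a routine application of the near-orthogonality (Assumption~\ref{cond:invariant-and-spurious-space}(c)) and the RIP error bound of Section~\ref{sec:useful-lemmas}.
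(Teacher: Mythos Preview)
Your proposal is correct and follows essentially the same route as the paper: start from the dynamic~\eqref{eq:dynamic-of-e}, bound the shrinkage factor by $1$, control the interaction term via $\|\idres\Utrue\|,\|\idres\vtrue\|\le\epsilon_1$, bound the RIP error through Lemma~\ref{lemma:useful:emap-bound}, and absorb the resulting $(M_1\epsilon_1+2M_1\delta\sqrt{r_1+r_2})\|\mUt\|$ into $\cLt$ via Lemma~\ref{lemma:useful:calibration}. The only caveat is that Lemma~\ref{lemma:useful:calibration} also assumes $\|\mrt\|\le 4\cR_t$, which is not literally among the hypotheses of Lemma~\ref{lemma:qbit} but is maintained throughout the induction; the paper sidesteps this by saying steps $(b)$ and $(c)$ are ``derived similarly as Lemma~\ref{lemma:useful:calibration}'' rather than invoking it verbatim.
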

\begin{proof}
    From the dynamic of $\mEt$~\eqref{eq:dynamic-of-e}, we can derive that
    \begin{equation*}
        \begin{aligned}
            \|\mE_{t+1}\| &\le \normFull{\mEt\left(\mIdentity-\eta\mUtt\mUt\right)}+\eta\left(\normFull{\idres\left(\Utrue\Utrue^\top+\vtrue\mSigma_t\vtrue^\top\right)}+\normFull{\idres\emapt}\right)\|\mUt\|\\
            &\overset{(a)}{\le} \normFull{\mEt}+\eta \left((\epsilon_1+M_1+2M_1\delta\sqrt{r_1+r_2}\right)\|\mUt\|\\
            &\overset{(b)}\le \|\mEt\|+\eta 5M_1\delta\sqrt{r_1+r_2}\left(\|\mrt\|+\|\mqt\|\right)\\
            &\overset{(c)}\le \|\mEt\|+\eta \cLt,
        \end{aligned}
    \end{equation*}
    where $(a)$ is from Lemma~\ref{lemma:useful:emap-bound} and the fact that $\|\idres\mUt\|,\|\idres\mUt\|\le\epsilon_1$. $ (b)$ and $ (c)$ are derived similarly as Lemma~\ref{lemma:useful:calibration}.
\end{proof}

The next lemma bounds the F-norm of error component $\mEt$.
\begin{lemma}[Increment of the F-norm of Error Dynamic]\label{lemma:E-f-norm}
    Under the assumption of Lemma~\ref{lemma:qbit}, and we further assume that $\|\mEt\|\lesssim \delta M_1 \sqrt{r_1+r_2}\log(1/\alpha) $, then the Frobenius norm of $ \mE_{t+1}$ can be bounded by
    \begin{equation}
    \|\mE_{t+1}\|_F^2\le (1+O(\eta\delta M_1\sqrt{r_1+r_2}))\|\mEt\|_F^2+\eta O(\delta^2 M_1^2 (r_1+r_2)^{1.5}\log(1/\alpha)  ),
    \end{equation}
    which immediately implies,
    \begin{equation}
        \begin{aligned}
            \|\mEt\|_F^2&\lesssim \left((1+O(\eta\delta M_1\sqrt{r_1+r_2}))^t-1\right)\delta M_1 (r_1+r_2)\log (1/\alpha)\\
            &\lesssim t\eta\delta^2 M_1^2 (r_1+r_2)^{1.5}\log (1/\alpha).
        \end{aligned}
    \end{equation}
\end{lemma}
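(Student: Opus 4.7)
The plan is to square the dynamic~\eqref{eq:dynamic-of-e}, written as
\[
\mE_{t+1} = \mEt(\mIdentity - \eta\mUtt\mUt) + \eta\mf_t - \eta\mR_t,
\]
where $\mf_t := \idres(\Utrue\Utrue^\top + \vtrue\mSigma_t\vtrue^\top)\mUt$ collects the interaction errors and $\mR_t := \idres\emapt\mUt$ is the RIP error. Since $\mUtt\mUt$ is PSD and $\eta\|\mUt\|^2\le 1$ under the step-size choice, $\mIdentity - \eta\mUtt\mUt$ is a Frobenius contraction, which gives
\[
\|\mE_{t+1}\|_F^2 \le \|\mEt\|_F^2 + 2\eta\langle\mEt(\mIdentity - \eta\mUtt\mUt),\mf_t - \mR_t\rangle + \eta^2\|\mf_t - \mR_t\|_F^2,
\]
reducing the task to bounding the cross and quadratic terms at the stated orders.

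The central idea for extracting the sharp $(r_1+r_2)^{3/2}$ exponent is to exploit \emph{hidden low-rank structure} inside both $\mf_t$ and most of $\mR_t$, rather than relying only on the operator-norm bound of Lemma~\ref{lemma:useful:emap-bound}. Expanding $\idres = \mIdentity - \idu - \idv$ and using $\idu\Utrue = \Utrue$, $\idv\vtrue = \vtrue$ leaves
\[
\mf_t = -\idu\vtrue\mSigma_t\mqt^\top - \idv\Utrue\mrt^\top,
\]
of rank at most $r_1+r_2$ and Frobenius norm $O(\epsilon_1 M_1\sqrt{r_1+r_2})$ by Assumption~\ref{cond:invariant-and-spurious-space}(c). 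Likewise, writing $\mUt = \Utrue\mrt^\top + \vtrue\mqt^\top + \mEt$ decomposes $\mR_t$ into a rank-$\le r_1+r_2$ piece $\idres\emapt(\Utrue\mrt^\top + \vtrue\mqt^\top)$ plus $\idres\emapt\mEt$. Using $\idres\mEt = \mEt$, the inner product in the cross term collapses to $\langle\mEt\mrt\Utrue^\top,\emap{M_t}\rangle$, $\langle\mEt\mqt\vtrue^\top,\emap{M_t}\rangle$, and $\langle\mEt\mEtt,\emap{M_t}\rangle$, together with the analogous low-rank contribution from $\mf_t$.

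For each inner product we further split $M_t = A_t + \mEt\mEtt$ as in the proof of Lemma~\ref{lemma:useful:emap-bound}, with $A_t$ of rank $O(r_1+r_2)$ and $\|A_t\|_F = O(M_1\sqrt{r_1+r_2})$, and apply Lemma~\ref{lemma:rip:low-F} when both arguments are low-rank or Lemma~\ref{lemma:rip:full-F} (together with $\|\mEt\mEtt\|_* = \|\mEt\|_F^2$) when only one is. Using $\|\mEt\mrt\Utrue^\top\|_F \le \|\mEt\|\sqrt{r_1}$, the hypothesis $\|\mEt\|\lesssim \delta M_1\sqrt{r_1+r_2}\log(1/\alpha)$, and the standing $\|\mEt\|_F^2 < 1$, the two low-rank cross terms together contribute the additive $O(\eta\delta^2 M_1^2(r_1+r_2)^{3/2}\log(1/\alpha))$, while $|\langle\mEt\mEtt,\emap{A_t}\rangle|\le \delta\|A_t\|_F\|\mEt\|_F^2 = O(\delta M_1\sqrt{r_1+r_2})\|\mEt\|_F^2$ produces the multiplicative factor $1+O(\eta\delta M_1\sqrt{r_1+r_2})$. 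The $\mf_t$ analog is handled identically via $|\langle\mEt,\mf_t\rangle| \le \|\mEt\|\sqrt{r_1+r_2}\|\mf_t\|_F$, and the quadratic term $\eta^2\|\mf_t - \mR_t\|_F^2$ is absorbed into the additive bound using Lemma~\ref{lemma:useful:emap-bound} and $\sqrt{r_1+r_2}\lesssim \log(1/\alpha)$ in the relevant regime.

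The main obstacle is the fully non-low-rank term $\langle\mEt\mEtt,\emap{\mEt\mEtt}\rangle$, to which neither Lemma~\ref{lemma:rip:low-F} nor Lemma~\ref{lemma:rip:full-F} directly applies; it is dispatched crudely by $\|\mEt\mEtt\|_*\cdot\|\emap{\mEt\mEtt}\|\le \delta\|\mEt\|_F^4$, negligible under $\|\mEt\|_F^2 < 1$. The second stated bound then follows from a Gronwall-type iteration of the one-step inequality, since the total horizon satisfies $t\eta = O(\log(1/\alpha))$.
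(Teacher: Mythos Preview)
Your approach is essentially the paper's: square the dynamic~\eqref{eq:dynamic-of-e}, use that $\mIdentity-\eta\mUtt\mUt$ is a Frobenius contraction, and control the cross and quadratic terms via the low-rank splitting $M_t = A_t + \mEt\mEtt$ together with Lemmas~\ref{lemma:rip:low-F}--\ref{lemma:rip:full-2}. Your cross-term analysis reproduces the paper's part~(3) nearly verbatim, and the explicit simplification $\mf_t = -\idu\vtrue\mSigma_t\mqt^\top - \idv\Utrue\mrt^\top$ is a clean way to cover parts~(4)--(6).

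Two points deserve attention. First, $\idres$ is \emph{not} a projection (the paper notes this below~\eqref{eq:e-def}), so $\idres\mEt = \mEt$ fails by an additive $(\idu\idv+\idv\idu)\mUt$ of size $O(\epsilon_1\|\mUt\|)$; this is harmless but should be tracked. Second, and more substantively, your treatment of the quadratic term $\eta^2\|\mR_t\|_F^2$ via Lemma~\ref{lemma:useful:emap-bound} only yields an operator-norm bound on $\emapt$; promoting it to a Frobenius bound through the decomposition $\mUt = \Utrue\mrt^\top + \vtrue\mqt^\top + \mEt$ gives at best $\eta^2\delta^2 M_1^2 (r_1+r_2)^2$ rather than the claimed $(r_1+r_2)^{1.5}$, and the extra hypothesis $\sqrt{r_1+r_2}\lesssim\log(1/\alpha)$ you invoke to close the gap is not part of the standing assumptions. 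The paper avoids this by rewriting $\|\idres\emapt\mUt\|_F^2 = \langle\emapt,\idres^2\emapt\mUt\mUtt\rangle$ and applying the bilinear RIP estimates (Lemmas~\ref{lemma:rip:low-F} and~\ref{lemma:rip:full-F}) directly to this inner product, splitting $\mUt\mUtt = (\mUt\mUtt-\mEt\mEtt)+\mEt\mEtt$ into a rank-$O(r_1+r_2)$ piece plus a trace-controlled piece; this recovers the sharp exponent without any extra assumption and is the one step your outline does not quite capture.
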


\begin{proof}
We expand $ \|\mE_{t+1}\|_F^2$ from the dynamic of $\mEt$~\eqref{eq:dynamic-of-e}:
\begin{equation}
    \begin{aligned}
        \|\mE_{t+1}\|_F^2&=\normFull{\mEt\left(\mIdentity-\eta\mUtt\mUt\right)+\eta\idres\left(\Utrue\Utrue^\top+\vtrue\mSigma_t\vtrue^\top\right)\mUt-\eta\idres\emapt\mUt}_F^2\\
                &=\|\mEt(\mIdentity-\eta\mUt^\top\mUt)\|_F^2+
                \eta^2\normFull{\idres\emapt\mUt}_F^2\\
                &\qquad -2\eta\left\langle \mEt(\mIdentity-\eta\mUt^\top\mUt),\idres\emapt\mUt \right\rangle\\
                &\qquad+ \normFull{\eta\idres\left(\Utrue\Utrue^\top+\vtrue\mSigma_t\vtrue^\top\right)\mUt}_F^2 \\
                 &\qquad+ 
                \left\langle  \mEt\left(\mIdentity-\eta\mUtt\mUt\right),\eta\idres\left(\Utrue\Utrue^\top+\vtrue\mSigma_t\vtrue^\top\right)\mUt\right\rangle\\
                &\qquad +\left\langle  \eta\idres\left(\Utrue\Utrue^\top+\vtrue\mSigma_t\vtrue^\top\right)\mUt,\eta\idres\emapt\mUt \right\rangle \\
                &\defeq(1)+(2)+(3)+(4)+(5)+(6).
    \end{aligned}
\end{equation}
Now we bound the six parts separately. For the first part, since $ 0\preceq \eta\mUtt\mUt\preceq \mIdentity$, we have
\begin{equation}\label{eq:e-fnorm-part1}
    \|\mEt(\mIdentity-\eta\mUt^\top\mUt)\|_F^2\le \|\mEt\|_F^2.
\end{equation}

For the second part:
\begin{equation}\label{eq:e-fnorm-part2}
    \begin{aligned}
        (2)&\le \eta^2\left\langle \emapt,\idres^2\emapt\mUt\mUt^\top \right\rangle \\
        &\overset{(a)}{\le} \eta^2\delta \left(\|\mUt\mUt^\top-\Utrue\Utrue^\top-\mEt\mEt^\top\|_F+\|\mEt\mEt^\top\|_*+\|\vtrue\mSigma_t\vtrue^\top\|_F\right)\\
        &\  \left(\normFull{\emapt (\mUt\mUt^\top-\mEt\mEt^\top)}_F+\normFull{\emapt\mEt\mEt^\top}_*\right)\\
        &\le \eta^2\delta \left(\|\mUt\mUt^\top-\Utrue\Utrue^\top-\mEt\mEt^\top\|_F+\|\mEt\mEt^\top\|_*+\|\vtrue\mSigma_t\vtrue^\top\|_F\right)\\
        &\quad \normFull{\emapt}\left(\normFull{\mUt\mUt^\top-\mEt\mEt^\top}_F+\normFull{\mEt\mEtt}_*\right)\\
        &\overset{(b)}\lesssim \eta^2 \delta M_1\sqrt{r_1+r_2} \cdot \delta M_1\sqrt{r_1+r_2} \cdot\left(O(\sqrt{r_1+r_2})+\|\mEt\|_F^2\right)\\
        &\lesssim \eta^2\delta^2 M_1^2(r_1+r_2)^{1.5}.
    \end{aligned}
\end{equation}
In $ (a)$ we use similar technique as in Lemma~\ref{lemma:useful:emap-bound} to divide $ \mUt\mUtt-\Utrue\Utrue^\top-\vtrue\mSigma_t\vtrue^\top$ into three parts so that we can use Lemma~\ref{lemma:rip:low-F} and Lemma~\ref{lemma:rip:full-F}. In $ (b)$ we use Lemma~\ref{lemma:useful:emap-bound} to bound the first two terms and \eqref{eq:uut-minus-expansion} to bound the third term with the assumption that $\|\mrt\| ,\|\mqt\|,\|\mEt\|<2$.

For the third part:
\begin{equation}
    \begin{aligned}
        (3)&\overset{(a)}{=}-2\eta\left\langle \emapt,\idres\mEt(\mIdentity-\eta\mUt^\top\mUt)\mUt^\top\right\rangle \\
        &=-2\eta\left\langle \emapt,\idres\mEt(\mIdentity-\eta\mUt^\top\mUt)(\mEt^\top+\mrt\Utrue^\top+\mqt\vtrue^\top)\right\rangle \\
        &\overset{(b)}{\le} 2\eta\delta \left(\|\mUt\mUt^\top-\Utrue\Utrue^\top-\mEt\mEt^\top\|_F+\|\mEt\mEt^\top\|_*+\|\mSigma_t\|_F\right)\\
        &\quad \cdot\left(\|\mEt(\mIdentity-\eta\mUt^\top\mUt)\mEt^\top\|_*+\|\mEt(\mIdentity-\eta\mUt^\top\mUt)\mrt\|_F+\|\mEt(\mIdentity-\eta\mUt^\top\mUt)\mqt\|_F\right)\\
        &\overset{(c)}{\le} 2\eta\delta O(M_1\sqrt {r_1+r_2})\left(\|\mEt\|_F^2+\|\mEt\|\|\mrt\|_F+\|\mEt\|\|\mqt\|_F\right)\\
        &\le 2\eta\delta O(M_1\sqrt {r_1+r_2})\left(\|\mEt\|_F^2+(2\sqrt{r_1}+2\sqrt{r_2})\|\mEt\|\right)\\
        &\lesssim \eta\delta M_1\sqrt{r_1+r_2}\|\mEt\|_F^2+\eta\delta^2 M_1^2(r_1+r_2)^{1.5}\log(1/\alpha),
    \end{aligned}
\end{equation}
where in $(a)$ we use the fact that $ \langle \ma, \mb\mc\md\rangle=\langle\mb^\top \ma \md^\top,\mc\rangle$. In $(b)$ we separate $\emapt$ as in~\eqref{eq:e-fnorm-part2}. In $(c)$, for the first term we use the upper bound for $ \emapt$ appeared in Lemma~\ref{lemma:useful:emap-bound}, for the second term we use $ \|\ma\mb\|_F\le\|\ma\|\|\mb\|_F$ (and similarly for nuclear norm) and $\|\mIdentity-\eta\mUtt\mUt\|\le 1 $.

For the fourth part:
\begin{equation}
    \begin{aligned}
        (4)\le 2\eta^2\epsilon_1^2\normFull{\mrt}_F^2+ 2\eta^2\epsilon_1^2\normFull{\mqt}_F^2\le 8\eta^2\epsilon_1^2 (r_1+r_2),
    \end{aligned}
\end{equation}
where the first inequality is from Cauchy's inequality and the fact that $\|\idres\Utrue\|,\|\idres\vtrue\|\le\epsilon_1$.

For the fifth part:
\begin{equation}
    \begin{aligned}
        (5)&\le \normFull{\mEt\left(\mIdentity-\eta\mUtt\mUt\right)}\cdot \normFull{\eta\idres\left(\Utrue\Utrue^\top+\vtrue\mSigma_t\vtrue^\top\right)\mUt}_*\\
        &\lesssim \|\mEt\| \eta \epsilon_1 M_1(r_1+r_2)\\
        &\lesssim \eta\left(\delta M_1\sqrt{r_1+r_2}\log(1/\alpha)\right)\epsilon_1 M_1(r_1+r_2)\\
        &\le \eta \delta^2 M_1^2 (r_1+r_2)^{1.5}\log(1/\alpha),
    \end{aligned}
\end{equation}
where the first inequality is from the norm inequality $ \langle \mx,\my\rangle \le \|\mx\|_*\|\my\|$ and in the last inequality we use the fact that $ \epsilon_1<\delta$.

For the sixth part:
\begin{equation}\label{eq:e-fnorm-part6}
    \begin{aligned}
        (6)&=\eta^2\left\langle
        \emapt
        ,
        \idres^2\left(\Utrue\Utrue^\top+\vtrue\mSigma_t\vtrue^\top\right)\mUt\mUtt
        \right\rangle\\
        &\overset{(a)}\le \eta^2\|\emapt\|\cdot \|\idres^2\left(\Utrue\Utrue^\top+\vtrue\mSigma_t\vtrue^\top\right)\mUt\mUtt\|_*  \\
        &{\le}\eta^2\|\emapt\|\cdot \|\mUt\|^2\cdot\|\idres^2\left(\Utrue\Utrue^\top+\vtrue\mSigma_t\vtrue^\top\right)\|_*\\
        &\overset{(b)}{\lesssim} \eta^2\delta M_1\sqrt{r_1+r_2}\cdot \|\mUt\|^2\epsilon_1(r_1+M_1 r_2)\\
        &\lesssim \eta^2\delta M_1 \epsilon_1 M_1(r_1+r_2)^{1.5}\\
        &\lesssim \eta^2\delta^2 M_1^2 (r_1+r_2)^{1.5}.
    \end{aligned}
\end{equation}
In $(a)$ we use the norm inequality $ \langle \mx,\my\rangle \le \|\mx\|_*\|\my\|$ and in $(b)$ we use $ \|\idres\|\le 1$ and $\|\idres\mUt\|,\|\idres\mUt\|\le\epsilon_1$.

Now combining Equation~\eqref{eq:e-fnorm-part1}-\eqref{eq:e-fnorm-part6}, along with the fact that $ \|\mE_0\|_F^2\le d\alpha^2<d^{-1}\ll \delta^2 r^{1.5}$, we can derive the result we desire.

\end{proof}

\subsection{Analysis for Phase 1}\label{sec:append:phase1}
In this section, we give a rigorous analysis for phase~1:

\begin{theorem}[Phase 1 analysis]\label{theorem:phase1}
    Under the assumptions of Theorem~\ref{theorem:main}. During the first $T_1=O(\frac{1}{\eta}\log(\frac{1}{\alpha}))$ steps, with probability at least $0.995$, the following holds for any $t\in [0, T_1]$, that
    \begin{itemize}
        \item $\sigma_j(\mr_{t+1}) > (1+\eta/3) \sigma_j(\mr_t)$ for all $j\in[r_1]$;
        \item $\|\mqt\|_F\le \cD\sqrt{r_2} \cdot \cLt \le\deltastar<0.01$, where $\cLt$ is formally defined in \eqref{eq:def-clt}. 
        \item $\|\mEt\|\lesssim \delta\sqrt{r_1+r_2}\le \|\mr_{t}\|$ and $\|\mEt\|_F^2 \lesssim \delta^2 M_1^2 (r_1+r_2)^{1.5}\log(1/\alpha)^2$.
    \end{itemize}
    Finally, we have $\sigma_1(\mr_{T_1}), \sigma_{r_1}(\mr_{T_1}) \in\left(\frac{1}{4},\frac{7}{18}\right)$.
\end{theorem}

In the below contexts, unless otherwise specified, we abbreviate the largest and smallest singular values of $\mrt$ as $\sigmalt$ and $\sigmart$.

The next lemma tells that, if $ \|\mqt\|$ and $ \|\mEt\|$ are both small, then $ \mrt$ increases steadily and the deviation between its singular values is small.
\begin{lemma}[Dynamic of Singular Values of $\mrt$ in Phase 1]\label{lemma:dynamic-r-phase1}
    For some $ t\le T_1-1$, under the assumptions of Lemma~\ref{lemma:qbit}, if $\|\mEt\|,\|\mqt\|<\frac{1}{96}\log^{-1}\left({1}/{\alpha}\right)$ and $\ucR_t\le\sigmart\le \sigmalt\le\ocR_t $, then
    \begin{equation}
        \ucR_{t+1}\le\sigmartp\le \sigmaltp\le\ocR_{t+1}.
    \end{equation}
    and
    \begin{equation}
        \begin{aligned}
            \sigmaltp&\ge(1+\eta/3)\sigmalt\\
            \sigmartp&\ge(1+\eta/3)\sigmart&
        \end{aligned}
    \end{equation}
\end{lemma}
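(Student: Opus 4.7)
The plan is to isolate the dominating dynamics in the $\mrt$-update, exhibit its singular values exactly through the SVD of $\mrt$, and then use Weyl's inequality to absorb every remaining term as a controlled perturbation. Applying Lemma~\ref{lem:useful:utu-expansion} to rewrite $\mUtt\mUt = \mrt\mrtt + \mqt\mqtt + \mEtt\mEt + \boldsymbol{\Delta}_t$ with $\|\boldsymbol{\Delta}_t\|\le 6\epsilon_1\|\mUt\|^2$ and substituting into \eqref{eq:dynamic-of-r}, the update becomes
\[
\mr_{t+1} \;=\; \bigl((1+\eta)\mIdentity_d - \eta\,\mrt\mrtt\bigr)\mrt \;+\; \eta\,\bR_t,
\]
where $\bR_t := -(\mqt\mqtt + \mEtt\mEt + \boldsymbol{\Delta}_t)\mrt + \mUtt\vtrue\mSigma_t\vtrue^\top\Utrue - \mUtt\emapt\Utrue$. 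Taking a thin SVD $\mrt = \widetilde{\mU}\widetilde{\Sigma}\widetilde{\mV}^\top$, a one-line calculation shows that the main term equals $\widetilde{\mU}\bigl((1+\eta)\widetilde{\Sigma}-\eta\widetilde{\Sigma}^3\bigr)\widetilde{\mV}^\top$, so its singular values are $f\bigl(\sigma_i(\mrt)\bigr)$ with $f(\sigma):=(1+\eta-\eta\sigma^2)\sigma$. By hypothesis $\sigma_i(\mrt)\in[\ucR_t,\ocR_t]$, and Lemma~\ref{lemma:ucr-ocr} together with $\cR_t<1/3$ in Phase~1 gives $\ocR_t\le(7/6)\cR_t<7/18$; on this interval $f'(\sigma)=1+\eta-3\eta\sigma^2>0$, so $f$ is strictly increasing and the singular values of the main term lie in $[f(\ucR_t),f(\ocR_t)]$.

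The next step is to show $\|\bR_t\|\le \tfrac{1}{32}\log^{-1}(1/\alpha)\,\cR_t$. The coupling bits $\|\mqt\mqtt\mrt\|\le\|\mqt\|^2\|\mrt\|$ and $\|\mEtt\mEt\mrt\|\le\|\mEt\|^2\|\mrt\|$ are each $O\!\bigl(\log^{-2}(1/\alpha)\cR_t\bigr)$ by the hypothesis $\|\mqt\|,\|\mEt\|\le\tfrac{1}{96}\log^{-1}(1/\alpha)$; the non-orthogonality slip $\|\boldsymbol{\Delta}_t\mrt\|\le 6\epsilon_1\|\mUt\|^2\|\mrt\|$ is $O(\epsilon_1\cR_t)$ and absorbed since $\epsilon_1<\delta$ is tiny. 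The remaining two terms, the interaction error $\|\mUtt\vtrue\mSigma_t\vtrue^\top\Utrue\|\le M_1\epsilon_1\|\mUt\|$ (using $\|\Utrue^\top\vtrue\|\le\epsilon_1$ from Assumption~\ref{cond:invariant-and-spurious-space}) and the RIP error $\|\mUtt\emapt\Utrue\|\le 2M_1\delta\sqrt{r_1+r_2}\|\mUt\|$ (Lemma~\ref{lemma:useful:emap-bound}), are jointly at most $\tfrac{5}{576}\log^{-1}(1/\alpha)\cR_t$ by Lemma~\ref{lemma:useful:calibration}. Summing all five contributions, and using $\alpha$ small enough to kill the $O(\log^{-2})$ and $O(\delta)$ excess against the budget, yields $\|\bR_t\|\le\tfrac{1}{32}\log^{-1}(1/\alpha)\cR_t\le\tfrac{1}{32}\log^{-1}(1/\alpha)\ocR_t$.

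Finally, Weyl's inequality applied to $\mr_{t+1}=\text{main}+\eta\bR_t$ gives $\sigmaltp\le f(\ocR_t)+\tfrac{\eta}{32}\log^{-1}(1/\alpha)\,\ocR_t=\ocR_{t+1}$ and $\sigmartp\ge f(\ucR_t)-\tfrac{\eta}{32}\log^{-1}(1/\alpha)\,\ocR_t=\ucR_{t+1}$, which are precisely the recursions in Definition~\ref{definition:ucr-ocr}. For the growth factor, $\sigmalt^2\le(7/18)^2<1/6$ forces $f(\sigmalt)\ge(1+5\eta/6)\sigmalt$; combined with $\cR_t\le\tfrac{6}{5}\sigmart\le\tfrac{6}{5}\sigmalt$ this pushes $\eta\|\bR_t\|$ below $(\eta/2)\sigmalt$ for $\alpha$ small, delivering $\sigmaltp\ge(1+\eta/3)\sigmalt$, and the identical computation gives $\sigmartp\ge(1+\eta/3)\sigmart$. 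The main obstacle in executing this plan is the error-bookkeeping in the second paragraph: every one of the five perturbation sources must be individually squeezed into the tight budget $\tfrac{1}{32}\log^{-1}(1/\alpha)\cR_t$ baked into $\ocR_t$ and $\ucR_t$, and Lemma~\ref{lemma:useful:calibration} is exactly the tool that makes the dominant interaction-plus-RIP pair fit.
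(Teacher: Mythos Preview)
Your proposal is correct and follows essentially the same approach as the paper: isolate the cubic main term $\bigl((1+\eta)\mIdentity-\eta\mrt\mrtt\bigr)\mrt$, read off its singular values via the SVD as $f(\sigma_i)=(1+\eta-\eta\sigma_i^2)\sigma_i$, bound the remaining perturbation by $\tfrac{\eta}{32}\log^{-1}(1/\alpha)$ times the scale of $\mrt$, and apply Weyl's inequality to obtain both the sandwich $\ucR_{t+1}\le\sigmartp\le\sigmaltp\le\ocR_{t+1}$ and the growth rate $(1+\eta/3)$. The only cosmetic differences are that the paper lumps the non-orthogonality slip $\boldsymbol{\Delta}_t$, the interaction error, and the RIP error into a single bracketed term of size $(M_1\epsilon_1+2M_1\delta\sqrt{r_1+r_2})\|\mUt\|$ and bounds the perturbation relative to $\sigmalt$ rather than $\cR_t$; your more explicit five-term bookkeeping via Lemma~\ref{lem:useful:utu-expansion} is equivalent and arguably cleaner.
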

\begin{proof}
    From the dynamic of $ \mrt$:
    \begin{equation*}
        \begin{aligned}
            \mr_{t+1}&={(\mIdentity-\eta\mUtt\mUt+\eta\mIdentity)\mrt}+\eta{\mUtt\vtrue\mSigma_t{\vtrue}^\top\Utrue}-\eta{\mUtt\emapt\Utrue}\\
            &=(\mIdentity-\eta\mrt\mrtt+\eta\mIdentity)\mrt-\eta(\mqt\mqtt+\mEtt\mEt)\mrt+\eta\mUtt\left[(M_1\epsilon_1+2M_1\delta \sqrt{r+r_2})\right].
        \end{aligned}
    \end{equation*}

    We use $\sigmalt $ and $ \sigmart$ to denote the largest/smallest singular value of $ \mrt$. To control the dynamic of $\sigmalt $ and $ \sigmart$, we need to bound the magnitude of the error term, that is
    \begin{equation}
        \begin{aligned}
        &\normFull{\eta(\mqt\mqtt+\mEtt\mEt)\mrt+\eta\mUtt[2.5\delta M_1\sqrt{r_1+r_2}]}\\
        &\le \eta\left(\|\mqt\|^2+\|\mEt\|^2+\frac{1}{32}\log^{-1}\left({1}/{\alpha}\right)\right)\sigmalt \\
            &\le \eta(\frac{1}{96}+\frac{1}{96}+\frac{1}{96})\log^{-1}\left({1}/{\alpha}\right)\sigmalt\\
            &\le \frac{\eta}{32}\log^{-1}\left({1}/{\alpha}\right)\sigmalt,
        \end{aligned}
    \end{equation}
    where in the first inequality we use the assumption for $ \|\mqt\|$ and $ \|\mEt\|$ and $\delta$. Therefore, from Weyl's inequality, we have that
    \begin{equation}\label{eq:dynamic-of-sigma-r}
        \begin{cases}
        \sigmaltp\le (1-\eta\sigmalt^2+\eta)\sigmalt+\frac{\eta}{32}\log\left({1}/{\alpha}\right)^{-1}\sigmalt;\\
        \sigmartp\ge (1-\eta\sigmart^2+\eta)\sigmart-\frac{\eta}{32}\log\left({1}/{\alpha}\right)^{-1}\sigmalt.
        \end{cases}
    \end{equation}
    Using the assumption that 
    \begin{equation}
        \sigmalt\le \ocR_t,\quad \sigmart\ge \ucR_t, \quad t=0,1,\ldots, T_1.
    \end{equation}
    And Lemma~\ref{lemma:ucr-ocr}, we can conclude that
    \begin{equation}
        (1-1/6)\cR_{t+1}\le \ucR_{t+1}\le\sigmartp\le \sigmaltp\le\ocR_{t+1} \le (1+1/6)\cR_{t+1}.
    \end{equation}
    For the increasing speed of $\sigmart$, note that $ \sigmalt<2\sigmart$, therefore
    \begin{equation}
        \begin{cases}
        \sigmaltp\ge (1-\frac14\eta+\eta-\frac{1}{32}\eta)\sigmalt;\\
        \sigmartp\ge (1-\frac14\eta+\eta-
        \frac{2}{32}\eta)\sigmart.
        \end{cases}
    \end{equation}
    This proves the desired result.
\end{proof}

Now that the supporting lemmas are prepared, we can begin the proof of Theorem~\ref{theorem:phase1}

\begin{proof}[Proof of Theorem~\ref{theorem:phase1}]
    The initial value of $\mU_0$ implies that
    \begin{equation}
        \|\mU_0\|=\|\mr_0\|=\|\mq_0\|=\|\mE_0\|=\alpha,\quad \|\mE_0\|_F^2\le \alpha^2d .
    \end{equation}
    Recall that the time $ T_1\le \frac{5}{\eta}\log(1/\alpha)$ is the first time $ \cR_t$ enters the region $(1/3-\eta, 1/3)$. We have that the event of $ q_i^t<\cD\cdot \cLt$ for all $ i=0,\ldots,r_2, t=0,\ldots, T_1$ happens with probability over $0.995$. In this event, we can use  Lemma~\ref{lemma:qbit},~\ref{lemma:dynamic-E-op},~\ref{lemma:E-f-norm} and~\ref{lemma:dynamic-r-phase1}  to inductively prove:
    \begin{itemize}
        \item For the operator norm of $ \mEt$, we have that for all $t\le T_1$:
        \begin{equation}
        \begin{aligned}
            \|\mEt\|&\le \alpha+\eta\sum_{t=0}^T \cLt\\
            &\le \alpha(1+\eta\cdot\frac{240}{\eta}\log\left(\frac{1}{\alpha}\right))\\
            &\qquad+40M_1\delta\sqrt{r_1+r_2}\cdot\frac\eta3\cdot\Bigl(1+(1+\eta/3)^{-1}+(1+\eta/3)^{-2}+\cdots\Bigr)\\
            &\le 250\alpha\log\left(\frac{1}{\alpha}\right)+40M_1\delta\sqrt{r_1+r_2}(1+\eta/3)\\
            &\le 40M_1\delta\sqrt{r_1+r_2}\\
            &<\frac{1}{96}\log^{-1}(1/\alpha).
        \end{aligned}
        \end{equation}
        where in the second inequality we use Lemma~\ref{lemma:dynamic-r-phase1} that $\|\mrt\|$ increases with rate not less than $(1+3/\eta)$.
    \item For the Frobenius norm of $ \mEt$:
        \begin{equation}
            \begin{aligned}
                \|\mEt\|_F^2 \le T_1 \eta\delta^2 M_1^2 (r_1+r_2)^{1.5}\log (1/\alpha)\lesssim \delta^2 M_1^2 (r_1+r_2)^{1.5}\log^2 (1/\alpha)< 1
            \end{aligned}
        \end{equation}
    \item For $ \|\mqt\|$, we use Corollary~\ref{cor:op-norm-mqt}:
        \begin{equation}
            \cD\sqrt{r_2}\cL_{T_1}\lesssim \cD\sqrt{r_2}\delta M_1\sqrt{r_1+r_2}<\frac{1}{96}\log^{-1}(1/\alpha).
        \end{equation}
    \item For $\mrt$, we have for $t\le T_1$:
        \begin{equation}
            (1-1/6)\cR_{t}\le \ucR_{t}\le\sigmartp\le \sigmaltp\le\ocR_{t} \le (1+1/6)\cR_{t}.
        \end{equation}
    \item For the condition $\|\mEt\|\le \|\mrt\|$:
        \begin{equation}
            \|\mE_{t+1}\|-\|\mEt\|\le \eta \cLt\le \frac{\eta}{10}\cR_t<\frac{\eta}{5}\|\mrt\|<\|\mr_{t+1}\|-\|\mrt\|.
        \end{equation}
    \end{itemize}
    Hence the proof is completed.

\end{proof}

\subsection{Analysis for Phase 2}\label{sec:append:phase2}
In phase 1, the signal component $\mrt$ grows at a stable speed from $ \alpha$ to $ O(1)$ while the spurious component $ \mqt$ and the error component $ \mEt$ are kept at low levels. In phase 2, we will characterize how $ \mrt$ approach 1 and how to continually keep $ \mqt$ and $ \mEt$.

\begin{lemma}[Stability of $ \mrt$]\label{lem:Stability-of-r}
    If there exists some real number $ g$ satisfying 
    \begin{equation}
        0.01 > g \ge \normFull{\mqt}^2+\normFull{\mEt}^2+4\|\mUtt\emapt\|
    \end{equation}
    for all $ t=T_1+1,\ldots, T_1+T-1$, then we have
    \begin{equation}
        1-5g\le \sigmart\le\sigmalt\le 1+g,
    \end{equation}
    for all $ t=T_1+ O(\frac{1}{\eta}\log\left(\frac{1}{g}\right)),\ldots, T_1+T-1$
\end{lemma}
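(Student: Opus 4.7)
The plan is to reduce the $\mrt$-update to a scalar recursion on each singular value $\sigma_i^{(t)}$, driven by the map $f(x) = x(1+\eta - \eta x^2)$ with a small additive perturbation, and then prove (i)~contraction of $\sigma_i^{(t)}$ to an $O(g)$-neighborhood of $1$ in $O(\tfrac{1}{\eta}\log(1/g))$ steps, and (ii)~forward invariance of the asymmetric window $[1-5g,\,1+g]$.

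Starting from~\eqref{eq:dynamic-of-r} and substituting the expansion $\mUtt\mUt = \mrt\mrtt + \mqt\mqtt + \mEtt\mEt + E_1$ from Lemma~\ref{lem:useful:utu-expansion} (with $\|E_1\| \le 6\epsilon_1 \|\mUt\|^2$), I would rewrite
\begin{align*}
\mr_{t+1} = \bigl((1+\eta)\mIdentity - \eta\,\mrt\mrtt\bigr)\mrt + \mDelta_t,
\end{align*}
where $\mDelta_t$ collects $-\eta(\mqt\mqtt + \mEtt\mEt)\mrt$, $-\eta E_1\mrt$, the interaction term $\eta\mUtt\vtrue\mSigma_t\vtrue^\top\Utrue$, and $-\eta\mUtt\emapt\Utrue$. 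Via the SVD $\mrt = \mU\mSigma\mV^\top$, the dominant matrix has singular values $f(\sigma_i^{(t)})$, so Weyl's inequality gives $|\sigma_i^{(t+1)} - f(\sigma_i^{(t)})| \le \|\mDelta_t\|$. Invoking the hypothesis $\|\mqt\|^2 + \|\mEt\|^2 + 4\|\mUtt\emapt\| \le g$ together with $\|\mUt\|^2 = O(1)$ and $\epsilon_1 \ll g$, the four pieces bound to
\begin{align*}
\|\mDelta_t\| \le \eta\bigl[(\|\mqt\|^2+\|\mEt\|^2)\,\sigma_1^{(t)} + \|\mUtt\emapt\| + O(\epsilon_1)\bigr] \le \tfrac{3}{2}\eta g,
\end{align*}
under the inductive assumption that $\sigma_1^{(t)} \le 1+g$ (bootstrapped below).

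The scalar analysis exploits the factorization $f(x) - 1 = (x-1)\bigl(1 - \eta x(1+x)\bigr)$. On any compact subset of $(0,\sqrt{(1+\eta)/\eta})$ the factor $1-\eta x(1+x)$ lies in $(1-C\eta, 1-c\eta)$ for absolute constants $0 < c < C$, so
\begin{align*}
|1 - \sigma_i^{(t+1)}| \le (1-c\eta)\,|1-\sigma_i^{(t)}| + \tfrac{3}{2}\eta g.
\end{align*}
Starting from $|1 - \sigma_i^{(T_1)}| \le 1$ (Theorem~\ref{theorem:phase1}), iterating for $T = O(\tfrac{1}{\eta}\log(1/g))$ steps yields $|1 - \sigma_i^{(T_1+T)}| = O(g)$. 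For forward invariance of $[1-5g,1+g]$, Taylor expansion gives $f(1+g) = 1 + g(1-2\eta) + O(\eta g^2)$, so the worst-case upward update is $\le 1 + g - \tfrac{1}{2}\eta g + O(\eta g^2) < 1+g$; symmetrically $f(1-5g) = 1 - 5g + 10\eta g + O(\eta g^2)$, so the worst-case downward update is $\ge 1-5g + \tfrac{17}{2}\eta g > 1-5g$. Both inequalities hold for $g<0.01$, closing the bootstrap.

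The main obstacle will be the asymmetric window. The lower buffer $5g$ (versus upper buffer $g$) is forced by the sign structure: at $1+g$ the linearized drift $-2\eta g$ by itself dominates the noise $+\tfrac{3}{2}\eta g$, whereas at a candidate lower point $1-g$ the drift $+2\eta g$ is too close to the noise magnitude to leave margin, so we must back off to $1-5g$ where the drift grows to $+10\eta g$ and outpaces the noise comfortably. Keeping every perturbation contribution---in particular the non-orthogonality $E_1$ term and the $\epsilon_1$-scaled interaction $\eta\mUtt\vtrue\mSigma_t\vtrue^\top\Utrue$, which do not appear in the hypothesis bound on $g$---absorbed into the $\tfrac{3}{2}\eta g$ budget is the most delicate bookkeeping, and requires carefully invoking the standing assumption $\epsilon_1 \ll g$ and the phase-2 bound $\|\mUt\|=O(1)$.
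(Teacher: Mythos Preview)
Your proposal is correct and follows essentially the same route as the paper: reduce to a scalar recursion on the singular values via Weyl's inequality, bound the perturbation by $O(\eta g)$, and analyze the one-dimensional map $x\mapsto x(1+\eta-\eta x^2)$ near its stable fixed point $1$. The paper's presentation differs only cosmetically, handling $\sigma_1^{(t)}$ and $\sigma_{r_1}^{(t)}$ via separate factorizations around $\sqrt{1+g}$ and $\sqrt{1-5g}$ (with the constant $5$ arising from the ratio bound $\sigma_1^{(t)}\le 5\sigma_{r_1}^{(t)}$ that converts the $\eta g\sigma_1^{(t)}$-scaled error into a multiplicative one for $\sigma_{r_1}^{(t)}$) rather than your contraction-plus-forward-invariance split.
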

\begin{proof}

    First we consider the upper bound for $ \sigmalt$. Similar to Equation~\eqref{eq:dynamic-of-sigma-r}, we have 
    \begin{equation}\label{eq:1-sigma-phase2}
        \sigmaltp \le (1-\eta\sigmalt^2+\eta+\eta g)\sigmalt.
    \end{equation}
    Note that Equation~\eqref{eq:1-sigma-phase2} is equivalent to
    \begin{equation}
        \begin{aligned}
            \sqrt{1+g}-\sigmaltp\ge (\sqrt{1+g}-\sigmalt)\left(1-\eta(\sigmalt+\sqrt{1+g})\sigmalt\right).
        \end{aligned}
    \end{equation}
    With $ \sigmalt(T_1)<\frac{1}{2}$, one can see that $\sigmalt$ never goes above $ \sqrt{1+g}\le 1+g$.
    
    Now we consider $ \sigmart$. After phase 1 we have $ \sigma_r^{(T_1)}\ge\frac{5}{6}(1/3-\eta)>\frac{1}{4}$. If $ \sigmalt\le 5\sigmart$, 
    similarly we have:
    \begin{equation}\label{eq:r-sigma-phase2}
        \begin{aligned}
            \sigmartp \ge (1-\eta\sigmalt^2+\eta-5\eta g)\sigmart.
        \end{aligned}
    \end{equation}
    Which implies that, if $ \sigmart<\sqrt{1-5g}$,
    \begin{equation}
        \begin{aligned}
            \sqrt{1-5g}-\sigmartp&\le (\sqrt{1-5g}-\sigmart)(1-\eta(\sigmart+\sqrt{1-5g})\sigmart)\\
            &\le (1-\frac14\eta)(\sqrt{1-5g}-\sigmart)
        \end{aligned}
    \end{equation}
    Therefore, $ \sigmart$ will get larger than $ \sqrt{1-5g}-g^2\ge 1-5g$ at some time $ t\le T_1+ \frac{8}{\eta}\log\left(\frac{1}{g}\right)$. Also from Equation~\eqref{eq:r-sigma-phase2} we can see that, $ \sigmart$ keeps increasing before it gets larger than $ \sqrt{1-5g}$. And once it surpasses $ \sqrt{1-5g}$, it never falls below than $ \sqrt{1-5g}$ again. Therefore, $ \sigmalt\le 5\sigmart$ is satisfied and the proof proceeds.
\end{proof}

Now we can state and prove:

\begin{theorem}[Phase 2 Analysis]\label{theorem:phase2}
    Under the assumptions of Theorem~\ref{theorem:main}. Let 
    $T_2=T_1+O(\frac{1}{\eta}\log((r_1+r_2)/\delta))\le O(\frac{1}{\eta}\log(\frac{1}{\alpha}))$. Then with probability at least $0.995$, we have
    \begin{equation}
        \sigma_1(\mr_{T_2}), \sigma_r(\mr_{T_2})\in \left(1-O(\deltastar^2 M_1^2\vee \delta M_1\sqrt{r_1+r_2}), 1+O(\deltastar^2M_1^2\vee \delta M_1\sqrt{r_1+r_2})\right).
    \end{equation}
    And for $t=T_1+1,\ldots, T_2$, we have
    \begin{itemize}
        \item $ \|\mqt\|_F\le \cD\sqrt{r_2}\cLt \le\cD\sqrt{r_2}40 M_1 \delta \sqrt{r_1+r_2}=40M_1\deltastar$;
        \item $ \|\mEt\|\lesssim\delta M_1\sqrt{r_1+r_2}\log(1/\alpha))$ and $\|\mEt\|_F^2 \lesssim \delta^2 M_1^2 (r_1+r_2)^{1.5}\log(1/\alpha)^2$.
    \end{itemize}
\end{theorem}
\begin{proof}[Proof of Theorem~\ref{theorem:phase2}]
    The error $g$ in Lemma~\ref{lem:Stability-of-r} is no less than $\Omega(\delta^2 M_1^2(r_1+r_2))\gg \alpha$ order, therefore $ T_2=T_1+ \frac{1}{\eta}\log\left({1}/{\alpha}\right)$ suffices for $ \sigmart$ to reach $1-5g$. Then similar to the induction in the proof of Theorem~\ref{theorem:phase1}, we can derive(in the same high probability event):
    \begin{itemize}
        \item $ \|\mEt\|\le 40\eta \delta M_1\sqrt{r_1+r_2}+40\eta \delta M_1\sqrt{r_1+r_2}(t-T_1)\le 80\delta M_1\sqrt{r_1+r_2}\log(1/\alpha))<0.01$;
        \item $\|\mEt\|_F^2 \lesssim t\eta\delta^2 M_1^2 (r_1+r_2)^{1.5}\log(1/\alpha) \lesssim \delta^2 M_1^2 (r_1+r_2)^{1.5}\log(1/\alpha)^2<1$;
        \item $ \|\mqt\|\le \cD\sqrt{r_2}\cLt \le\cD\sqrt{r_2}40 M_1 \delta \sqrt{r_1+r_2}=p^{-1.5}40M_1\deltastar<0.01$;
    \end{itemize}
    Hence the assumption in Lemma~\ref{lem:Stability-of-r} is satisfied, with
    \begin{equation}
        g\lesssim p^{-3}\deltastar^2 M_1^2\vee\delta M_1\sqrt{r_1+r_2}
    \end{equation}
    Therefore, 
    \begin{equation}
    \big|\left|\|\mr_{T_2}\|-1\right\|\big|\lesssim p^{-3}\deltastar^2 M_1^2\vee\delta M_1\sqrt{r_1+r_2}
    \end{equation}
\end{proof}
\begin{proof}[Proof of Theorem~\ref{theorem:main}]
    Using Theorem~\ref{theorem:phase1} and Theorem~\ref{theorem:phase2} with $T=T_2$, we have
    \begin{equation*}
        \begin{aligned}
            \|\mU_{T_2}\mU_{T_2}^\top-\atrue\|_F &\overset{(a)}{\le}
            \|\mE_{T_2}\mE_{T_2}^\top\|_F+\|\mr_{T_2}^\top\mr_{T_2}-\mIdentity\|_F+\|\mq_{T_2}^\top\mq_{T_2}\|_F\\
            &\quad+2\|\mE_{T_2}\mq_{T_2}\|_F+2\|\mEt\mrt\|_F+2\|\mr_{T_2}^\top\mq_{T_2}\|_F\\
            &\overset{(b)}{\le} \|\mE_{T_2}\|_F^2+O\left(\deltastar^2 M_1^2\vee \delta M_1\sqrt{r_1+r_2}\right)\sqrt{r_1}+\|\mq_{T_2}\|_F^2\\
            &\quad+2\|\mE_{T_2}\|\|\mq_{T_2}\|_F+2\|\mE_{T_2}\| \|\mrt\|_F + 2\|\mq_{T_2}\|_F \|\mrt\|\\
            &\overset{(c)}{\lesssim} \delta^2 M_1^2(r_1+r_2)^{1.5}\log^2(1/\alpha)+(\deltastar^2 M_1^2\vee \delta M_1\sqrt{r_1+r_2})\sqrt{r_1}+\deltastar^2 M_1^2\\
            &\quad+\left(\deltastar M_1+(1+o(1))\sqrt{r_1}\right)\delta M_1\sqrt{r_1+r_2}\log(1/\alpha)+\deltastar M_1(1+o(1))\\
            &\lesssim (\deltastar^2 M_1^2\sqrt{r_1}\vee \deltastar M_1)\log^2 d.
        \end{aligned}
    \end{equation*}
    where in $ (a)$ we decompose $ \mU\mUt^\top$ (See~\eqref{eq:uut-minus-expansion}) and triangle inequality. In $ (b)$ and $ (c)$ we use Theorem~\ref{theorem:phase2} and repeatedly use the fact $ \|\ma\mb\|_F\le\|\ma\|\|\mb\|_F$.  This completes the proof.
\end{proof}

\section{Deferred Proofs}
\subsection{Proof of Proposition~\ref{proposition:gaussian-low-angle}}\label{sec:proof-of-angle}
In the below contexts, notations such as $ C,c,C_1,c_1$ always denote some positive absolute constants. Such notation is widely adopted in the field of non-asymptotic theory.

We first state some useful definitions and lemmas:

\begin{definition}[$ \epsilon$-Net and Covering Numbers]
    Let $ (T,d)$ be a metric space. Let $ \epsilon>0$. For a subset $ K\subset T$, a subset $ \mathcal M\subseteq K$ is called an $ \epsilon$-net of $ K$ if every point in $ K$ is within distance $ \epsilon$ of some point in $ \mathcal M$. We define the covering number of $ K$ to be the smallest possible cardinality of such $ \mathcal M$, denoted as $ \mathcal N(K,\epsilon)$.
\end{definition}
\begin{lemma}[Covering Number of the Euclidean Ball] Let $ \mathcal S^{n-1}$ denote the unit Euclidean sphere in $ \R^n$. The following result satisfies for any $ \epsilon>0$:
    \begin{equation}
        \mathcal N(\mathcal S^{n-1},\epsilon)\le \left(\frac{2}{\epsilon}+1\right)^n.
    \end{equation}
\end{lemma}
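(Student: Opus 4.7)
The plan is to reduce the covering problem to a packing problem and then apply a standard volume comparison in $\mathbb{R}^n$. First I would choose a maximal $\epsilon$-separated subset $\mathcal{M}=\{x_1,\dots,x_N\}\subseteq \mathcal{S}^{n-1}$, i.e.\ a subset with $\|x_i-x_j\|>\epsilon$ for all $i\ne j$ that cannot be enlarged while preserving this property. By the maximality of such a packing, $\mathcal{M}$ is automatically an $\epsilon$-net of $\mathcal{S}^{n-1}$: if some $y\in\mathcal{S}^{n-1}$ had $\|y-x_i\|>\epsilon$ for every $i$, then $\mathcal{M}\cup\{y\}$ would still be $\epsilon$-separated, contradicting maximality. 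Hence $\mathcal{N}(\mathcal{S}^{n-1},\epsilon)\le N$, and it suffices to bound $N$.

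Next I would carry out the volume comparison. The open Euclidean balls $B(x_i,\epsilon/2)\subseteq \mathbb{R}^n$ are pairwise disjoint because the centers are at pairwise distance strictly greater than $\epsilon$. Since each center $x_i$ has $\|x_i\|=1$, the triangle inequality gives $B(x_i,\epsilon/2)\subseteq B(0,1+\epsilon/2)$. Let $c_n$ denote the Lebesgue volume of the unit ball in $\mathbb{R}^n$. Using disjointness and the containment,
\begin{equation}
N\cdot c_n\,(\epsilon/2)^n \;=\; \sum_{i=1}^N \mathrm{vol}\bigl(B(x_i,\epsilon/2)\bigr) \;\le\; \mathrm{vol}\bigl(B(0,1+\epsilon/2)\bigr) \;=\; c_n\,(1+\epsilon/2)^n.
\end{equation}
Dividing both sides by $c_n(\epsilon/2)^n$ yields $N\le (1+2/\epsilon)^n=(2/\epsilon+1)^n$, as claimed.

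There is no real obstacle here; this is the textbook packing-to-covering argument. The only mild care needed is noting that a maximal $\epsilon$-separated set in a metric subspace is automatically an $\epsilon$-net, and that applying the volume comparison in the ambient space $\mathbb{R}^n$ (rather than on the sphere) is legitimate because the balls $B(x_i,\epsilon/2)$ live in $\mathbb{R}^n$ and we only need disjointness and containment in $B(0,1+\epsilon/2)$, both of which follow directly from the separation and the fact that the $x_i$ lie on $\mathcal{S}^{n-1}$.
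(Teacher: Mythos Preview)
Your proof is correct and is the standard packing-plus-volume argument (as in, e.g., Vershynin's \emph{High-Dimensional Probability}). The paper does not supply its own proof of this lemma; it merely states it among a list of standard auxiliary results in the appendix, so there is nothing to compare against beyond noting that your argument is exactly the textbook one the authors are implicitly invoking.
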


\begin{lemma}[Two-sided Bound on Gaussian Matrices]\label{lemma:Gaussian-matrix-bound}
    Let $ \ma$ be an $ d\times r$ matrix whose elements $ \ma_{ij}$ are independent $ N(0,1)$ random variables. Then for any $ t\ge 0$ we have
    \begin{equation}
        \sqrt{d} - C(\sqrt{r}+t)\le
        \sigma_r(\ma)\le \sigma_1(\ma)\le 
        \sqrt{d} + C(\sqrt{r}+t)
    \end{equation}
    with probability at least $ 1-2\exp(-t^2)$.
\end{lemma}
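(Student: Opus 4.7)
The plan is to combine the $\epsilon$-net machinery just set up (Definition of $\epsilon$-net and the covering-number lemma) with standard Gaussian tail bounds for linear forms, which together produce the classical Davidson--Szarek two-sided estimate. I would prove the two inequalities separately, and union-bound them at the end to obtain the stated $1-2\exp(-t^2)$ probability.

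For the upper bound $\sigma_1(A) \le \sqrt{d}+C(\sqrt{r}+t)$, I begin from the variational identity
$$\sigma_1(A) \;=\; \sup_{u \in S^{d-1},\, v \in S^{r-1}} u^\top A v.$$
Fix $\epsilon = 1/4$ and let $\mathcal{N}_d \subseteq S^{d-1}$, $\mathcal{N}_r \subseteq S^{r-1}$ be $\epsilon$-nets of cardinalities at most $9^d$ and $9^r$, respectively, from the covering-number lemma. For any fixed $(u,v)$, the random variable $u^\top A v$ is standard Gaussian, giving $\mathbb{P}(u^\top A v \ge s) \le \exp(-s^2/2)$. Union-bounding over the product net of size $\le 9^{d+r}$ and choosing $s = C_0(\sqrt{d}+\sqrt{r}+t)$ with $C_0$ large enough to dominate the entropy contribution $(\log 9)(d+r)$ yields
$$\max_{(u,v)\in\mathcal{N}_d \times \mathcal{N}_r} u^\top A v \;\le\; C_0(\sqrt{d}+\sqrt{r}+t)$$
with probability at least $1-\exp(-t^2)$. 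The standard approximation inequality $\sigma_1(A) \le (1-2\epsilon)^{-1} \max_{\mathcal{N}_d \times \mathcal{N}_r} u^\top A v = 2\max_{\mathcal{N}_d \times \mathcal{N}_r} u^\top A v$ then transports the net estimate to the full sphere.

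For the lower bound $\sigma_r(A) \ge \sqrt{d}-C(\sqrt{r}+t)$, I use $\sigma_r(A) = \inf_{v \in S^{r-1}}\|Av\|_2$ and take an $\epsilon$-net $\mathcal{N}_r$ of $S^{r-1}$ as above. For each fixed $v \in \mathcal{N}_r$, the vector $Av$ is $N(0,I_d)$, the map $A \mapsto \|Av\|_2$ is $1$-Lipschitz in the Frobenius norm, and $\mathbb{E}\|Av\|_2 \ge \sqrt{d}-1$ by the usual chi-moment bound. Gaussian Lipschitz concentration then gives $\mathbb{P}\bigl(\|Av\|_2 \le \sqrt{d}-1-s\bigr) \le \exp(-s^2/2)$, and union-bounding over $\mathcal{N}_r$ with $s = C_0(\sqrt{r}+t)$ produces
$$\min_{v \in \mathcal{N}_r} \|Av\|_2 \;\ge\; \sqrt{d}-C_0(\sqrt{r}+t)$$
with probability at least $1-\exp(-t^2)$. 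To pass from the net to the sphere I would use $\sigma_r(A) \ge \min_{v \in \mathcal{N}_r}\|Av\|_2 - \epsilon\,\sigma_1(A)$, substituting the upper bound on $\sigma_1(A)$ already obtained.

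The main technical obstacle is precisely this last step: the net-approximation inequality couples $\sigma_r$ to $\sigma_1$, and the resulting error term $\epsilon\,\sigma_1(A) \lesssim \epsilon(\sqrt{d}+\sqrt{r}+t)$ contains an $\epsilon\sqrt{d}$ piece that must not swallow the leading $\sqrt{d}$ in the target lower bound. Choosing $\epsilon=1/4$ controls the inflation by an absolute factor which is absorbed into the universal constant $C$. A final union bound over the two high-probability events delivers the stated tail probability $1-2\exp(-t^2)$ and completes the argument.
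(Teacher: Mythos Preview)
The paper does not prove this lemma --- it is quoted as a standard result (Davidson--Szarek; cf.\ Vershynin, \emph{High-Dimensional Probability}, which the paper cites nearby). So the relevant question is whether your argument actually establishes the stated bound, and it does not: there is a genuine gap in the constants. Your upper-bound step nets \emph{both} spheres, producing a net of size $9^{d+r}$; beating this by the scalar tail $\exp(-s^2/2)$ forces $s \gtrsim \sqrt{d}+\sqrt{r}+t$, and after the $(1-2\epsilon)^{-1}=2$ lifting you get $\sigma_1(A) \le C'(\sqrt{d}+\sqrt{r}+t)$ with $C'>1$. This is \emph{not} of the form $\sqrt{d}+C(\sqrt{r}+t)$: the leading coefficient on $\sqrt{d}$ is strictly larger than $1$, and no choice of the free constant $C$ (which multiplies only $\sqrt{r}+t$) can repair that. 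The defect then propagates to your lower bound through the term $\epsilon\,\sigma_1(A)$ --- precisely the ``$\epsilon\sqrt{d}$ piece'' you yourself flag. Your claim that this piece ``is absorbed into the universal constant $C$'' is simply wrong, since $C$ sits only in front of $(\sqrt{r}+t)$.

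The repair is to avoid netting the large sphere $S^{d-1}$. The clean route is to apply Gaussian Lipschitz concentration directly to the $1$-Lipschitz (in $\|\cdot\|_F$) functionals $A\mapsto\sigma_1(A)$ and $A\mapsto\sigma_r(A)$, after bounding $\mathbb{E}\sigma_1(A)\le\sqrt{d}+\sqrt{r}$ and $\mathbb{E}\sigma_r(A)\ge\sqrt{d}-\sqrt{r}$ via Slepian--Gordon comparison; that is the Davidson--Szarek proof and it delivers coefficient exactly $1$ on $\sqrt{d}$. Alternatively, net only $S^{r-1}$ and control the \emph{centered} quantity $\sup_{v}\bigl|\|Av\|_2^2-d\bigr|$ as in Vershynin's Theorem~4.6.1, so that the net-lifting factor multiplies only the fluctuation and not the main term. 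Your lower-bound device (Lipschitz concentration of $\|Av\|$ at fixed $v$) is already in this spirit and is fine on its own; it is the two-sphere upper bound, and the coupling through $\epsilon\,\sigma_1(A)$, that break the argument.
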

\begin{lemma}[Approximating Operator Norm Using $ \epsilon$-nets]\label{lemma:net-approx-op-norm}
    Let $\ma$ be an $ m\times n$ matrix and $ \epsilon\in[0,1/2)$. For any $ \epsilon$-net $ \mathcal M_1$ for the sphere $ \mathcal S^{n-1}$ and any $ \epsilon$-net $ \mathcal M_2$ of the sphere $ \mathcal S^{m-1}$, we have
    \begin{equation}
        \sup_{\xb\in\mathcal M_1 ,\yb\in \mathcal M_2}\langle \ma\xb,\yb \rangle\le \|\ma\|\le 
        \frac{1}{1-2\epsilon}\sup_{\xb\in\mathcal M_1,\yb\in \mathcal M_2}\langle \ma\xb,\yb \rangle.
    \end{equation}
Moreover, if $m=n$, then we have
\begin{equation}
        \sup_{\xb\in\mathcal M_1}\langle \ma\xb,\xb \rangle\le \|\ma\|\le 
        \frac{1}{1-2\epsilon-\epsilon^2}\sup_{x\in\mathcal M_1,y\in \mathcal M_2}|\langle \ma\xb,\yb \rangle|.
    \end{equation}
\end{lemma}
\begin{lemma}[Concentration Inequality for Product of Gaussian Random Varables]\label{lemma:concentration-product-gaussian}
Suppose $ X$ and $Y$ are independent $N(0,1)$ random variables. Then $ \langle X, Y\rangle$ is a sub-exponential random variable. Therefore for $ (X_1,\ldots,X_m,Y_1,\ldots,Y_m)^\top\sim N(0,\mIdentity_{2m})$, the following holds for any $ t\ge 0$:
\begin{equation}
    \P\left(\frac{1}{m}\left|\sum_{i=1}^m \langle X_i, Y_i\rangle\right|>t\right)<2\exp\left(-c\min(t^2,t)\cdot m\right).
\end{equation}
\end{lemma}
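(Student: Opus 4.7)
The plan is to establish the claim in two steps: first show that the product $Z := XY$ of two independent standard normals is a mean-zero sub-exponential random variable with sub-exponential norm bounded by an absolute constant, and then invoke the standard Bernstein inequality for sums of i.i.d.\ centered sub-exponential random variables.

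For the first step, I would compute the moment generating function of $Z$ directly by conditioning on $X$:
\begin{equation}
\E[e^{\lambda Z}] = \E_X\bigl[\E_Y[e^{\lambda XY} \mid X]\bigr] = \E_X[e^{\lambda^2 X^2 / 2}] = (1-\lambda^2)^{-1/2}
\end{equation}
valid for $|\lambda| < 1$. The Taylor expansion $-\tfrac{1}{2}\log(1-\lambda^2) = \tfrac{1}{2}\sum_{k\ge 1} \lambda^{2k}/k \le \lambda^2$ on $|\lambda| \le 1/\sqrt{2}$ then yields the Bernstein-type MGF bound $\E[e^{\lambda Z}] \le e^{\lambda^2}$. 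Combined with $\E Z = 0$, this is precisely the MGF characterization of sub-exponentiality with absolute-constant parameters. As a sanity check, one could alternatively use the polarization identity $XY = \tfrac{1}{4}\bigl((X+Y)^2 - (X-Y)^2\bigr)$ and note that $(X+Y)/\sqrt 2$ and $(X-Y)/\sqrt 2$ are independent $N(0,1)$, so $XY$ is a scaled difference of two independent $\chi^2_1$ variables, which are classically sub-exponential.

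For the second step, since the $Z_i := X_i Y_i$ are i.i.d., centered, and sub-exponential with uniformly bounded Orlicz norm, applying Bernstein's inequality to $\tfrac{1}{m}\sum_{i=1}^{m} Z_i$ yields exactly
\begin{equation}
\P\left(\Bigl|\tfrac{1}{m}\textstyle\sum_{i=1}^{m} Z_i\Bigr| > t\right) \le 2\exp\bigl(-c\,m\,\min(t^2, t)\bigr),
\end{equation}
where $c$ is an absolute constant absorbing the sub-exponential norm obtained above.

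I do not foresee a real obstacle: this is a routine invocation of the Bernstein--Orlicz framework. The only mild care needed is tracking constants when translating the MGF bound $\E[e^{\lambda Z}] \le e^{\lambda^2}$ into the two-regime form $\min(t^2, t)$ via the standard Chernoff argument (the quadratic regime arises when $t \le 1$ so the optimal $\lambda$ stays inside the admissible interval $[0, 1/\sqrt 2]$; the linear regime arises when $t > 1$ and $\lambda$ saturates at the boundary). Both regimes are already baked into the standard statement of Bernstein's inequality, so no additional work is required beyond step one.
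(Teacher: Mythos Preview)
Your proposal is correct. Both steps are sound: the conditional MGF calculation $\E[e^{\lambda XY}]=(1-\lambda^2)^{-1/2}$ is valid for $|\lambda|<1$, the bound $-\tfrac12\log(1-\lambda^2)\le\lambda^2$ on $|\lambda|\le 1/\sqrt 2$ holds, and the reduction to Bernstein's inequality is exactly right.

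Interestingly, the paper's own proof is precisely your ``sanity check'': it writes $XY=\tfrac12\bigl((X+Y)/\sqrt2\bigr)^2-\tfrac12\bigl((X-Y)/\sqrt2\bigr)^2$ as a difference of two independent $\Gamma(\tfrac12,1)$ random variables, observes these are sub-exponential, and then invokes Bernstein (citing Vershynin, Theorem~2.8.2). Your primary route via the explicit MGF is slightly more quantitative---you get the exact MGF and an explicit Bernstein parameter without appealing to closure of the sub-exponential class under sums---while the paper's polarization argument is shorter but leans on the standard fact that Gamma variables are sub-exponential. Either route works; since you already included the polarization identity, your write-up effectively subsumes the paper's proof.
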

\begin{proof}
    Note that
    \begin{equation}
        \langle X, Y\rangle=\frac{1}{2}\left(\frac{1}{\sqrt{2}}X+\frac{1}{\sqrt{2}}Y\right)^2-\frac{1}{2}\left(\frac{1}{\sqrt{2}}X-\frac{1}{\sqrt{2}}Y\right)^2.
    \end{equation}
    The two terms are independent and following Gamma distribution $ \Gamma\left(\frac{1}{2},1\right)$. Since Gamma distribution random variables are sub-exponential, $ \langle X,Y\rangle$ is sub-exponential too. The concentration inequality follows from Bernstein's inequality. (See Theorem~2.8.2 of \citet{vershynin2018high}).
\end{proof}

Now we prove Proposition~\ref{proposition:gaussian-low-angle}:
\begin{proof}[Proof of Proposition~\ref{proposition:gaussian-low-angle}]

    First we provide a bound for $ \|\mm_1^\top\mm_2\|$. We fix $ \epsilon=1/4$, and we can find an $ \epsilon$-net $ \mathcal M_1$ of the sphere $ \mathcal S^{r_1-1}$ and $ \epsilon$-net $ \mathcal M_2$ of the sphere $ \mathcal S^{r_2-1}$ with
    \begin{equation}
        |\mathcal M_1|\le 9^{r_1},\quad |\mathcal M_2|\le 9^{r_2}.
    \end{equation}
    For each $ x\in \mathcal M_1$ and $ y\in\mathcal M_2$, we have for $0<u<1$, 
    \begin{equation}
        \begin{aligned}
            \P\left(\frac{1}{d}\xb^\top \mm_1^\top\mm_2 \yb>u\right)&=\P\left(\frac{1}{d}\langle \mm_1 \xb,\mm_2 \yb\rangle>u\right)\\
            &\le 2\exp(-cd u^2),
        \end{aligned}
    \end{equation}
    where we use the fact that $ \mm_1 \xb$ and $ \mm_2 \yb$ are independent $ N(0,\mIdentity_d)$ random vectors and an application of Lemma~\ref{lemma:concentration-product-gaussian}.
    We let $u=\sqrt{\frac{r_1+r_2}{d}}\cdot t$ for $ t<\sqrt{\frac{d}{r_1+r_2}}$, we have:
    \begin{equation}
        \begin{aligned}
            \P\left(\frac1d\|\mm_1^\top\mm_2\|\ge \sqrt{\frac{r_1+r_2}{d}}\cdot t\right)&\overset{(a)}\le \P\left(\frac1d\max_{\xb\in \mathcal M_1, \yb\in\mathcal M_2} \xb^\top \mm_1^\top\mm_2 \yb\ge \frac{1}{2}\sqrt{\frac{r_1+r_2}{d}}\cdot t\right)\\
            &\overset{(b)}\le 9^{r_1+r_2}\cdot2\exp\left(-c_2(r_1+r_2)t^2\right)\\
            &=2\exp\left(-(r_1+r_2)(c_2 t^2-\log(9))\right),
        \end{aligned}
    \end{equation}
    where in $ (a)$ we use Lemma~\ref{lemma:net-approx-op-norm}, in $(b)$ we apply a union bound over all $ \xb\in\mathcal M_1$ and $ \yb\in\mathcal M_2$.

    Next, we bound $\|\mr_1^{-1}\| $ and $ \|\mr_2^{-1}\|$. Recall the QR-decompositions of $ \mm_1$ and $\mm_2$:
    \begin{equation}
        \mm_1=\Utrue_1\mr_1 \text{\quad and\quad } \mm_2=\Utrue_2\mr_2,
    \end{equation}
    which implies $\mm_1^\top\mm_1=\mr_1^\top\mr_1$ and $\mm_2^\top\mm_2=\mr_2^\top\mr_2 $, and consequently $\|\mr_1^{-1}\|=\sigma_{r_1}(\mm_1)^{-1} $ and $\|\mr_2^{-1}\|=\sigma_{r_2}(\mm_2)^{-1} $. From Lemma~\ref{lemma:Gaussian-matrix-bound}, 
    \begin{equation}
        \P\left(\|\mr_1^{-1}\|\ge\frac{2}{\sqrt{d}}\right)=\P\left(\sigma_{r_1}(\mm_1)\le \frac{\sqrt{d}}{2}\right)< 2\exp(-c_1d).
    \end{equation}
    And similarly for $ \|\mr_2^{-1}\|$. Finally, for $ t<\sqrt{\frac{d}{r_1+r_2}}$,
    \begin{equation}
        \begin{aligned}
            &\P\left( \normFull{\Utrue_1^\top\Utrue_2}\ge 4t\sqrt{\frac{r_1+r_2}{d}}\right)\\
            &= \P\left(\normFull{\mr_1^{-\top}\mm_1^\top\mm_2\mr_1^{-1}}\ge 4t\sqrt{\frac{r_1+r_2}{d}}\right)\\
            &\le\P\left(\|\mr_1^{-1}\|\ge \frac{2}{\sqrt{d}}\right)+\P\left(\|\mr_2^{-1}\|\ge\frac{2}{\sqrt{d}}\right)+\P\left(\frac{1}{d}\|\mm_1^\top\mm_2\|\ge t\sqrt{\frac{r_1+r_2}{d}}\right)\\
            &\le 4\exp\left(-c_1d\right)+2\exp\left(-c_2(r_1+r_2)t^2\right).
        \end{aligned}
    \end{equation}
    This completes the proof.
\end{proof}
\subsection{The Failure of Pooled Stochastic Gradient Descent}\label{sec:pooledSGD}

From Theorems \ref{theorem:main} and \ref{theorem:SGDfail}, 
for the hard case in Theorem \ref{theorem:SGDfail}, we have a separation that Pooled Gradient Descent fails to select out the invariant signal, whereas the \texttt{HeteroSGD} can succeed. This isolates the implicit bias of online algorithms over heterogeneous data towards invariance and causality.

In this section, we give a rigorous proof for Theorem~\ref{theorem:SGDfail}. We first demonstrate the failure of \texttt{PooledGD}
\begin{theorem}[Negative Result for Pooled Gradient Descent]\label{theorem:GDfail}
Under the assumptions of Theorem~\ref{theorem:main}, for the certain case where $\Utrue\perp \vtrue$ and $ \E_{e\in D }\mSigma^{(e)}=\mIdentity_{r_2}$, if we perform GD over all samples from all environments and ends with $T = \Theta(\log d)$, then $\mU_t$ keeps approaching $\Utrue\Utrue^\top+\vtrue\vtrue^\top$,in the sense that
\begin{equation}
\label{eq:bias2}
    \normFull{\mU_T\mU_T^\top-\Utrue\Utrue^\top-\vtrue\vtrue^\top}_F\le \Tilde O(\delta^{*^2}M_1^2\sqrt{r_1}+\deltastar M_1)=o(1),
\end{equation}
during which for all $t=0,1,\ldots, T$:
\begin{equation}
\normFull{\mUt\mUtt-\atrue}_F\gtrsim\sqrt{r_1\wedge r_2}.
\end{equation}
\end{theorem}
\begin{proof}[Proof of Theorem~\ref{theorem:GDfail}]
Firstly, we emphasis that Theorem~\ref{theorem:main} also applies to the case where there is only one environment and no spurious signals, the $m$ samples are generated as: (We use underlined notations to distinguish this setting from others)
\begin{equation}
    \underline y_i=\langle\underline\mx_i,\underline\atrue\rangle, i=1,\ldots,m
\end{equation}
In such cases, there is no randomness and  $\underline\mU_T\underline\mU_T^\top$ deterministically learns $\underline\ma^\star$ and all the singular values of $\underline\mr_t$ grow at similar speeds.

Under the conditions in Theorem~\ref{theorem:GDfail}, we first construct a single-environment case. Let $\Utrue$ and $\vtrue$ be defined as in Theorem~\ref{theorem:GDfail}, we let the invariant signal $\underline\ma^\star=\Utrue\Utrue^\top+\vtrue\vtrue^\top$ and there is no spurious signal. Then the updating rule is:
\begin{equation}\label{eq:single-env-dynamic}
    \begin{aligned}
    \underline\mU_{t+1}
    &=\underline\mU_t-\eta \left[\frac{1}{m}\sum_{i=1}^m \langle \underline\mx_i, \underline\mU_t\underline\mU_t^\top-\underline\ma^\star\rangle \underline\mx_i\right]\underline\mU_t\\
    &=\underline\mU_t-\eta \left(\underline\mU_t\underline\mU_t^\top-\underline\ma^\star\right)\underline\mU_t-\eta\emappure{\underline\mU_t\underline\mU_t^\top-\underline\ma^\star}\underline\mU_t.
    \end{aligned}
\end{equation}
Using Theorem~\ref{theorem:main}, we can prove that $\underline\mU_t\underline\mU_t$ continuously approaches ${\underline\ma^\star}^\top=\mU^\star{\mU^\star}^\top+\vtrue\vtrue^\top$ in phase 1 \& 2, during which:
\begin{itemize}
    \item In phase 1, $\|\underline\mr_t\|<1/2$ therefore $\|\underline\mU_t\underline\mU_t^\top-\Utrue\Utrue^\top\|_F\gtrsim \sqrt{r_1}$.
    \item In phase 2, all the singular values of $\|\underline\mr_t\|$ get larger than $1/6$, from Weyl's inequality, we have that the top $r_2$ singular values of $\underline\mU_t\underline\mU_t^\top-\Utrue\Utrue^\top$ are all larger than $1/6$. Hence $\|\underline\mU_t\underline\mU_t^\top-\Utrue\Utrue^\top\|_F\gtrsim \sqrt{r_2}$.
\end{itemize}
Therefore, $\|\underline\mU_t\underline\mU_t^\top-\Utrue\Utrue^\top\|_F\gtrsim \sqrt{r_1\wedge r_2}$ for all $t=0,\ldots,T$.

Now we prove Theorem~\ref{theorem:GDfail}. The updating rule can be written as
\begin{equation*}
    \begin{aligned}
        &\mU_{t+1}=\mUt-\eta \E_{e\sim D }\left[\frac{1}{m}\sum_{i=1}^m \langle \mx_i^{(e)}, \mUt\mUtt-\atrue-\ma^{(e)}\rangle \mx_i^{(e)}\right]\mUt\\
        &=\mUt-\eta\left(\mUt\mUtt-\atrue-\E_{e\sim D }\ma^{(e)} \right)\mUt-\eta\E_{e\sim D }\left[\emape{\mUt\mUtt-\atrue-\ma^{(e)}}\right]\mUt\\
        &=\mUt-\eta\left(\mUt\mUtt-\left(\Utrue\Utrue^\top+\vtrue\vtrue^\top\right) \right)\mUt-\eta\E_{e\sim D }\left[\emape{\mUt\mUtt-\atrue-\ma^{(e)}}\right]\mUt.
    \end{aligned}
\end{equation*}
We compare this updating rule with~\eqref{eq:single-env-dynamic}. The only difference is the RIP error term. However, the upper bounds for $\emape{\mUt\mUtt-\atrue-\ma^{(e)}}$ used in the proof also apply for the expectation $\E_{e\sim D }\left[\emape{\mUt\mUtt-\atrue-\ma^{(e)}}\right]$. So we can derive the same conclusion that
\begin{equation}\label{eq:append_fail_pooled1}
\normFull{\mU_T\mU_T^\top-\atrue-\E_{e\sim  D }\ma^{(e)}}_F\le o(1)
\end{equation} 
for $ T=\Theta(\frac{1}{\theta}\log(1/\alpha))$, during which we have that for all $t=0,1,\ldots, T$:
\begin{equation}\label{eq:append_fail_pooled2}
\normFull{\mUt\mUtt-\atrue}_F\gtrsim\sqrt{r_1\wedge r_2}.
\end{equation}
\end{proof}

Now we are ready to prove Theorem~\ref{theorem:SGDfail}. Assume that at each time $t=0,\ldots, 1$, we receive $m$ samples $\{\mx_i^{(t)},y_i^{(t)}\}_{i=1}^m$, each sample is independently sampled from environment $e_{t,i}\sim D$, satisfying
\begin{equation}
    y_i^{(t)}=\langle \mx_i^{(t)},\ma^\star+\ma^{(e_{t,i})} \rangle,
\end{equation}
and imply the Stochastic Gradient Descent
\begin{equation}
    \mU_{t+1}=\left(\mIdentity_d-\eta\frac{1}{m}\sum_{i=1}^m(\langle\mx_i^{(t)},\mUt\mUtt\rangle-y_i^{(t)})\mx_i^{(t)}\right)\mUt.
\end{equation}
For technical convenience, we assume that $\mx$ is the symmetric Gaussian matrix with diagonal elements from $N(0,1)$ and off-diagonal elements from $N(0,1/2)$. We further assume $\mx_i^{(t)}$ is independent of $e_{t,i}$. This corresponds to the cases where each environment has infinitely many samples and the linear measurements from different environments share the same distribution. 
\begin{proof}[Proof of Theorem~\ref{theorem:SGDfail}]
    Denote $\bar\ma = \mathbb E_{e\in D}\ma^{(e)}$. Then we have
    \begin{equation*}
    \begin{split}
        \mU_{t+1}&=\left(\mIdentity_d-\eta\frac{1}{m}\sum_{i=1}^m(\langle\mx_i^{(t)},\mUt\mUtt-\atrue-\bar\ma\rangle)\mx_i^{(t)}\right)\mUt\\
        &\qquad\qquad+\eta\left(\frac{1}{m}\sum_{i=1}^m\langle \mx_i^{(t)},\ma^{(e_{t,i})}-\bar\ma\rangle \mx_i^{(t)}\right)\mUt.
    \end{split}
    \end{equation*}
    The first term is the dynamic of single environment matrix sensing problem, and the second term is a zero-mean noise arising from SGD. Once we can prove that the second term is small with high probability, then the dynamic will be similar to the dynamic of single environment matrix sensing problem, thereby we can get a high-probability version of the result of Theorem~\ref{theorem:SGDfail}.
    
    Now we control the SGD noise term. Let $\mathcal{M}_3$ be a $\frac{1}{4}$-net of the sphere $\mathcal{S}^{d-1}$ with $|\mathcal{M}_3|\le 9^d$. Then for any $d\times d$ matrix $\mm$, we have $\|\mm\|\le 4\max_{\xb\in\mathcal{M}_3}|\xb^\top\mm\xb|$. For any fixed $\xb\in\mathcal{M}_3$, one can see that $\langle \mx_i^{(t)}, \ma^{(e_{t,i})}-\bar\ma\rangle \xb^\top \mm \xb$ has zero mean and is the product of two sub-Gaussian random variable with sub-Gaussian parameter no more than $2M_1(r_1+r_2)$ and $2$. Therefore, it is a sub-exponential random variable with parameter no more than $C M_1(r_1+r_2)$ for some universal constant $C>1$. Then applying the Bernstein's Inequality~\citep{vershynin2018high} and taking the union bound over $\mathcal{M}_3$, we can obtain that 
    \begin{equation}
        \mathbb P\left(\sup_{\xb\in\mathcal{M}_3}\left|\langle \mx_i^{(t)}, \ma^{(e_{t,i})}-\bar\ma\rangle \xb^\top \mm \xb\right| >CM_1(r_1+r_2)(\sqrt{\frac{t}{m}}+\frac{t}{m})\right)<2\cdot 9^d\exp(-t).
    \end{equation}
    Setting $t=10d$ and $m=d\poly(r_1+r_2,M_1+M_2,\log d)$, we can obtain that with probability over $1-\exp(-d)$, 
    \begin{equation}
        \left\|\frac{1}{m}\sum_{i=1}^m\langle \mx_i^{(t)},\ma^{(e_{t,i})}-\bar\ma\rangle \mx_i^{(t)}\right\|\le \frac{1}{\poly(r_1+r_2,M_1+M_2,\log d)}.
    \end{equation}
    Therefore, in this case the SGD error can be upper bounded in the same way as the RIP error at the level of $o(1/\poly(r_1+r_2,M_1+M_2,\log d))$. This implies that the SGD error will not significantly affect the dynamic with probability over $1-T\exp(-d)$. Therefore \eqref{eq:append_fail_pooled1} and \eqref{eq:append_fail_pooled2} hold with probability over $0.99$.

\end{proof}

Theorem~\ref{theorem:SGDfail} and Theorem~\ref{theorem:GDfail} indicate that the failure is because the signal is averaged when calculating gradients when we perform GD or SGD over pooled datasets. To the best of our knowledge, it is intrinsically hard to provide a rigorous statement when the batch size is small. We would like to leave the theoretical analysis as a future work. In the following simulation, we aim to demonstrate empirically that Pooled SGD fails to learn invariance with a small batch size. We consider the $|\mathcal{E}|=2$ case and the environments are generated by $\ma^{(1)}=\Utrue\Utrue^\top+(s+M)\vtrue\vtrue^\top$ and $\ma^{(2)}=\Utrue\Utrue^\top+(s-M)\vtrue\vtrue^\top$ where $(\Utrue,\vtrue)$ is column orthonormal. Then the invariant solution is $\atrue=\Utrue\Utrue^\top$ and the spurious solution is $\atrue+\bar\ma = \Utrue\Utrue^\top+s\vtrue\vtrue^\top$.We set $(\alpha, d, r_1,r_2, s, M, m)= (10^{-3}, 30, 5, 5, 0.5, 4, 80)$, use Gaussian measurements as Section~\ref{sec:simulations} and let $T$ be sufficiently large. The following shows the F-norm between $\mUt\mUtt$ and $\atrue$ or $\atrue+\bar\ma$.

\begin{figure}[H]\label{fig:small_batchsize}
\begin{minipage}{0.5\linewidth}
    \centering
    \includegraphics[width=7cm]{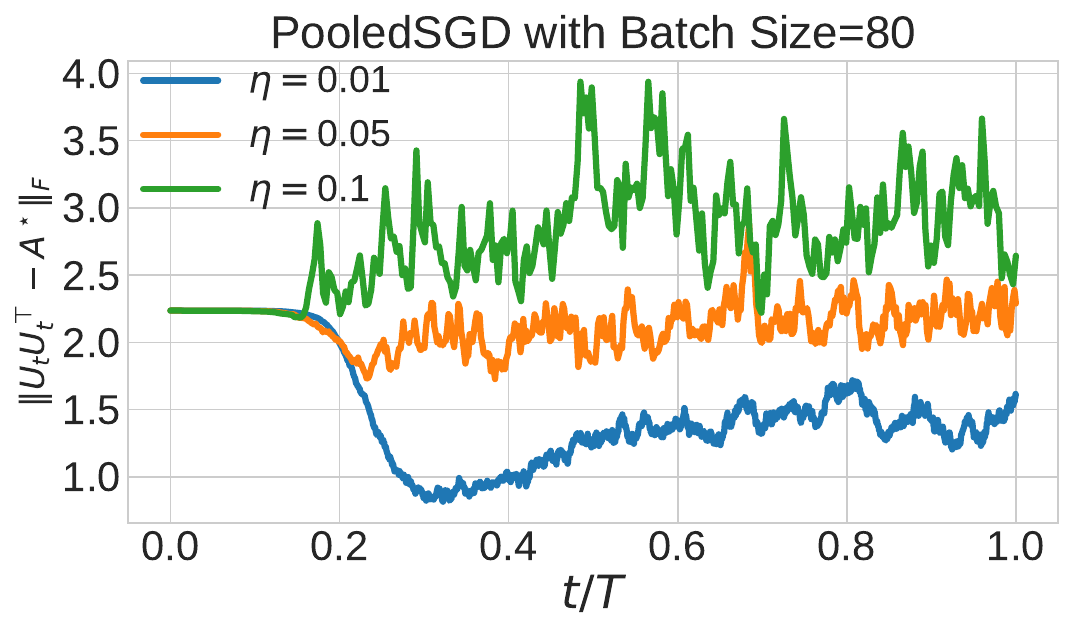}
\end{minipage}
\begin{minipage}{0.5\linewidth}
    \centering
    \includegraphics[width=7cm]{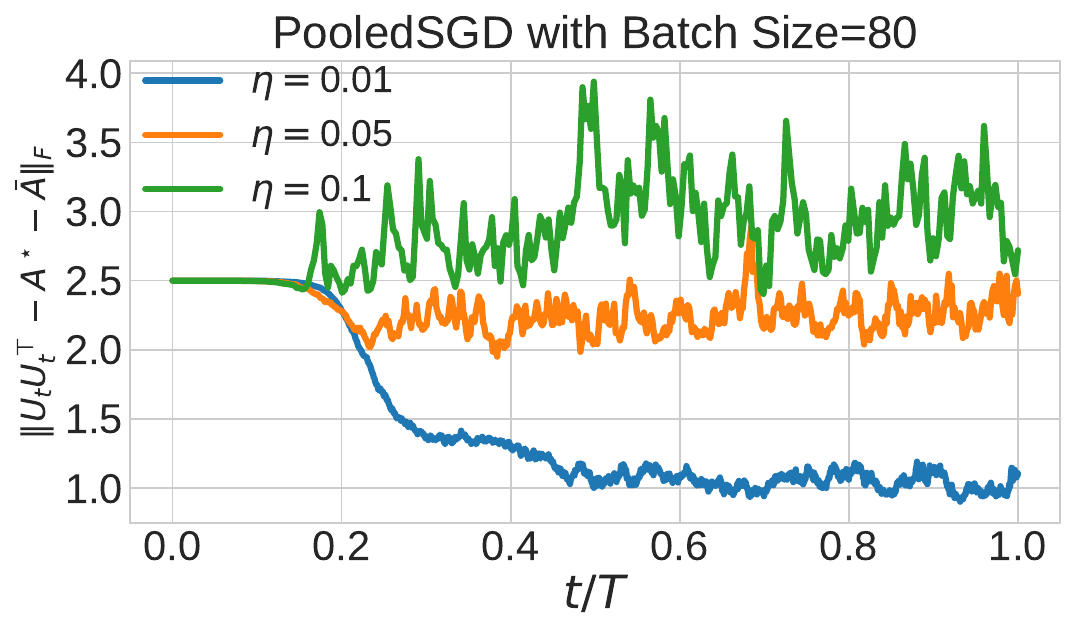}
    
\end{minipage}
\caption{These figures shows that when the batch size is small, the trajectory will be far away from $\atrue$ and $\atrue+\bar\ma$, suggesting that the algorithm is not stable in this regime.
}
\end{figure}

\section{Neural Networks with Quadratic Activations}\label{sec:NN}
In this section we discuss how to apply our results to neural networks with quadratic activations. In particular, Example~\ref{example:NN}. As discussed above,
\begin{equation}
    y_i^{(e)} =\sum_{j=1}^{r_1}  q(\ab_j^\top\xb_i^{(e)}) + \sum_{j=r_1+1}^{r} a_j^{(e)} q(\ab_j^\top\xb_i^{(e)})=\left\langle  \xb_i^{(e)}{\xb_i^{(e)}}^\top,\sum_{j=1}^{r_1} \ab_j\ab_j^\top + \sum_{j=r_1+1}^{r} a_j^{(e)}\ab_j\ab_j^\top \right\rangle,
\end{equation}
and it is equivalent to matrix sensing problem with
\begin{equation}
    \ma^\star= \sum_{j=1}^{r_1}  \ab_j\ab_j^\top~\text{,}~\ma^{(e)}=\sum_{j=r_1+1}^{r} a_j^{(e)}\ab_j\ab_j^\top~\text{and}~\mx_i^{(e)}=\xb_i^{(e)}{\xb_i^{(e)}}^\top.
\end{equation}
The main difference is that, when the samples $\xb_i$ are i.i.d. $N(0,\mIdentity_d)$, the set of linear measurements $\{\xb_1\xb_1^\top,\ldots,\xb_m\xb_m^\top\}$ no longer satisfies the RIP property. However, the following lemma tells that, with proper truncation, the set of measurements enjoys similar properties.
\begin{lemma}[Lemma 5.1 of~\citet{li2018algorithmic}]\label{lemma:modify-rip}
    Let $(\mx_1,\dots, \mx_m) = \{ \xb_1{\xb_1}^\top, \dots,  \xb_m \xb_m^\top\}$ where $\xb_i$'s are  i.i.d. $\sim \mathcal{N}(0, \mIdentity)$. Let  $R = \log\left(\frac{1}{ \delta}\right)$. Then, for every $q, \delta \in [ 0, 0.01]$ and $m \gtrsim d   \log^4 \frac{d}{q \delta }/\delta^2$, with probability at least $1 - q$, we have that for every symmetric matrix $\ma$:
\begin{equation}
\normFull{\frac{1}{m} \sum_{i = 1}^m \langle \mx_i, \ma \rangle \mx_i 1_{|\langle \mx_i, \ma \rangle| \le R} - 2 \ma - \tr(\ma) \mIdentity} \le \delta \| \ma \|_{\star}.
\end{equation}
If $\ma$ has rank at most $r$ and operator norm at most $1$, we have:
\begin{equation}
\normFull{\frac{1}{m} \sum_{i = 1}^m \langle \mx_i, \ma \rangle \mx_i 1_{|\langle \mx_i, \ma \rangle| \le R} - 2 \ma - \tr(\ma) \mIdentity} \le r\delta.
\end{equation}
\end{lemma}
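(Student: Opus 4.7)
The plan is first to identify $2\mx + \tr(\mx)\mIdentity$ as the population mean of a single \emph{untruncated} term. Using Isserlis' theorem on the fourth Gaussian moment, one computes, for $\xb\sim N(0,\mIdentity_d)$ and any symmetric $\mx$,
\[
\E\left[\langle \xb\xb^\top,\mx\rangle\, \xb\xb^\top\right] \;=\; 2\mx + \tr(\mx)\,\mIdentity.
\]
So it suffices to bound (i) the bias introduced by the truncation $\mathbf{1}_{|\langle\ma_i,\mx\rangle|\le R}$ and (ii) the fluctuation of the truncated empirical sum around its expectation.

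\textbf{Truncation bias.} The scalar $\langle \ma_i,\mx\rangle=\xb_i^\top\mx\,\xb_i$ is a second-order Gaussian chaos, and Hanson--Wright yields the sub-exponential tail $\P(|\xb_i^\top\mx\,\xb_i-\tr(\mx)|>t)\le 2\exp(-c\min(t^2/\|\mx\|_F^2,\,t/\|\mx\|))$. Since $R=\log(1/\delta)$ and $\|\mx\|_F,\|\mx\|\le\|\mx\|_\ast$, the event $\{|\langle\ma_i,\mx\rangle|>R\|\mx\|_\ast\}$ has probability $O(\delta)$. Combining Cauchy--Schwarz with the $L^4$ moment bound on the chaos and on $\|\xb\|^4$ shows that $\E[\langle\ma_i,\mx\rangle\ma_i\mathbf{1}_{|\langle\ma_i,\mx\rangle|>R\|\mx\|_\ast}]$ has operator norm at most $\delta\|\mx\|_\ast$.

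\textbf{Matrix Bernstein for fixed $\mx$.} Fix symmetric $\mx$ with $\|\mx\|_\ast\le 1$ and set $Z_i=\langle\ma_i,\mx\rangle\ma_i\mathbf{1}_{|\langle\ma_i,\mx\rangle|\le R}-\E[\cdots]$. Each $Z_i$ has rank at most two and, on the high-probability event $\max_i\|\xb_i\|^2\lesssim d\log(m/q)$, satisfies $\|Z_i\|\lesssim Rd\log(m/q)$. An Isserlis-type computation gives the matrix variance bound $\|\sum_i\E Z_i^2\|\lesssim md$. The matrix Bernstein inequality then yields
\[
\normFull{\frac{1}{m}\sum_i Z_i}\;\lesssim\; \sqrt{\frac{d\log d}{m}}+\frac{Rd\log d}{m}\;\lesssim\;\delta,
\]
as soon as $m\gtrsim d\log^4(d/(q\delta))/\delta^2$.

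\textbf{Uniformization and the rank-$r$ version.} Because the truncation depends on $\mx$, the map $\mx\mapsto\frac{1}{m}\sum_i\langle\ma_i,\mx\rangle\ma_i\mathbf{1}_{\{\cdot\}}$ is \emph{not} linear, so one cannot simply extend from rank-one matrices by linearity. The plan is to build an $\epsilon$-net $\mathcal N$ of the unit-Frobenius sphere of symmetric rank-one matrices (cardinality $\le(3/\epsilon)^d$), apply the fixed-$\mx$ bound at every net point under a union bound, decompose a general $\mx$ with $\|\mx\|_\ast\le 1$ as $\sum_k\lambda_k\ub_k\ub_k^\top-\sum_\ell\mu_\ell\vbb_\ell\vbb_\ell^\top$ with $\sum|\lambda_k|+\sum|\mu_\ell|\le\|\mx\|_\ast$, and then handle the truncation's nonlinearity by a continuity argument: if $\|\mx-\mx'\|_F\le\epsilon$, anti-concentration of $\xb_i^\top\mx\,\xb_i$ near the level $\pm R$ shows that only an $O(\epsilon/\delta)$ fraction of truncation events flip with high probability, and the resulting extra error is swallowed by choosing $\epsilon=\delta/\poly(d)$. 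This contributes an additional $\log(1/\epsilon)$ factor, producing the $\log^4$ in the sample complexity. For the second (low-rank) assertion, one restricts the net to the rank-$r$ manifold, whose covering number is $(3/\epsilon)^{O(dr)}$, and replaces the nuclear-norm decomposition by one of length $r$ with unit-magnitude coefficients, yielding the $r\delta$ bound. The main obstacle is calibrating the three ingredients, truncation radius $R$, net resolution $\epsilon$, and Bernstein variance, so that all three errors come out simultaneously $\lesssim\delta\|\mx\|_\ast$ under the stated scaling of $m$.
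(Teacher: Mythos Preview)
The paper does not prove this lemma at all: it is quoted verbatim as Lemma~5.1 of \citet{li2018algorithmic} and invoked as a black box in the reduction of the quadratic-activation neural network case to the matrix sensing setting. There is therefore no in-paper proof to compare your proposal against.

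For what it is worth, your outline follows the route one would expect for such a result (population identity via Isserlis, truncation-bias control via Hanson--Wright, matrix Bernstein at a fixed $\mx$, then an $\epsilon$-net over rank-one or rank-$r$ matrices to uniformize). One small slip: $Z_i$ is not ``rank at most two,'' since the centering term $\E[\langle\ma_i,\mx\rangle\ma_i\mathbf{1}_{\{\cdot\}}]$ is generically full rank; this is harmless, as matrix Bernstein does not require low rank of the summands. The genuinely delicate step is the one you already flag---the truncation indicator depends on $\mx$, so extending from the net to all $\mx$ cannot rely on linearity and needs a continuity/anti-concentration argument of the kind you sketch; that is indeed where most of the work in the original reference sits, and the $\log^4$ factor in the sample complexity is the trace of calibrating these ingredients.
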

To accommodate this difference, we adopt the modified version of loss function and algorithm from \citet{li2018algorithmic}.

\begin{algorithm}[H]\caption{Modified Algorithm For Neural Network with Quadratic Activations}\label{alg:modified}
\begin{algorithmic}
    \STATE Set $\mU_0=\alpha\mIdentity_d$, where $\alpha$ is a small positive constant.
    \STATE Set step size $\eta$.
    \FOR{$t=1,\ldots,T-1$}
    \STATE Receive $m$ samples $ (\xb_i^{(e_t)},y_i^{(e_t)})$ from current environment $e_t$.
    \STATE Calculate $\hat y_i^{(e_t)}= \mathbf{1}^\top q(\mU_t\xb_i^{(e_t)})$, $i=1,2,\ldots,m$.
    \STATE Calculate modified loss function $ \tilde{\mathcal L}_t(\mU_t) = \frac{1}{m}\sum_{i=1}^{m} \left(\hat{y}_i^{(e_t)}-y_i^{(e_t)}\right)^2\mathbf{1}_{\|\mU^\top \xb_i^{(e_t)}\|^2\le R}$
    \STATE Gradient Descent $ \tilde{\mU}_t = \mU_t - \eta \nabla \tilde{f}_\mathcal L(\mU_t)$.
    \STATE Let $\tau_t=\|\atrue+\ma^{(e_t)}\|$.
    \STATE Shrinkage $\mU_{t+1} = \frac{1}{1 - \eta (\| \mU_t\|_{F}^2- \tau_t)} \tilde{\mU}_t $
    \ENDFOR
    \STATE\textbf{Output:} $ \mU_T$.
\end{algorithmic}
\end{algorithm}
\begin{remark}
    Here we encounter the same caveat that Algorithm~\ref{alg:modified} requires our knowledge on $\tau_t$. As discussed in \citet{li2018algorithmic}, the algorithm is likely to be robust if $\tau_t$ is replaced by its moment estimation.
\end{remark}
Now we outline the proof sketch of Example~\ref{example:NN}

\begin{theorem}[Two-Layer NN with Quadratic Activation]\label{thm:NN}
Let 
$\ab_1, \cdots, \ab_r\in\R^d$ be independent random vectors sampled from normal distribution $N(0,\frac{1}{d}\mIdentity_d)$.
For environment $e\in \envset$, suppose the target function is determined by $r_1$ invariant features and $r_2$ variant admits that for each sample $(\xb_i^{(e)},y_i^{(e)})$:
\begin{equation}
\!\!y_i^{(e)} =\sum_{j=1}^{r_1}  q(\ab_j^\top\xb_i^{(e)}) + \!\!  \sum_{j=r_1+1}^{r} a_j^{(e)} q(\ab_j^\top\xb_i^{(e)})=\left\langle  \xb_i^{(e)}{\xb_i^{(e)}}^\top\!\!,\sum_{j=1}^{r_1} \ab_j\ab_j^\top + \!\sum_{j=r_1+1}^{r} a_j^{(e)}\ab_j\ab_j^\top \!\right\rangle.
\end{equation}
Suppose we train the following two-layer NN:
\begin{equation}
    f(\xb)=\sum_{j=1}^d q(\ub_j\xb),
\end{equation}
and the initialization of parameters $\{\ub_j\}$ satisfies $\sum_{j=1}^d \ub_j\ub_j^\top=\alpha\mIdentity$.
 If $\{a_j^{(e)}\}_{j,e}$ satisfies $ \frac{\sup_{e,j}\{|a_j^{(e)}|\}\cdot\max_j\{1+|\E_e a_j^{(e)}|\}}{\min_{j}\{\var_e[a_j^{(e)}]\}}<c_0$ for some absolute constant $c_0$,  sample complexity $m\gg d\poly(r,\log(d),\sup_{e,j}\{|a_j^{(e)}|\})$, $\alpha\in(d^{-4},d^{-1})$ and $\eta\sim\frac{\max_j\{1+|\E_e a_j^{(e)}|\}}{\min_{j}\{\var_e[a_j^{(e)}]\}}$, then Algorithm~\ref{alg:modified} returns solution that satisfies
\begin{equation}
    \|\sum_{j=1}^d \ub_j\ub_j^\top-\atrue\|_F<o(1)
\end{equation} with probability over 0.99.
\end{theorem}

\begin{proof}
    similar to the proof of Theorem 1.2 of~\citet{li2018algorithmic}, the modified algorithm is in fact equivalent to~\eqref{eq:update} with RIP parameter $(r,\delta)$ when $m=\tilde\Omega(dr^2\delta^{-2})$.
    Hence it is fully reduced to the matrix sensing problem.

    Now we verify the conditions for  $\atrue=\sum_{j=1}^{r_1}  \ab_j\ab_j^\top$ and $\ma^{(e)}=\sum_{j=r_1+1}^{r} a_j^{(e)}\ab_j\ab_j^\top$. Since $\ub_i,i=1,\ldots,r$ are independently and uniformly sampled from sphere, we have that
    \begin{itemize}
        \item With high probability over the randomness of $\{\ab_i\}_i$, the eigenvalues of $\atrue$ lie within $ \left(1-O(\sqrt{r_1}/\sqrt{d}),1+O(\sqrt{r_1}/\sqrt{d})\right)$ (See Theorem 4.6.1 of ~\citet{vershynin2018high}).
        \item The angle between $\operatorname{Col}(\atrue)$ and $\operatorname{Col}(\ma^{(e)})$ is of $O(\sqrt{r_1+r_2}/\sqrt{d})$ order. 
    \end{itemize}
    Therefore we can construct two column orthogonal matrix $\Utrue$ and $\vtrue$ such that $\Utrue^\top\vtrue=0$ and $\sin(\col(\Utrue),\col(\atrue)),\sin(\col(\vtrue),\col(\ma^{(e)}))\lesssim \sqrt{r_1+r_2}/\sqrt{d}$. Hence we can apply Theorem~\ref{theorem:main} on
    $\tilde\ma^\star:=\Utrue\Utrue^\top~\text{and}~\tilde\ma^{(e)}:=\vtrue\diag(a_i^{(e)})\vtrue^\top$. Such approximation only raises $O(\sqrt{r_1+r_2}/\sqrt{d})$ multiplicative error, which is negligible. And we can easily verify $ \tilde\ma^{(e)}$ satisfies Assumption~\ref{cond:regularity-on-sigma-e}. Then this result follows from the proof of Theorem~\ref{eq:rate-main}.
\end{proof}
\section{The $\kappa(\atrue)>1$ Case}\label{sec:kappa>1}
In this section we show how to generalize our results to the $ \kappa(\atrue)>1$ case by leveraging the adaptive subspace technique proposed by~\citet{li2018algorithmic} for single environment setting. This framework mainly consists of the following steps:

First, instead of using the fixed subspace $\col(\Utrue)$, we use an adaptive one $ S_t$, where $S_0=\col(\Utrue)$ and $S_{t+1}=(\mIdentity-\eta\mmt)S_{t}$ where $\mmt=\frac1m\sum_{i=1}^m\langle \mx_i,\mUt\mUtt-\atrue\rangle\mx_i$. And we denote $\mz_t=\idst\mUt$ and $\mht=(\idmap-\idst)\mUt$. Which makes the updating of $\mht$ substantially disentangled from $\mzt$.

Second, we reason about the updating rule of $\mz_t$. Since the subspace is updated at each step, the updating rule of $\mz_t$ becomes indirect. We introduce $\tilde \mz_t=(\idmap-\eta\mht\mzt ^\top)\mzt(\idmap-2\eta\mzt^+\idst\mmt\mht)$ so that $\mz_{t+1}\approx\tilde \mzt-\eta\nabla \mathcal L(\tilde \mzt)$. It can be shown $\sigma_{\min}(\mzt)$ continually increases until it gets larger than $\frac{1}{2\sqrt{\kappa}}$.

During this iteration, we can keep $\tilde\mz_t$ is near $\mzt$ for each $t$ and the principal angle $\theta_t$ between $S_t$ and $\col(\Utrue)$ satisfies $\sin(\theta_t)\lesssim \eta\rho t$ where $\rho=\tilde\Theta(\frac{\delta\sqrt r}{\kappa})$.

Finally, when $\sigma_{\min}(\mzt)$ is sufficiently large and principal angle is small, we can use the local restricted strongly convex property of $\mathcal L$ around $\atrue$ to prove $\|\mUt\mUt\|_F^2$ converges with rate $1-\Theta(\eta/\kappa)$.

For the multi-environment setting, we have the following result under a slightly stronger assumption on the heterogeneity:
\begin{theorem}[General Theorem]\label{theorem:kappa>1}
     Under Assumption~\ref{cond:invariant-and-spurious-space} and \ref{cond:regularity-on-sigma-e}, suppose the heterogeneity parameter $ M_2\gtrsim r^2$,  $\epsilon_1<\delta$. and the RIP parameter $\delta\lesssim\frac{1}{\poly(r,\log(d),M_1+M_2,\kappa)}$. We choose the $ \eta \in (24M_2^{-1}, \frac{1}{64}M_1^{-1}\wedge \frac{1}{r^2})$ and $ \alpha\in (1/d^4,1/d^3)$, then running Algorithm~\ref{Alg: HeteroSGD} in $T=\Theta(\log(\alpha^{-1})/\eta)$ steps, the algorithm outputs $\mU_T$ that satisfies
    \begin{equation}
        \|\mU_{T}\mU_{T}^\top-\atrue\|_F \le o(1)
    \end{equation} with probability over $0.99$. 
\end{theorem}

Since the full proof of the adaptive subspace technique is involved, for clear representation, we point out the main differences from the single-environment case. We need to address the following three issues: (1) How to introduce the spurious component $\mqt$ into the original framework; (2) Whether the spurious signal $\ma^{(e)}$ significantly perturbs the dynamic of $\mzt$; and (3) How to give a phase 2 analysis when there is no local restricted strongly convexity around $\atrue$?

We first cope with (1). With abuse of notation, we adopt the $\mmt,\mzt,\mht$ and additionally define $ \mv_t^{(ada)}=\idv\mht$ and $\mE_t^{(ada)}=(\idmap-\idv)\mht$. We can prove that
\begin{equation}\label{eq:dynamic:kappa>1}
    \begin{aligned}\mv_{t+1}^{(ada)}&=\left(\idv+\eta\ma^{(e_t)}+O(\eta\delta\sqrt{r}M_1+(1+\eta M_1)\sin(\theta_t))\right)\left(\mv_t^{(ada)}+\mE_t^{(ada)}\right)\\
&\approx\left(\idv+\eta\ma^{(e_t)}\right)\mv_t^{(ada)}+\text{small terms},\\
        \mE_{t+1}^{(ada)}&=\left(\idres+O(\eta\delta\sqrt{r}M_1+(1+\eta M_1)\sin(\theta_t))\right)\left(\mv_t^{(ada)}+\mE_t^{(ada)}\right).\\
        &\approx \mE_t^{(ada)}+\text{small terms}.
    \end{aligned}
\end{equation}
If we can ensure $\sin(\theta_t)\lesssim \delta\poly(r,M_1+M_2,\log(d))$, we can get similar dynamics as~\eqref{eq:dynamic-of-q} and~\ref{eq:dynamic-of-e}, then apply similar techniques in Section~\ref{sec:bound-q} and Section~\ref{sec:bound-e} to ensure $\mvt$ and $\mEt$ are no more than $\delta\poly(r,M_1+M_2,\log(d))$. w.h.p. Moreover, the dynamics in~\eqref{eq:dynamic:kappa>1} is multiplicative, which means if we decrease $\alpha$ by comparing to Theorem~\ref{theorem:main}, $\mvt$ and $\mEt$ can be further upper bounded by $d^{-1}\delta\poly(r,M_1+M_2,\log(d))$ in phase 1.

For issue (2), the spurious signal $\ma^{(e)}$ brings error about $1+O((\delta+\epsilon_1)\sqrt{r}M_1+)$ multiplicative factor, which can be absorbed by the inherent RIP error of $ \atrue $. Another difference is that, at the beginning $\|\mvt\|$ or $\|\mEt\|$ may be substantially larger than $\mzt$ due to the oscillation. We emphasis that such interference happens in RIP error term or non-orthogonal error term, multiplied by $\delta,\sin(\theta_t)$ or $\epsilon_1$. We can ensure such interference is negligible when $\delta\lesssim\frac{1}{\poly(r,\log(d),M_1+M_2,\kappa)}$. Therefore, the dynamic of $\mzt$ is benign, and the principal angle can be bounded by $\delta\poly(r,\log(d),M_1+M_2,\kappa)\ll 1$.

Finally for issue (3), when $\sigma_{\min}(\mzt)\ge\frac{1}{2\sqrt{\kappa}}$ and $\sin(\theta_t)=\delta\poly(r,\log(d),M_1+M_2,\kappa)\ll 1$. We get back to the original subspace $\col(\Utrue)$ and $\col(\vtrue)$. We have
$\mUt=\mzt+O(\sin(\theta_t))$, $\mvt=\mv_t^{(ada)}+O(\sin(\theta_t))$, $\mEt=\mE_t^{(ada)}+O(\sin(\theta_t))$ and $\|\mEt-\mE_t^{(ada)}\|_F\lesssim \sqrt{r}\sin(\theta_t)$. Then we can use the technique from phase 2 analysis~(Theorem~\ref{theorem:phase2}) to complete the proof. We leave the extension of this theorem for the case where $M_1,M_2$ are constant level for future studies.

\end{document}